\title{Dynamics of Local Elasticity During Training of Neural Nets}
\definecolor{clemson-orange}{RGB}{234,106,32}
\definecolor{chicago-maroon}{RGB}{128,0,0}
\definecolor{cincinnati-red}{RGB}{190,0,0}
\definecolor{soft-cyan}{RGB}{68,85,90}
\definecolor{firebrick}{RGB}{178,34,34}
\definecolor{crimson}{RGB}{220,20,60}
\definecolor{cerrulean}{rgb}{0.165,0.322,0.745}
\definecolor{jaam}{rgb}{0.45,0.0,0.45}
\declaretheoremstyle[
    headfont=\bfseries, 
    bodyfont=\normalfont\itshape, spaceabove=10pt,
    spacebelow=10pt]{mystyle}
\newtheorem{theorem}{Theorem}[section]
\newtheorem{lemma}[theorem]{Lemma}
\newtheorem{definition}{Definition}
\newtheorem{hypothesis}[theorem]{Hypothesis}
\newif\ifsolutions \solutionstrue
\def\final{0}
\newcommand{\reviewer}[3]{
  \expandafter\newcommand\csname #1\endcsname[1]{
    \ifthenelse{\equal{\final}{1}} {
      \textcolor{#3}{}
    } {
      \textcolor{#3}{\begin{center} \textbf{#2} ##1 \end{center}}
    }
  }
}
\newcolumntype{M}[1]{>{\centering\arraybackslash}m{#1}}
\DeclareMathOperator*{\diag}{diag}
\renewcommand{\ip}[2]{\left\langle#1,#2\right\rangle}
\newcommand{\net}{f_{\mathsf{NN}}}
\newcommand{\ind}[1]{\mathds{1}_{\left\{#1\right\}}}
\newcommand{\relu}{\mathop{\mathrm{ReLU}}}
\newcommand{\distas}{\mathbin{\sim}}
\newcommand{\bbeta}{\boldsymbol{\beta}}
\newcommand{\funcf}{\boldsymbol{f}}
\newcommand{\funcP}{\boldsymbol{P}}
\newcommand{\srel}{S_{\rm rel}}
\def\ve#1{\mathchoice{\mbox{\boldmath$\displaystyle\bf#1$}}
{\mbox{\boldmath$\textstyle\bf#1$}}
{\mbox{\boldmath$\scriptstyle\bf#1$}}
{\mbox{\boldmath$\scriptscriptstyle\bf#1$}}}
\newcommand{\x}{{\ve x}}
\newcommand{\y}{{\ve y}}
\newcommand{\z}{{\ve z}}
\newcommand{\g}{{\ve g}}
\newcommand{\e}{{\ve e}}
\renewcommand{\u}{{\ve u}}
\newcommand{\p}{{\ve p}}
\newcommand{\q}{{\ve q}}
\newcommand{\w}{{\ve w}}
\newcommand{\h}{{\ve h}}
\newcommand{\I}{\textrm{I}}
\newcommand{\W}{\textrm{W}}
\newcommand{\C}{\textrm{C}}
\newcommand{\E}{\mathbb{E}}
\newcommand{\bb}{\mathbb}
\newcommand{\R}{\bb R}
\newcommand{\Z}{{\bb Z}}
\newcommand{\M}{\mathbf{M}}
\renewcommand{\C}{\mathcal C}
\newcommand{\T}{\mathbf{T}}
\newcommand{\KL}{\mathbf{KL}}
\newcommand{\cN}{{\bf \mathcal{N}}}
\newcommand{\cL}{{\bf \mathcal{L}}}
\def\@fnsymbol#1{\ensuremath{\ifcase#1\or \dagger\or \ddagger\or
   \mathsection\or \mathparagraph\or \|\or **\or \dagger\dagger
   \or \ddagger\ddagger \else\@ctrerr\fi}}
\author{Soham Dan$^*$ \thanks{Majority of the work done while at the Department of Computer Science ~\& Information Science, UPenn.}\\
IBM Research, United States\\
\texttt{\small soham.dan@ibm.com}
\and
Anirbit Mukherjee$^*$ \thanks{Majority of the work done while at Wharton, the Department of Statistics ~\& Data Science,  UPenn.}\\
Department of Computer Science\\
The University of Manchester\\
\texttt{\small anirbit.mukherjee@manchester.ac.uk}
\and
Avirup Das\\
Department of Computer Science\\
The University of Manchester\\
\texttt{\small avirup.das@postgrad.manchester.ac.uk}
\and
Phanideep Gampa\\
Amazon, United States\\
\texttt{\small gampa.phanideep.mat15@iitbhu.ac.in}
}
\begin{document}

\maketitle
\def\thefootnote{*}\footnotetext{These authors contributed equally to this work.}
\def\thefootnote{\arabic{footnote}}

\begin{abstract}
In the recent past a property of neural training trajectories in weight-space had been isolated, that of ``local elasticity" (denoted as $\srel$). Local elasticity attempts to quantify the propagation of the influence of a sampled data point on the prediction at another data. In this work, we embark on a comprehensive study of the existing notion of $\srel$ as well as propose a new definition that addresses the limitations that we point out for the original definition in the classification setting. On various state-of-the-art neural network training on SVHN, CIFAR-10 and CIFAR-100 we demonstrate how our new proposal of $\srel$, as opposed to the original definition, much more sharply detects the property of the weight updates preferring to make prediction changes within the same class as the sampled data. 

In neural regression experiments we demonstrate that the original $\srel$ reveals a $2-$phase behavior -- that the training proceeds via an initial \textit{elastic} phase when $\srel$ changes rapidly and an eventual \textit{inelastic} phase when $\srel$ remains large. We show that some of these properties can be analytically reproduced in various examples of doing regression via gradient flows on model predictor classes. 

\end{abstract}


\section{Introduction}
\label{sec:introduction}
In recent times, there has been a surge of interest in using neural networks for complex artificial intelligence tasks. Human world champions of classic hard board games have famously been defeated by neural network based approaches, \cite{ silver2017mastering,silver2018general,schrittwieser2020mastering}. The techniques developed for such demonstrations have had deep impacts on various traditional scientific pursuits, for example on planning chemical syntheses \cite{segler2018planning}, drug discovery \cite{chen2018rise} and steering a quantum system towards desired dynamics \cite{dalgaard2020global}. Almost all of these successes depend on the powerful generalization abilities of a deep neural networks, despite its enormous capacity, and we are led to believe that this mystery is intimately tied to the dynamics of training the network. 

\cite{poggio2017theory,zhang2017theory} surveys the myriad of challenging mathematical questions that have risen from the attempts to rigorously explain the power of neural networks. A prominent recent approach towards explaining the success of deep-learning has been to try to get provable linear-time training of various kinds of neural networks when their width is a high degree polynomial in the training set size, inverse accuracy and/or inverse confidence parameters (which is a somewhat {\it unrealistic} regime) \cite{lee2017deep,wu2019global,du2018gradient,su2019learning,huang2019dynamics,allen2019convergenceDNN,allen2019learning,allen2019convergenceRNN,zou2018stochastic,zou2019improved,arora2019exact,arora2019fine}. The essential proximity of this regime to kernel methods has been elucidated separately in works like  \cite{allen2019can,wei2019regularization}. Some important progress has also happened with regards to provable training of certain special finite-sized neural networks, \cite{goel2018learning,frei2020agnostic,goel2019time,diakonikolas2020approximation,mukherjee2020study,KARMAKAR2022264,KARMAKAR202356}. And most recently in \cite{gopalani2022global}, some of the current authors have shown a first-of-its-kind convergence of S.G.D. on shallow nets at arbitrary data and width. In summary, it is fair to say that the proof of convergence of neural training remains open for almost all practically relevant finitely large networks. 

Hence there has risen a recognition of an urgent need for a {\em phenomenological} understanding of the process of deep-learning - which can take the form of simple functions that can be computed at any point of the training process to reveal how the current updates on the complex model are influencing the predictions at arbitrary data. Thus such probes can be an online diagnostic for training. 

One such proposal is that of ``local elasticity'' that came to light in recent works, \cite{he2019local} and \cite{chen2020label}. It reveals an intriguing new feature of neural training, that it is ``local". The core phenomenon they highlighted can be framed as follows, that when a neural network is being trained to classify between say, \text{cat} and \text{dog} images, then the algorithm's weight update based on a sampled cat image changes the predictions for other cat images (\textit{intra-class}) much more significantly than dog images (\textit{inter-class}) and vice-versa. 

While the community lacks a complete mathematical framework to capture this phenomenon, as an attempt towards that, the above papers \cite{he2019local} \cite{chen2020label} put forward the following definition of a quantity called $S_{\rm rel}$ - which intuitively measures the fractional change in predictions caused by a S.G.D. update at a data point $\x'$ as opposed to at $\x$, when $\x$ is the sampled data point for computing the stochastic gradient. 

\begin{definition}[{\bf $S_{\rm rel}$}]\label{def:srel}
Corresponding to a real valued hypothesis class ${\cal F} = \{ f \mid f : {\cal X} \times \R^p \rightarrow \R \}$, loss function $\ell : {\cal F} \times {\cal X} \times {\cal Y} \rightarrow \R^{\geq 0}$, given any instantiation of the (possibly stochastic) training algorithm $\{ \w_t \in \R^p \mid t =1,\ldots \}$ which had access to labeled data $\in {\cal X} \times {\cal Y}$ and a learning rate $\eta$, we define  associated to it a family of non-negative reals valued stochastic processes, indexed by pairs of labeled input data $\big ( (\x,\y),(\x',\y) \big )$,

\begin{align}
    \R^{\geq 0} \ni t \mapsto \left [ S_{\rm rel}( t) \right ]_{\x', \x,\y} &\coloneqq  \frac{\abs{f(\x',\w_t^+(\x)) - f(\x',\w_t)}}{\abs{f(\x,\w_t^+(\x)) - f(\x,\w_t)}} \in \R^{\geq 0} 
\end{align} 
\[ \text{where,} \]
\[ \w_t^+(\x) = \w_t - \eta \nabla_{\w_t} \ell (\w_t, (\x,\y)) \] 
\end{definition} 

\vspace{\parskip}

{\remark  For a fixed choice of $\big ( (\x,\y),(\x',\y) \big )$ the above definition will always be understood to be applicable at only those times $t$ when $f(\x,\w_t^+(\x)) \neq f(\x,\w_t)$ and all properties of it hypothesized or proven or observed will be understood under this condition. Secondly, we shall often denote $\left [ S_{\rm rel}( t) \right ]_{\x', \x,\y}$ as  $S_{\rm rel}(\x',\x)$ or $S_{\rm rel}(t,\x',\x)$ depending on which variable dependencies we want to emphasize.}

{When $\srel$ is large for many $\x'$s spread across the data space as a response to the update $\w_t$ to $\w^+_t(\x)$, the predictions are changing more globally across the data space and we characterize the training as being ``inelastic". Alternatively, when $\srel$ is low, the training is of a more ``local" or ``elastic" nature.}

A primary point of motivation for our work is to note that in the previous studies the intricate structure in the time dynamics of the above quantity was not sufficiently studied. Also, the predictors that are trained for multi-class classification tasks have multi-dimensional outputs, while the authors in \cite{he2019local} had not proposed any extension of the definition of  $S_{\rm rel}$ for this scenario.  We launch a detailed investigation into this lacuna in existing literature - and show evidence that obvious extensions of Definition \ref{def:srel} to multi-dimensional predictors do not effectively capture the locality of the dynamics. This in turn leads us to propose a new definition of $S_{\rm rel}$ which we demonstrate to be far more sensitive for neural training dynamics while learning classification tasks. On the other hand, we also rigorously compute the original/above definition for various well-motivated regression settings and demonstrate a close match between intuition, theory and experiments in that regime. 

\subsection{An Overview}
We summarize our contributions as follows, 

\begin{itemize} 
\item  In Section \ref{sec:srelkl} we motivate and introduce a new definition of $S_{\rm rel}$ which is tuned to the setup of training a predictor which maps inputs to a probability distribution over a set of finite classes. 

\item In Section \ref{sec:conj} we outline our experimental results with both the variants of $S_{\rm rel}$ that we consider. In here we shall summarize into a set of hypotheses what our experiments reveal about the behaviour of $S_{\rm rel}$ as a function of time/iterations during the progress of a neural training algorithm.

\item In Section \ref{sec:thm_summary} we outline our theoretical results whereby we isolate certain situations of learning by gradient flow along which the time-evolution of $S_{\rm rel}$ (Definition \ref{def:srel}) can be exactly calculated. The structure of these closed-form expressions will give evidence towards the hypotheses stated in Section \ref{sec:conj}. 

\item In Section \ref{sec:svhn} and Appendix \ref{app:exp}  we give a comparative study of how our proposed definitions of local elasticity (Definition \ref{def:srel_ce_k}) is better at uncovering a phenomenon of locality in training than (the required natural extensions of) the earlier proposal in Definition \ref{def:srel} to multi-dimensional predictors. We demonstrate this via tracking both the definitions in tandem along the iterations of training practical neural nets on standard classification data. 

On the other hand, in Section \ref{exp:reg}, we give experimental studies in the regression setting and point out the ease of interpretation of $\srel$ as defined in Definition \ref{def:srel} as a measure of locality. 


\item In Section \ref{sec:disc_features}, Section  \ref{sec:relu_ode} and Section \ref{sec:d_hom} we give the formal theorem statements and the proofs of the results outlined in Section \ref{sec:thm_summary}. Various lemmas necessary in the above sections can be found in Appendices \ref{app:relu} and \ref{proof:d_hom}.
\end{itemize} 

We note two salient points about all the experiments we demonstrate -- (a) in demonstrations of the dynamics of $S_{\rm rel}$ (Definition \ref{def:srel}) in the classifications settings, we have smoothened it by a similar averaging over pairs of data as has been stated explicitly in Definition \ref{def:srel_ce_k} for the new proposal. (b) In all experiments where $\srel$ measurements have been done in Sections \ref{sec:svhn}, \ref{sec:disc_features} and Appendix \ref{app:exp}, all the plots account for the stochasticity of training and indicate the standard deviation over multiple runs of the experiment. All the codes for the experiments can be found at, \href{https://github.com/avirupdas55/Dynamics-of-Local-Elasticity-During-Training-of-Neural-Nets}{https://github.com/avirupdas55/Dynamics-of-Local-Elasticity-During-Training-of-Neural-Nets}.




\subsection{A New Definition of Local Elasticity For Simplex Valued Predictors.}\label{sec:srelkl} 

\begin{definition}[{\bf Defining $\srel$ for probability simplex valued predictors}]\label{def:srel_ce} 
Suppose $\g_\w : \R^n \rightarrow \R^\C$ is a probability simplex valued predictor i.e $\g_\w(\x)_i \in [0,1], \sum_{i=1}^\C \g_\w(\x)_i = 1 ~\forall \x \in \R^n, \forall i \in \{1,\ldots, \C \}$

\begin{align*}\label{def:norm_srel} 
    \R^{\geq 0} \ni t \mapsto S_{\rm rel}(t,\x',\x) &\coloneqq \left [ S_{\rm rel}( t) \right ]_{\x', \x,\y}\\
    &\coloneqq  \frac{ \KL \Big  (\g(\x',\w_t^+(\x)), \g(\x',\w_t)  \Big )}{ \KL  \Big  (\g(\x,\w_t^+(\x)) , \g(\x,\w_t)  \Big  )} \in \R^{\geq 0} 
\end{align*} 
\[ \text{where } \] 
\[\w_t^+(\x) = \w_t - \eta \cdot  \nabla_{\w_t} \ell (\w_t, (\x,\y)) \]
\end{definition}

Note that in the definition above the output of $\g$ is always a probability vector and we recall that given two probability vectors $\p, \q \in [0,1]^\C$ s.t $\sum_{i=1}^\C p_i = 1 = \sum_{i=1}^\C q_i$, we have, $\KL (\p,\q) = \sum_{i=1}^\C p_i \log \left ( \frac {p_i}{q_i}  \right )$

A natural use-case of the above definition is when studying the training of a network $\net$ (parameterized by weight vector $\w$) composed with a layer of soft-max, i.e., the mapping $\g_{\w} \coloneqq {\rm Soft{-}Max} \circ \net : \R^n \rightarrow \R^\C$. This composed function $\g_{\w}$ would commonly be trained via the cross-entropy loss $\ell = \ell_{\rm CE}$ on a $\C$ class labeled data. 

Further, we would like to note that  Definition \ref{def:srel} focused on the changes in the predictor's output. In contrast, our proposed Definition \ref{def:srel_ce}, tuned to the classification setup, is trying to capture the ratio between the change in the predicted distribution over the classes at the test point $\x'$ and the corresponding change in the predicted distribution over the classes at the sampled point $\x$.  

We posit that its more useful to make the new $\srel$ more sensitive to the class identities rather than to the specific members of the class being sampled in a given update. Towards that end, we define the following smoothed version of $\srel$ adapted to the classification setting,  

\begin{definition}[{\bf Defining smoothed $\srel$ for probability simplex valued predictors}]\label{def:srel_ce_k} 
Given the setup as in Definition \ref{def:srel_ce}, we further define the following smoothed quantity corresponding to any choice of $2k$ data points $\{ \x_{{\rm c}_1,i} \mid i=1,\ldots,k  \}$ and $\{ \x_{{\rm c}_2,i} \mid i=1,\ldots,k  \}$ from any two classes ${\rm c}_1$ and ${\rm c}_2$ respectively. 

\begin{equation}
\begin{split} 
&S_{\rm rel}^{\rm k, smooth}(t,\{ \x_{{\rm c}_1,i} , \x_{{\rm c}_2,i} \mid i=1,\ldots,k \})\\
&\coloneqq \frac{\sum^{k}_{i=1}\sum^{k}_{j=1}S_{\rm rel}(t,x_{{\rm c}_1,i},x_{{\rm c}_2,j})}{k^2}
\end{split}
\end{equation}
\end{definition} 

In the later sections, we will use the above smoothed version of Definition \ref{def:srel_ce} when investigating behavior of neural networks for the classification experiments.



\subsection{A Summary of Our Experimental Results About $S_{\rm rel}$ (Definitions \ref{def:srel} and \ref{def:srel_ce_k})}\label{sec:conj} 


In Section \ref{sec:svhn} we give comparative experimental studies between Definition \ref{def:srel} and  \ref{def:srel_ce_k} to demonstrate that the latter has a sharper interpretation as measuring the ``locality of updates'' in classification settings. On the other hand, in Section \ref{exp:reg}, we give experimental studies in the regression setting and point out the ease of interpretation of $\srel$ as defined in Definition \ref{def:srel} as a measure of locality. These experiments can be seen to be leading to the following hypothesis about the dynamics of neural training, 



\begin{hypothesis}[{\bf Proximity Conditions for Smallness of $S_{\rm rel}$ of Definition \ref{def:srel}}]\label{conj:small}
There exists learning situations as described in the setup of Definition \ref{def:srel} with ${\cal X} = \R^n$ such that,
\begin{enumerate}[(a)] 
\item $\exists$ a metric $g$ on the data space s.t for any given $\x'$ and $\x_h$, $\exists$ $\x_\ell$ s.t $g(\x',\x_\ell) < g(\x',\x_h)$ and $S_{\rm rel}(\x',\x_{\ell}) > S_{\rm rel}(\x',\x_h)$. 
\item There exists a function $\funcP : \R^n \times \R^n \rightarrow [0,\infty)$ which is symmetric in its inputs  s.t, 
$S_{\rm rel} (\x,\x') = {\mathcal O} \left (  \funcP (\x,\x') \right )$ 
\end{enumerate} 
\end{hypothesis}



Firstly we note that the neither of the two parts of the hypothesis above depend on time/stage of training. Secondly, note that the part (a) above allows for the fact that the graph of $[S_{\rm rel}(t_*)]_{\x',\x,\y}$ versus $g(\x',\x)$, for a fixed $\x'$, could have a local maxima at some $\x$. We rather posit that this plot would have a downward trend in general -- that for a fixed $\x'$ and $\x_h$, we can always find a data $\x_\ell$ which is nearer to $\x'$ and has a higher $S_{\rm rel}$. 

In other words, the influence of the prediction at $\x'$ of an update based on sampling $\x$, diminishes without too many abrupt changes as $\x$ gets farther away from $\x'$ in the metric $g$. Note that this $g$ could be very different from the natural metric in the space where the data is presented during training - but it is rather sensitive to the features of it as extracted by the learning process at hand. 

Further, the function $\funcP$ hypothesized to exist in (b) above gives a {\it time-independent} proximity/``relatedness" condition on the pair $(\x,\x')$ which if low, would be sufficient for $S_{\rm rel}(\x,\x')$ to be low - and this $\funcP$ is a priori expected to be different than $g$ from the first part.  Hence, at all times $[S_{\rm rel}(t)]_{\x',\x,\y}$ is larger for those $\x$ and $\x'$ which are closer to each other. {\it Thus the influence of the sampled data maintains the local nature of its influence at all times.}

\begin{hypothesis}[{\bf A $2-$phase Behaviour in Time is Seen by $S_{\rm rel}$ of Definition \ref{def:srel}}]\label{conj:large}
We continue in the same setup as in Hypothesis \ref{conj:small}.
\begin{enumerate}[(a)]
\item {\bf [Early times]}  $\exists$  time instants $0 \leq t_1 < t_2 < t_* < \infty$  s.t $\forall (\x,\x') ~S_{\rm rel}(\x,\x')$ in the time interval $[t_1,t_2]$ is smaller than any value it attains for $t \geq t_*$.
\item {\bf [Late times]} $\forall (\x,\x'), ~\forall t \in [t_*,\infty)$, $S_{\rm rel} (\x,\x') > C(t,\x,\x') >0$ for a function $C : \R \times \R^n \times \R^n \rightarrow (0,\infty)$ which is non-decreasing in its $t-$dependence.
\end{enumerate}
\end{hypothesis} 

We recall that in above, the $t-$dependence of $S_{\rm rel}$ has been suppressed for notational clarity. Also, just as the late time lower bound $C(t,\x,\x')$ is expected to have a non-trivial dependence on $(\x,\x')$, the range of values attained by $S_{\rm rel}(\x,\x')$ in the early times, $[t_1,t_2]$, is also expected to have a strong dependence on the data pair $(\x,\x')$. We anticipate that the there exists proximity conditions on the pair $(\x, \x')$ which if true/false then it would consequentially raise/lower the value of $S_{\rm rel}(\x,\x')$ for $t \in [t_1,t_2]$ and also the value of $C(t,\x,\x')$ for all $t > t_*$ -- while still maintaining the distinction between the early time (before $t_*$) and late time (after $t_*$) behaviour. Hence in words, {\it we observe that neural regression is characterized by a time scale $t_*$ about which they undergo a ``phase transition" - that compared to the early times, they behave ``less elastically" in the later stages of the process.}

Our experiments also reveal another distinguishing feature between the two phases above -- that during the initial elastic phase, $S_{\rm rel}(t)$ is a rapidly increasing function of time while in the late phase it is not.





Finally, we note the consistent behaviour that is observed for $S_{\rm rel}(t)^{k,{\rm smooth}}$ as given in Definition \ref{def:srel_ce_k}. Recalling that ${\rm c}_2$ is the class from which the fictitious S.G.D. update sampled the data, we observe that whenever ${\rm c}_1 = {\rm c}_2$ i.e $S_{\rm rel}(t)^{k,{\rm smooth}}$ is being measured intra-class, it turns out to be fairly time independent and large than when ${\rm c}_1 \neq {\rm c}_2$. Thus $S_{\rm rel}(t)^{k,{\rm smooth}}$ quantifies how local to the true class of the sampled data are the changes that the algorithms make to the predictor while doing classification. 

We posit that being able to rigorously prove the above properties of $\srel$ would be a major advance towards demystifying deep-learning.

\subsection{A Summary of Our Theoretical Results About $S_{\rm rel}$ (Definition \ref{def:srel})}\label{sec:thm_summary} 

We isolate three models of learning by integrable gradient flows and along their solutions we give closed-form expressions for the time evolution of $\srel$ as given in Definition \ref{def:srel}. When the structure of these closed-form expressions are studied, they are seen to give evidence for many of the hypothesized behaviour of $\srel$ on neural networks. 

In Section \ref{sec:disc_features} we consider a model of training the last layer of a neural network. Here we show that the gradient flow on a penalized regression over features with {\it discrete labels} is exactly integrable. We use this to derive a closed-form expression for $\srel$ for this model and plot its dynamics for synthetic classification data. 

Next, we move on to consider $\srel$ for two exactly solvable gradient flow models of regression with {\it continuous labels}. In Section \ref{sec:relu_ode} we analyze learning of a $\relu$ gate, a situation where the predictor function is not a polynomial in the weights. In Section \ref{sec:d_hom} we analyze a situation where the predictor functions being trained are polynomial in their parameters/``weights".

In the following table we summarize the theoretical results presented in this paper. In light of that we note the following salient points. {\rm (A)} {In each of the above cases we get a closed form expression for $\srel$ {\it without having to make any specific distributional choice for the data except for realizability.}} {\rm (B)} The plots of the dynamics of $\srel$ immediately reveal a two-phase behaviour of training -- an initial elastic phase when $\srel$ changes rapidly and an eventual inelastic phase when $\srel$ remains larger than during initial times. 

{\rm (C)} Lastly, we emphasize that for the above models we also demonstrate that the theoretically derived behaviour for $S_{\rm rel}$ along the chosen gradient flows closely matches the time evolution of the $S_{\rm rel}$ along the S.G.D. trajectory - whose parameters were chosen to mimic the gradient flow. Thus along the way we give evidence that there is a well-defined continuous time limit of the dynamics of $S_{\rm rel}$ along S.G.D. updates. 

\begin{center}
    \begin{tabular}{|M{0.27\linewidth} | M{0.16\linewidth} |  M{0.22\linewidth}| M{0.29\linewidth}|}
    \hline 
    {\bf Model} & {\bf Result} & {\bf Hypothesis That it Corroborates } & {\bf Example of Exact $S_{\rm rel}$ \newline} \\
    \hline
    \hline
    Classification by features & Theorem \ref{thm:srel_peel} & Hypothesis \ref{conj:large} (b) & Theorem \ref{thm:srel_peel} 
\\
    \hline 
     A single  $\relu$ gate & Lemma \ref{lem:low_srel_relu} & Hypothesis \ref{conj:large} (b) &
(implicitly contained in the discussion)\\
    \hline 
    {Weight  $d-$homogeneous, \newline feature linear predictors}  & Theorem  \ref{thm:srel_cont_upper} \newline Theorem  \ref{thm:2_2_diag}  & Hypothesis \ref{conj:small} (b) \newline Hypothesis \ref{conj:large} (b)  &Theorem \ref{thm:exact_S_rel}\\
     \hline 
    \end{tabular} 
\end{center}

{\em The plots of these exact $S_{\rm rel}$ formulas} -- in the table above - {\em provide empirical evidence for the first part of both the above hypothesis.}

\subsection{A Review of Related Literature}

Closely related to the idea of elasticity, the authors in \cite{stiffness} postulated the concept of ``stiffness'' which tries to capture the similarity of updates between two data by measuring a notion of similarity between the gradients at those two data points. The following two definitions were used by them to measure inter-class and intra-class stiffness respectively, 
$S_{\rm sign}\left(\left({\mathbf x}_1, y_1\right), \left({\mathbf x}_2, y_2\right);f_W\right)= \mathbb{E}\left[{\rm sign} \left( {\mathbf g}_1^\top {\mathbf g}_2\right)\right]$ and $S_{\rm cos}\left( \left ({\mathbf x}_1, y_1\right), \left({\mathbf x}_2, y_2\right);f_W \right)= \mathbb{E}\left[{\rm cos} \left( {\mathbf g}_1,{\mathbf g}_2\right)\right]$
where ${\mathbf g}_1=\nabla_W\mathcal{L}\left(f_W\left({\mathbf x}_1\right),y_1\right)$ , ${\mathbf g}_2=\nabla_W\mathcal{L}\left(f_W\left({\mathbf x}_2\right),y_2\right)$ and $W$ denotes the trainable parameters and $f_W$ is the loss at the given weight values. These measures were used to show how the stiffness between two data points depends on their mutual distance in the input space, and thus establish the concept of a dynamical critical length 
 i.e a distance threshold s.t another point has to be within that distance from the sampled data so that updates on the parameters based on the sampled data influences the former. Firstly, unlike our formulations of a uniform notion of influence, this proposal needed to switch between two different notions of stiffness based on whether the two points are in the same class or not. We posit that our function's graphs are much easier to interpret because of this functional uniformity across all pairs of data. Secondly, there is lack of evidence if stiffness is meaningful in regression settings while our proposal has a regression analogue s.t we can demonstrate non-trivial learning scenarios where we can analytically compute our measure and match it to experimental values. To the best of our knowledge, there is no analytical form known for the time-dynamics of the stiffness metric in any setting. 

\cite{coherent_grads} put forth the coherent gradient hypothesis which posits that the gradients of the loss function at similar data tend to aligned to each other and those mini-batch updates cause the maximum change in loss when per-example gradients are more aligned with each other. They further hypothesized that these mini-batch updates would cause larger drops in the value of the loss function than when alignment is lower - and such updates cause beneficial changes to prediction at a larger number of samples. This can be seen as one of the first works which postulated a relationship between similarity of data in a mini-batch and that update pushing the network towards good generalization. One can see our direction of investigation with local elasticity as being a mathematically concrete way to instantiate these ideas. 

Another important elasticity based advancement towards understanding generalisation was made by \cite{locally_elastic_stability} via defining a property called ``local elastic stability'', where an algorithm $\mathscr{A}$ is said to have local elastic stability $\beta_m(\cdot, \cdot)$ with respect to the loss function $l$ if, $\forall \ m$ the following inequality holds $\forall\ S\in \mathcal{Z}^m, \ 1\leq i\leq m, $ and $z\in \mathcal{Z}$, $\left| l(\mathcal{A}_S,z)-l(\mathcal{A}_{S^{-i}},z) \right| \leq  \beta_m(z_i, z)$.
In above, $\mathcal{A}_S$ is the output of the learning algorithm $\mathcal{A}$ for the $m-$sized input training set $S$ whose each data is sampled i.i.d from the probability space $\mathcal{Z}$ and $\mathcal{A}_{S^{-i}}$ is the output without the $i^{\rm th}-$ data of this training set. However, we note that all analysis done therein for the property of local elastic stability is contingent on the assumption that the loss function is convex in nature. 

In contrast to the above, in this work, we are able to establish analytic insights (which bear out in experiments) for the notion of local elasticity for multiple well-motivated non-convex losses.

\section{Experimental Study of the Time Dynamics of  $\srel$}\label{sec:exp}

We will first present a thorough empirical study of the time dynamics of $S_{\rm rel}$ using our newly proposed Definition \ref{def:srel_ce_k} for the benchmark classification tasks on the datasets: SVHN \cite{netzer2011reading} \footnote{SVHN is a real-world image classification dataset that is similar in flavor to MNIST but incorporates more labeled data which comes from a significantly harder, unsolved, real world problem of recognizing digits and numbers in natural scene images. SVHN is obtained from house numbers in Google Street View images and it consists of $73257$ digits for training and $26032$ digits for testing.}, CIFAR-10 and CIFAR-100 \cite{krizhevsky2009learning}\footnote{CIFAR-10 consists of 10 mutually exclusive classes containing different animals and vehicles. It has $50,000$ training images and $10,000$ test images with $6,000$ images in each class, while CIFAR-100 has $600$ images in each class.}

In this section, the context shall always be of neural nets being trained to classify via minimizing a cross-entropy loss. We note that when Definition \ref{def:srel} is used in this setup, there is flexibility about whether the predictor $f$ is to be taken as the output of the neural network or the output composed with a softmax layer. We conducted our experiments for both options and found the latter performs better. Secondly, implicitly in Definition \ref{def:srel} there is also the flexibility of choosing the norms to be used in the image space of the predictor for evaluating the required ratio. We conducted the experiments using both $\ell_1$ and $\ell_2$-norms, but the results were essentially indistinguishable.

So we report our experimental comparisons between Definitions  \ref{def:srel} and \ref{def:srel_ce_k} only for the case of evaluating the former with $f ={\rm Soft{-}Max}\circ {\rm Net}$ and using $1-$norm on the probability simplex. In the following section we report our studies with SVHN and CIFAR-10 and the corresponding results on CIFAR-100 are presented in Figure \ref{fig:srel_cifar100} in Appendix \ref{app:exp}.

\subsection{A Comparative Study of $\srel$ Definitions \ref{def:srel} and \ref{def:srel_ce_k}}\label{sec:svhn}
\label{exp:svhn}

For this experiment we train a ResNet-18 architecture, composed with a soft-max layer using the cross-entropy loss, and identify the top-$3$ classes with the highest class-wise accuracy. We then train another instance of ResNet-18
model \footnote{For the experiments in this section we remove batch-normalization from the ResNet-18 model, so as to facilitate defining gradient updates with mini-batch size $1$ which is necessary to define $\w_t^+$ in Definitions \ref{def:srel} and Definition \ref{def:srel_ce_k}.}, again with softmax and the cross-entropy loss, on these top-$3$ classes. We choose $k=20$ training data from each class and track the smoothed version of $S_{\rm rel}$ i.e $S_{\rm rel}^{\rm k, smooth}$ as given in Definition \ref{def:srel_ce_k} for all the $9$ possible class pairs. For a head-to-head comparison we also track for each of those classes the original definition of $S_{\rm rel}$ as given in Definition \ref{def:srel} (similarly smoothened by averaging over data pairs as in Definition \ref{def:srel_ce_k}). Some further details of the experimental setup are as follows in the table,


\begin{table}[tbh]
  \begin{center}
    \begin{tabular}{|l|r|}
   \hline     
      Parameter & Value  \\
   \hline
   Depth of  ResNet &  18 (no batch norm)\\
   The map that the $\net$ implements & $\R^{3 \times 32 \times 32} \to \R^{10}$\\ 
   \hline 
      Optimizer & ADAM \\
      Learning rate & $1\cdot 10^{-4}$\\
      Mini-batch size & 250\\
      Dropout or any other regularization & Not Used\\
      \hline
  \end{tabular}
  \end{center}
  \label{table_svhn_no_bn}
\end{table}

On the left column of Figure \ref{fig:srel_svhn}, we see the time evolution of $\srel^{\rm k, smooth}$ for all the $9$ possible class pairs corresponding to images from the top 3 classes, $\lbrace 0,2,4\rbrace$ for SVHN. And the in the right column of Figure \ref{fig:srel_svhn} we see the same for the original definition of $\srel$. In Figure \ref{fig:srel_cifar_plane_car_ship} a similar tracking for both the definitions is done for the top 3 classes for CIFAR-10 as detected by ResNet-18, namely $\lbrace plane, car, ship \rbrace$. 

\begin{figure}
    \centering 
\begin{subfigure}{0.50\textwidth}
  \includegraphics[width=1\textwidth]{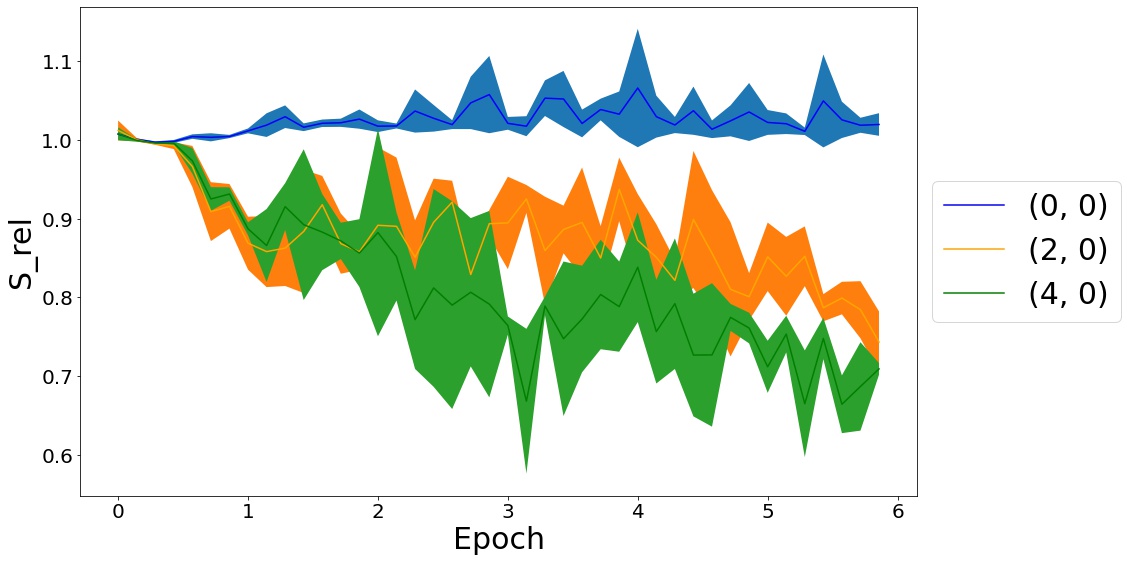}
  \caption{}
  \label{fig:1a}
\end{subfigure}\hfil 
\begin{subfigure}{0.50\textwidth}
  \includegraphics[width=1\textwidth]{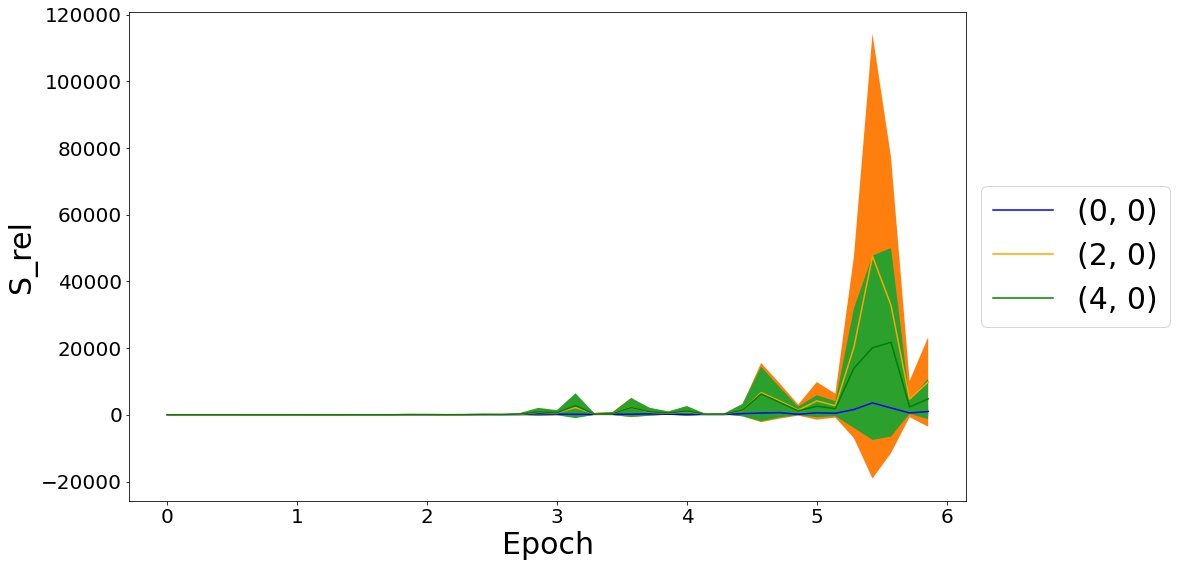}
  \caption{}
  \label{fig:1b}
\end{subfigure}
\medskip
\begin{subfigure}{0.50\textwidth}
  \includegraphics[width=\linewidth]{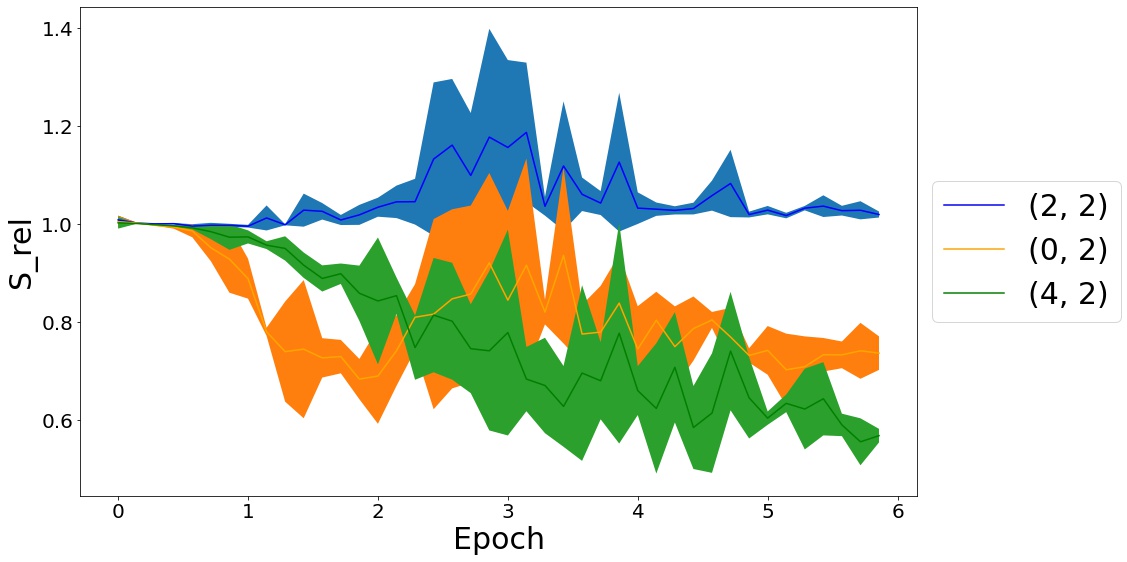}
  \caption{}
  \label{fig:1c}
\end{subfigure}\hfil 
\begin{subfigure}{0.50\textwidth}
  \includegraphics[width=\linewidth]{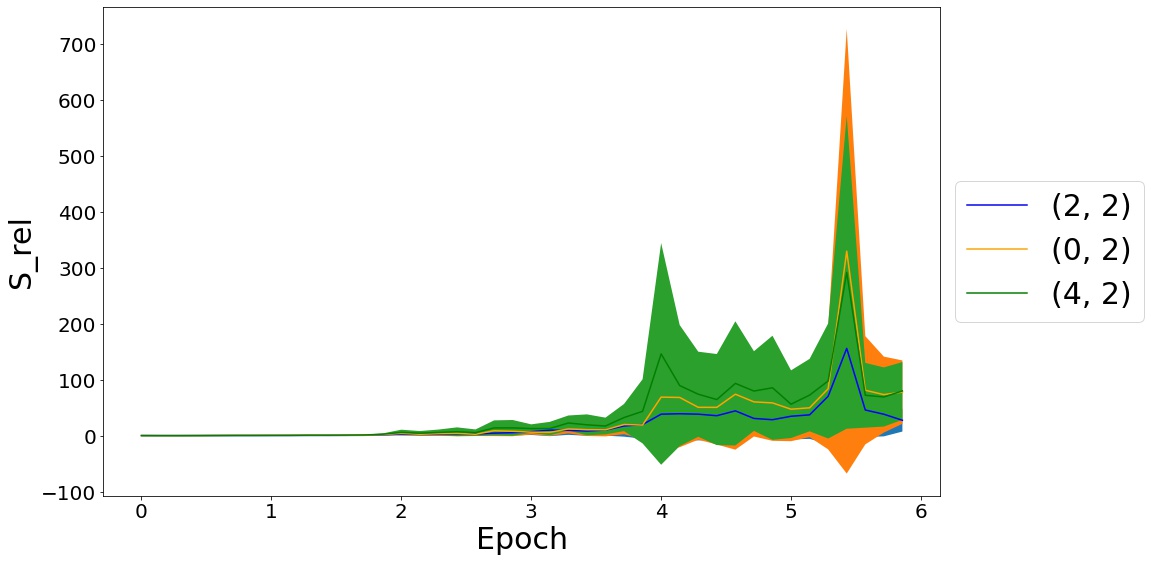}
  \caption{}
  \label{fig:1d}
\end{subfigure}
\medskip
\begin{subfigure}{0.50\textwidth}
  \includegraphics[width=\linewidth]{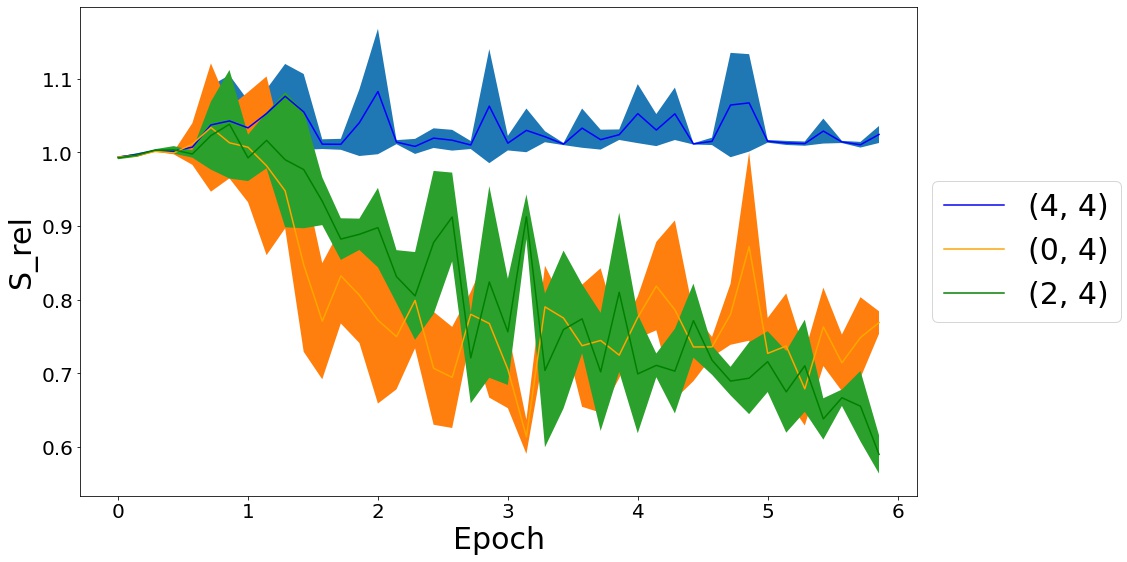}
  \caption{}
  \label{fig:1e}
\end{subfigure}\hfil 
\begin{subfigure}{0.50\textwidth}
  \includegraphics[width=\linewidth]{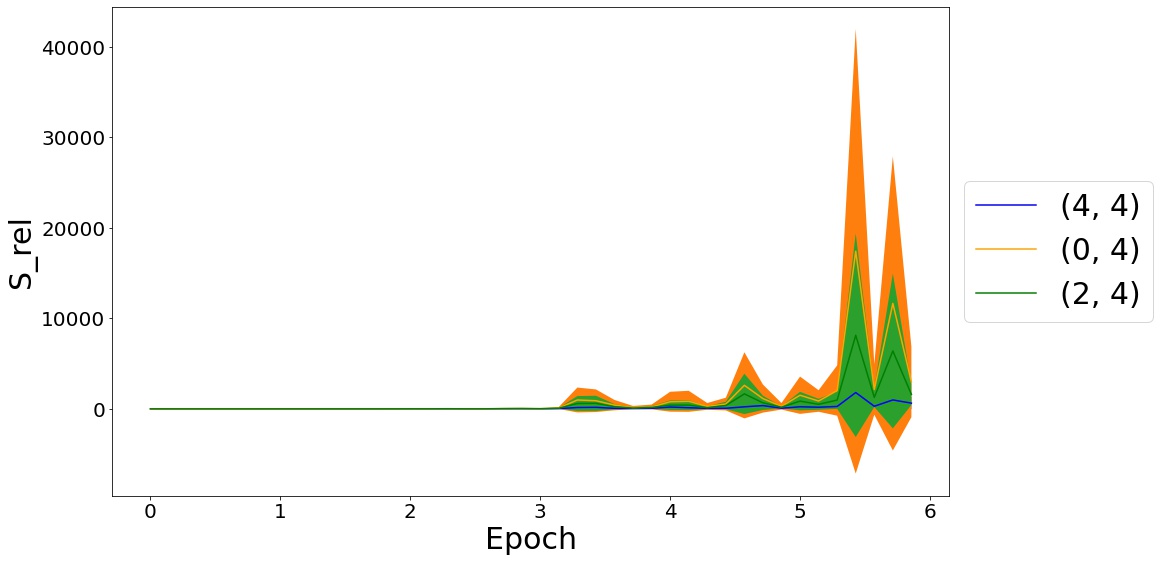}
  \caption{}
  \label{fig:1f}
\end{subfigure}
\caption{ Time evolution of $S_{\rm rel}^{\rm k, smooth}$ (Definition \ref{def:srel_ce_k}) in figures (a), (c) and (e) and $S_{\rm rel}$ (Definition \ref{def:srel}) -  smoothed by averaging over data pairs in figures (b), (d) and (f) for the two image classes being chosen from $\lbrace\text{0, 2, 4} \rbrace$ with {\rm Class-$\x$} being ${0}$ in (a) and (b), ${2}$ in (c) and (d), and ${4}$ in (e) and (f). Figure \ref{fig:loss_acc_dynamics_svhn} shows the loss and accuracy dynamics for the experiment.}
\label{fig:srel_svhn}
\end{figure}

\begin{figure}
    \centering 
\begin{subfigure}{0.50\textwidth}
  \includegraphics[width=1\textwidth]{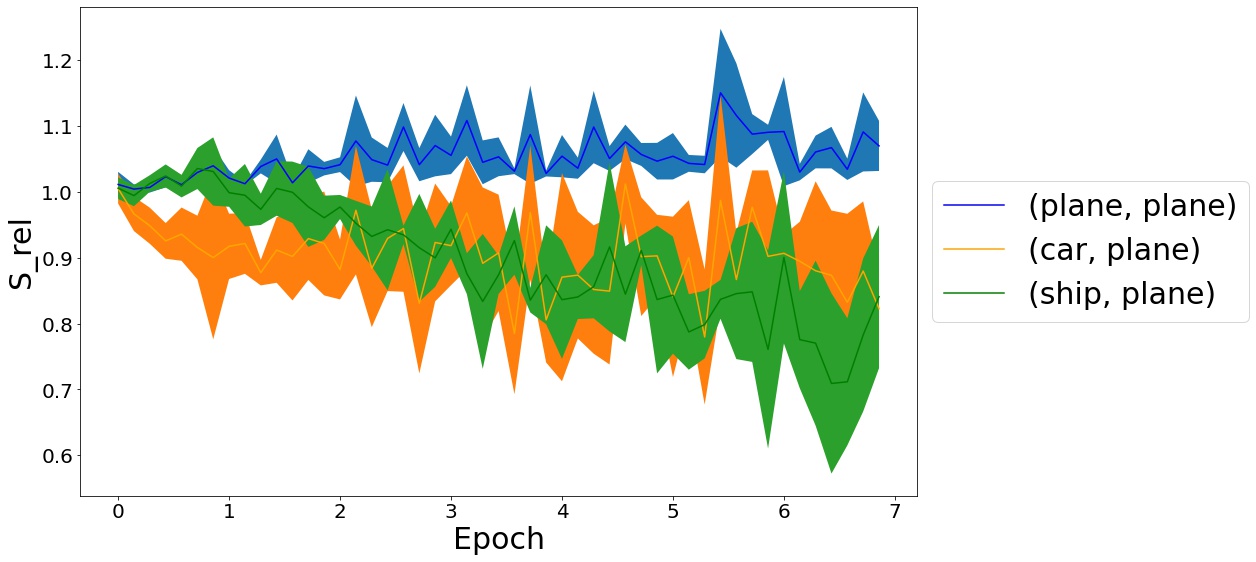}
  \caption{}
  \label{fig:2a}
\end{subfigure}\hfil 
\begin{subfigure}{0.50\textwidth}
  \includegraphics[width=1\textwidth]{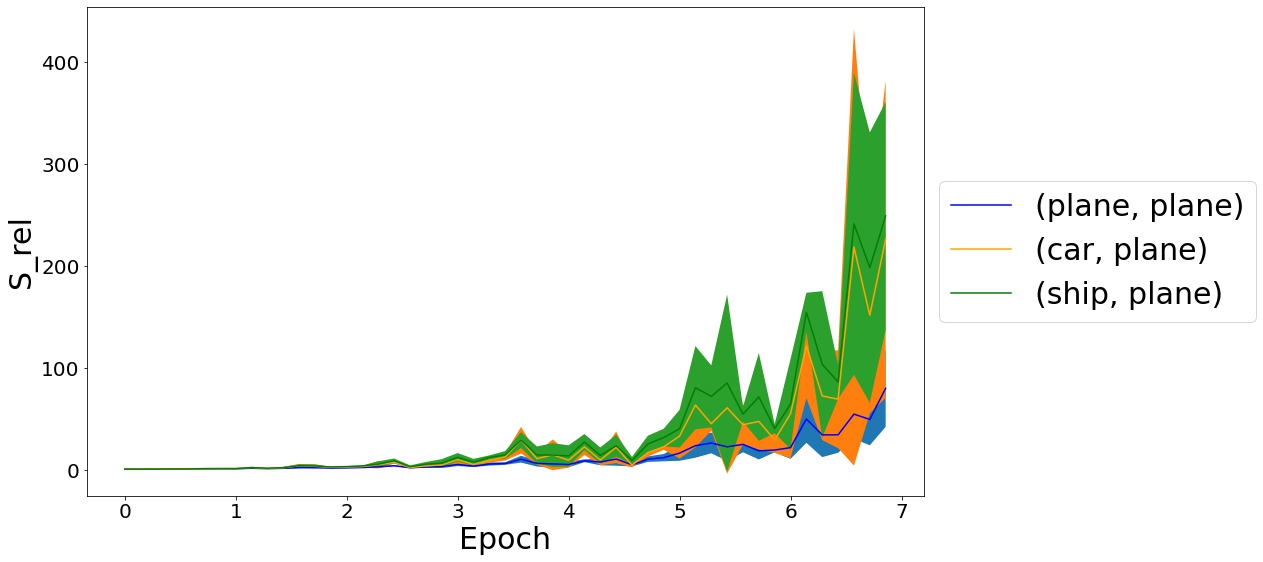}
  \caption{}
  \label{fig:2b}
\end{subfigure}
\medskip
\begin{subfigure}{0.50\textwidth}
  \includegraphics[width=\linewidth]{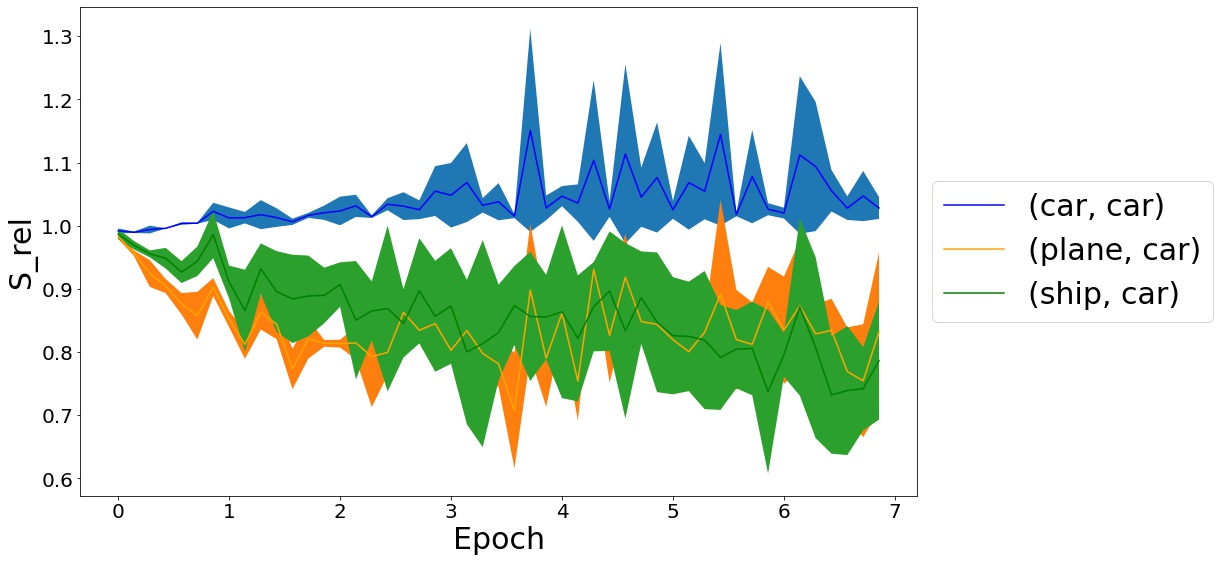}
  \caption{}
  \label{fig:2c}
\end{subfigure}\hfil 
\begin{subfigure}{0.50\textwidth}
  \includegraphics[width=\linewidth]{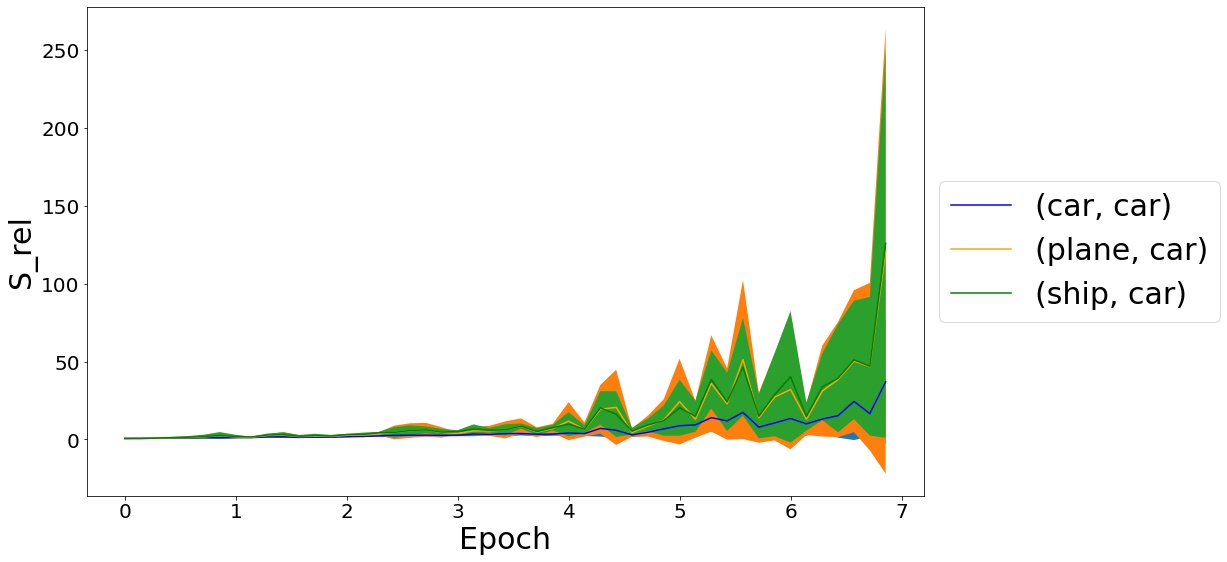}
  \caption{}
  \label{fig:2d}
\end{subfigure}
\medskip
\begin{subfigure}{0.50\textwidth}
  \includegraphics[width=\linewidth]{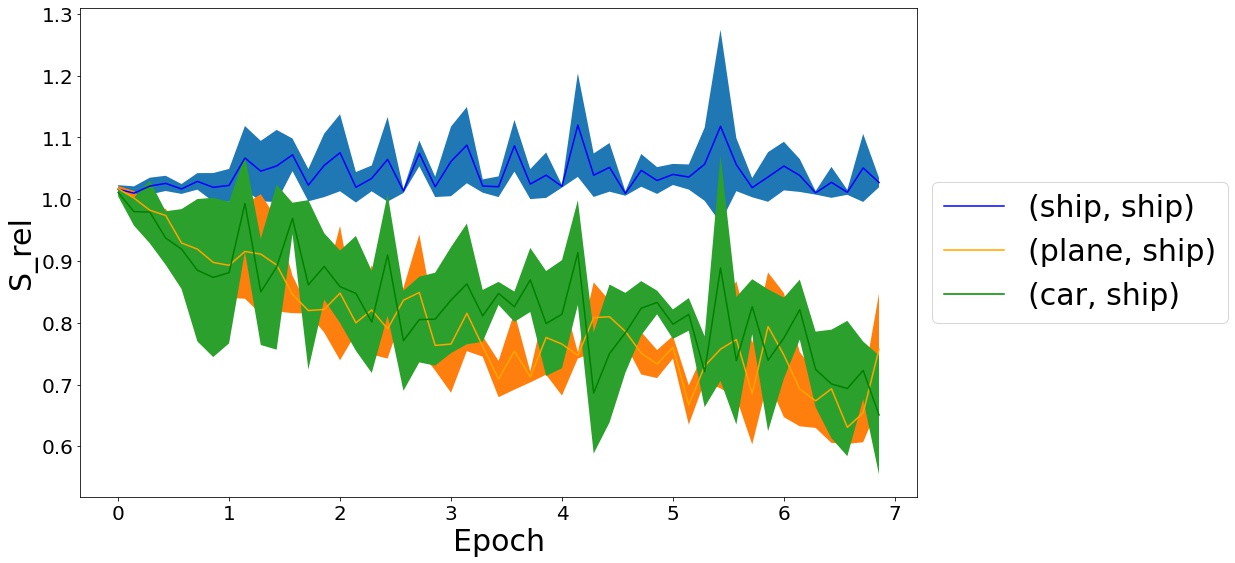}
  \caption{}
  \label{fig:2e}
\end{subfigure}\hfil 
\begin{subfigure}{0.50\textwidth}
  \includegraphics[width=\linewidth]{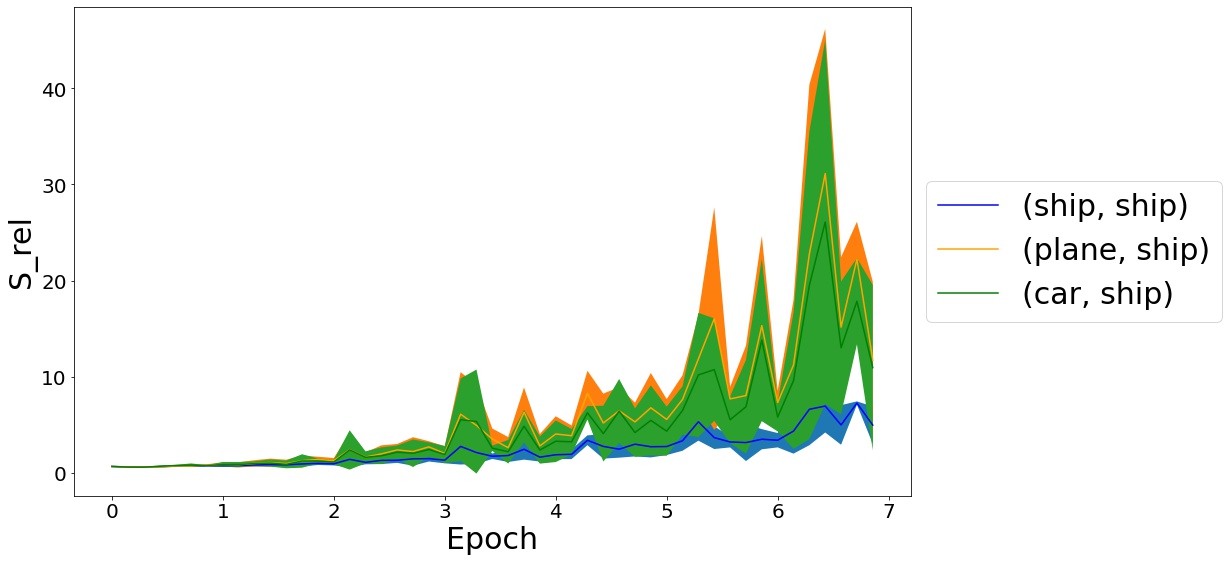}
  \caption{}
  \label{fig:2f}
\end{subfigure}
\caption{Time evolution of $S_{\rm rel}^{\rm k,smooth}$ (Definition \ref{def:srel_ce_k}) in figures (a), (c) and (e) and $S_{\rm rel}$ (Definition \ref{def:srel}) -  smoothed by averaging over data pairs in figures (b), (d) and (f) for the two image classes being chosen from $\lbrace\text{plane, car, ship} \rbrace$ with {\rm Class-$\x$} being \emph{plane} in (a) and (b), \emph{car} in (c) and (d), and \emph{ship} in (e) and (f). Figure \ref{fig:loss_acc_dynamics_cifar10} shows the loss and accuracy dynamics for the experiment.}
\label{fig:srel_cifar_plane_car_ship}
\end{figure}

For each pair of classes, all the plots show the mean $\srel$ value (for both the definitions tested)  averaged over $5$ random runs as well as the standard deviation over those runs. We observe that for all pairs, the intra-class $\srel^{\rm k, smooth}$ value as given by the current proposal is higher than the inter-class $\srel^{\rm k, smooth}$ value computed by the same. Further, while the intra-class $\srel^{\rm k, smooth}$ value remains relatively constant through the training process, the inter-class $\srel^{\rm k, smooth}$ value progressively decreases as training progresses. 

But such a separation is not seen for the original definition of $\srel$ which is tracked on the right columns of Figures \ref{fig:srel_svhn} and \ref{fig:srel_cifar_plane_car_ship}. Thus we are led to conclude that the current proposal is a more accurate view of the phenomenon of ``local elasticity'' in doing classification by deep-learning - that neural nets indeed tend to make more changes, seen in the fractional change of KL distance of the output distributions, on data in the same class as the sampled data than in other classes.

\subsection{A Study of $\srel$ (Definition \ref{def:srel}) in a Regression Setup}\label{exp:reg}

In this section, we perform an empirical study of the time evolution of $\srel$ as in Definition \ref{def:srel} for a regression problem over synthetic data with deep nets. We consider doing regression using feed-forward neural networks over an instance of non-realizable data, i.e, we construct the data $(\x,y)$  s.t  $\x$ is sampled from a standard normal distribution and $y = \norm{\x}_1$. We choose the architecture and the hyper-parameters of the experiment as tabulated below.

\begin{table}[tbh]
  \begin{center}
    \begin{tabular}{|l|r|}
   \hline     
      Parameter & Value  \\
   \hline
   Depth of the net &  5\\
   The (uniform) width of the net & 90\\
   The map that the net implements & $\R^{50} \to \R$\\ 
   \hline 
      Training set size & 16000 \\
      Test set size & 4000 \\
      Optimizer & ADAM \\
      Learning rate & $1\cdot 10^{-4}$\\
      Mini-batch size & 128\\
      Dropout or any other regularization & Not Used\\
  \hline
  \end{tabular}
  \end{center}
  \label{table}
\end{table}

\begin{itemize}
\item In Figure \ref{fig:non_real_srel_d} for a fixed $\x'$ we choose many different values of $\x_i$ and plot $S_{\rm rel}(t,\x',\x_i)$ vs $\norm{\x' - \x_i}$. {\it We do this plot at different $t=100,200,400,800,1600,2400$ and show how the trends as proposed in Hypothesis \ref{conj:small} happen at all times.}


\item In Figure \ref{fig:non_real_two_srel}  we choose two pairs of $(\x,\x')$ from the above with their $\norm{\x'-\x}$s being different. Then we plot the mean value of $S_{\rm rel}(t,\x',\x)$ vs $t$ for each of these pairs for $t = 1,..,2500$, averaged over $5$ random runs. We also show the variance of the values over these runs for each pair, via the width of the shaded areas. {\it Thus here we see that Hypothesis \ref{conj:large} bears out.}

\item In Figure \ref{fig:non_real_four_srel} we show the above plot overlaid against how the population and the empirical risks fall with time. {\it From here we realize that the initial rising part of $S_{\rm rel}(t)$ for any choice of $(\x,\x')$ coincides with the phase of training when both the risks are falling rapidly.}
\end{itemize}

We will see the key features of Fig. \ref{fig:non_real_srel_d} and \ref{fig:non_real_two_srel} be reproduced in the theoretical model to be presented in Section \ref{sec:d_hom}.



\begin{figure}[h]
\begin{minipage}[b]{.45\textwidth}
\centering
\includegraphics[height = 2in, width = 3.1in]{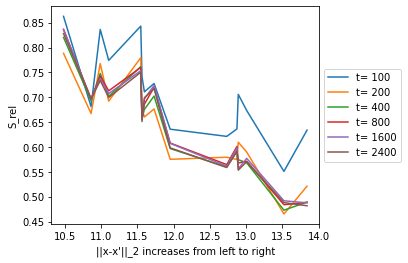}
    \caption{$S_{\rm rel}(t,\x',\x)$ vs $\norm{\x-\x'}_2$ (at a {\it fixed} $\x'$), at different times.}
    \label{fig:non_real_srel_d}
\end{minipage}
\hfill
\begin{minipage}[b]{.45\textwidth}
\centering
 \includegraphics[height = 1.6in, width = 3.1in]{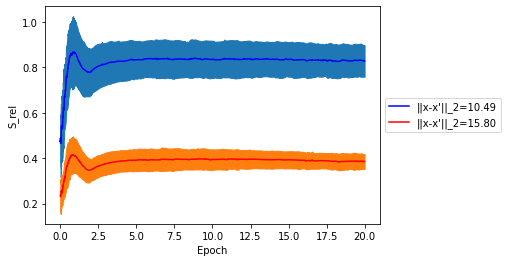}
    \caption{$S_{\rm rel}$'s time evolution at two pairs of points of $(\x,\x')$ at different mutual distances for a {\it fixed} $\x'$. }
    \label{fig:non_real_two_srel}
\end{minipage}
\end{figure}

\begin{figure}[h]
    \centering
   \includegraphics[height = 2.4in, width = 4.0in]{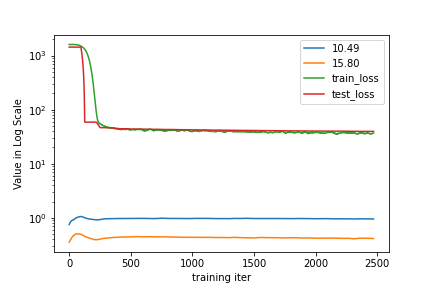}
    \caption{In here we plot the mean values of the plots of Figure \ref{fig:non_real_two_srel} overlaid against the time evolution of the mean values of the empirical and the population risks during the same experiment.}
    \label{fig:non_real_four_srel}
\end{figure}

\section{Analytic Tracking of $\srel$ for Penalized Linear Regression over Features with Discrete Labels}\label{sec:disc_features} 

We recall that $\srel$, as in Definition \ref{def:srel} was defined such that it evaluates to a (data dependent) constant for the special case of $\ell$ being the $\ell_2-$loss and $f$ being linear in the weights $\w$. But in this section we will see that regularization is sufficient to endow this $\srel$ with non-trivial dynamics - and thus we unravel a simple mechanism for $\srel$ to have the kind of non-trivial dynamics that was demonstrated in the context of neural training in the previous subsection. 

Also, penalized linear regression over features can be imagined as training over the last layer of a neural net where the fixed features are the outputs of the last layer of activations on the training data. Thus motivated we define the following key quantities corresponding to the last layer of a neural net,

\begin{itemize} 
\item Let the last layer be mapping, $\R^p \rightarrow \R^K$ with its weight matrix being $\W \in \R^{K \times p}$ and its $i^{th}-$row being $\w_i \in \R^p$. 
\item Corresponding to $K$ possible classes we consider the training data set to be of the form $(\x_i,\y_i)$ where for each $k \in \{1,\ldots,K\}$, $n_k \in \Z^+$ of these tuples are s.t their $y_i = \e_k \in \R^K$ (the $k^{th}$ standard basis in $\R^K$). Hence we can define the total size of the training data as $\cN \coloneqq \sum_{k=1}^K n_k$

\item We assume an arbitrary ordering among the $n_k$ training data in the $k^{th}-$class. Thus $\h_{k,i} \in \R^p$ would be defined as the output of the gates at the last hidden layer when the input is the $i^{th}-$training data of the $k^{th}-$class.
\end{itemize}

Hence given a positive constant, $\lambda_1$  we define the following loss function for the last layer, 
\begin{gather}\label{eq:loss_peel}
    \cL (\W) = \frac{1}{2\cdot \cN} \sum_{k=1}^{K} \sum_{i=1}^{n_k} \norm{\e_k - \W\h_{k,i}}^2 + \frac{\lambda_1}{2K} \sum_{k=1}^K \norm{\w_k}^2   
\end{gather}

We note the close resemblance between the loss function above and the models in \cite{cite-key} and \cite{DBLP:journals/corr/abs-2101-12699} where the concept of neural collapse was elucidated.

It is clear that the above loss function explicitly biases the optimization towards finding solutions where, $\frac{1}{2K} \sum_{k=1}^K \norm{\w_k}^2$ is small. We can rewrite the above in terms of loss functions evaluated at each data point, $\cL(\W ; \h_{k,i})$. Thus we have,

\begin{align*}
    \pdv{\cL}{\w_q} &= \frac{1}{\cN} \cdot \sum_{k'=1}^K \sum_{i'=1}^{n_k} \pdv{\cL(\W;\h_{k',i'})}{\w_q}\\
    &= \frac{1}{\cN} \cdot \sum_{k'=1}^K \sum_{i'=1}^{n_k} \left ( \left ( \delta_{k',q} - \ip{\w_q}{\h_{k',i'}} \right ) (-\h_{k',i'}) + \frac{\cN \lambda_1}{K} \w_q \right )\\
    &= - \left ( \frac{1}{\cN} \sum_{k'=1}^K \sum_{i'=1}^{n_k} \delta_{k',q} \h_{k',i'} \right )\\
    &+ \frac{\cN \lambda_1}{K} \w_q +  \left (  \frac{1}{\cN} \cdot \sum_{k',i'} \h_{k',i'} \h_{k',i'}^\top \right ) \w_q\\
\end{align*}

We define the following constants, 

\[ \beta^2 \coloneqq 1, \u_q \coloneqq \frac{1}{\cN} \cdot \sum_{i'=1}^{n_q} \h_{q,i'} \text{ and } \R^{p \times p} \ni \M \coloneqq   \frac{\cN \lambda_1}{K} \I_p + \frac{1}{\cN} \cdot \sum_{k',i'} \h_{k',i'} \cdot \h_{k',i'}^\top \] 

Thus we can rewrite the above gradient flow as, 

\[ \pdv{\cL}{\w_q} = -\beta^2 \u_q + \M \w_q \]

\begin{definition}\label{def:ode_w}
Given a $\theta \in \R$, we define the gradient flow time evolution of the rows of the matrix $\W$ to be happening via the following O.D.E., 
\[ \dv{\w_q}{t} = - \theta^2 \cdot \pdv{\cL}{\w_q} = \theta^2 \beta^2 \u_q - \theta^2 \M \w_q  \] 
\end{definition}

\begin{theorem}\label{thm:ode_peel}
Since $\M$ is P.D (given $\lambda_1 >0$), we can integrate the O.D.E in Definition \ref{def:ode_w}, to get, 
\[ \w_q (t) = e^{-\theta^2 \M t} \left [ \w_q(0) - \beta^2 \M^{-1}\u_q \right ] + \beta^2 \M^{-1}\u_q \]
\end{theorem}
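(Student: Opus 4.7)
The plan is to recognize this as a linear first-order matrix ODE with constant coefficients and solve it by the standard integrating factor method, after first confirming that the relevant matrix inverses exist.

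First I would verify that $\M = \frac{\cN \lambda_1}{K}\I_p + \frac{1}{\cN}\sum_{k',i'}\h_{k',i'}\h_{k',i'}^\top$ is positive definite: the second summand is a sum of outer products and hence positive semi-definite, while the first summand is positive definite since $\lambda_1 > 0$. Therefore $\M$ is invertible, the matrix exponential $e^{-\theta^2 \M t}$ is well-defined, and (since both $\M^{-1}$ and $e^{\pm \theta^2 \M t}$ are analytic functions of the single symmetric matrix $\M$) these matrices commute with each other and with $\M$.

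Next I would apply the integrating factor $e^{\theta^2 \M t}$. Left-multiplying the ODE of Definition~\ref{def:ode_w} by this factor and using the commutativity just observed, the left-hand side consolidates as
\[ \dv{}{t}\bigl(e^{\theta^2 \M t}\w_q\bigr) = e^{\theta^2 \M t}\dv{\w_q}{t} + \theta^2 \M e^{\theta^2 \M t}\w_q = \theta^2 \beta^2\, e^{\theta^2 \M t}\u_q. \]
Integrating this identity from $0$ to $t$ and using $\int_0^t e^{\theta^2 \M s}\, ds = \tfrac{1}{\theta^2}\M^{-1}\bigl(e^{\theta^2 \M t} - \I\bigr)$, I obtain
\[ e^{\theta^2 \M t}\w_q(t) - \w_q(0) = \beta^2 \M^{-1}\bigl(e^{\theta^2 \M t} - \I\bigr)\u_q. \]

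Finally I would left-multiply by $e^{-\theta^2 \M t}$, which again commutes with $\M^{-1}$, and regroup the terms linear in $\u_q$ to arrive at the claimed closed form for $\w_q(t)$. There is no real obstacle in this proof; the only subtlety to state carefully is that the matrix exponential, the matrix $\M$, and $\M^{-1}$ all commute (being functions of a single symmetric operator), which is what lets the usual scalar integrating-factor calculation go through verbatim. The positive-definiteness of $\M$ also ensures the eventual fixed point $\beta^2\M^{-1}\u_q$ is approached as $t \to \infty$, which is a useful sanity check for the sign/placement of the constant term in the final formula.
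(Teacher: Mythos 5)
Your method -- positive definiteness of $\M$ from the ridge term plus a PSD sum of outer products, then the standard integrating factor $e^{\theta^2\M t}$ with the commutativity remark -- is exactly the intended argument; the paper itself gives no proof beyond declaring it immediate, so in approach there is nothing to add.

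However, your last step claims that multiplying your (correct) identity $e^{\theta^2\M t}\w_q(t)-\w_q(0)=\beta^2\M^{-1}\bigl(e^{\theta^2\M t}-\I\bigr)\u_q$ by $e^{-\theta^2\M t}$ and regrouping ``arrives at the claimed closed form,'' and that is not literally true. What your computation yields is
\[ \w_q(t)=e^{-\theta^2\M t}\left[\w_q(0)-\beta^2\M^{-1}\u_q\right]+\beta^2\M^{-1}\u_q, \]
whereas the theorem as printed has $+\beta^2\M^{-1}\u_q$ inside the bracket and $-\beta^2\M^{-1}\u_q$ outside; the two expressions agree only at $t=0$. Your own sanity check exposes the discrepancy: the equilibrium of the flow $\dv{\w_q}{t}=\theta^2\beta^2\u_q-\theta^2\M\w_q$ is $\beta^2\M^{-1}\u_q$, which is the $t\to\infty$ limit of your formula, while the printed formula tends to $-\beta^2\M^{-1}\u_q$. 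The statement in the paper thus appears to carry a sign typo (compare the correctly signed analogous solution $\w(t)=e^{-\beta\M t}\w(0)+\M^{-1}\left[\I-e^{-\beta\M t}\right]\z$ obtained for the $\relu$-gate ODE later in the paper); your derivation is the right one, but you should state the corrected formula explicitly and flag the sign discrepancy rather than assert agreement with the statement as written.
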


The proof of the above is given in Appendix \ref{proof:ode_peel_weight}.

\begin{theorem}\label{thm:srel_peel} 
We imagine the loss function in equation \ref{eq:loss_peel} to be over a linear predictor with weight matrix $\W$ whose rows are evolving as given in Theorem \ref{thm:ode_peel}. Then invoking the definition of $\srel$, as in Definition \ref{def:srel}, for $\x = (k,i)^{th}$-data point and $\x' = (c,j)^{th}-$data point we get, 
\begin{align}\label{eq:srel_peel}
\nonumber &\srel^2(t)_{(c,j),(k,i)} = \frac{ \norm{\W^+ \h_{c,j} - \W \h_{c,j}}^2}{\norm{\W^+ \h_{k,i} - \W \h_{k,i}}^2}\\
&= \frac{{\scriptsize \norm{\h_{c,j}^\top \cdot  \h_{k,i} }^2 {-} 2 \ip{\h_{k,i}}{\h_{c,j}} \cdot   \ip{\T_{k,i} \h_{c,j}}{\w_k}  +   \sum_{q=1}^K \ip{\T_{k,i} \h_{c,j}}{\w_q}^2}}{{\scriptsize \norm{  \h_{k,i} }^4 {-} 2 \tilde{h}_{k,i} \cdot \norm{\h_{k,i}}^2 \cdot \ip{\h_{k,i}}{\w_k}  + \tilde{h}_{k,i}^2  \cdot \sum_{q=1}^K \ip{\h_{k,i}}{\w_q}^2}} 
\end{align}
where we have defined $\tilde{h}_{k,i} \coloneqq \left [ \frac{\cN \lambda_1}{K} + \norm{\h_{k,i}}^2 \right ]$ and the matrices, $\R^{p \times p} \ni \T_{k,i} \coloneqq  \frac{\cN \lambda_1}{K} \I_p +  \h_{k,i} \cdot \h_{k,i}^\top$
\end{theorem}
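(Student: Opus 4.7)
The plan is to directly evaluate both the numerator and denominator of $\srel^2$ by reusing the per-sample gradient already computed just before Definition \ref{def:ode_w}, and then to recognize the matrix $\T_{k,i}$ and scalar $\tilde{h}_{k,i}$ inside the resulting sums of squares. Since the $\w_q$ here are vector-valued predictors of the vector $\e_k$, the $|\cdot|$ in Definition \ref{def:srel} is interpreted as an Euclidean norm, which is exactly what the squared-norms in the theorem statement reflect.

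First I would record the per-sample gradient derived just above Definition \ref{def:ode_w}, namely
\[
\pdv{\cL(\W;\h_{k,i})}{\w_q} = -(\delta_{k,q} - \ip{\w_q}{\h_{k,i}})\h_{k,i} + \mu\, \w_q,
\]
where $\mu := \cN\lambda_1/K$. Absorbing the learning rate (which will cancel in the ratio defining $\srel^2$), the one-step update at the sampled point $(\h_{k,i},\e_k)$ takes the form
\[
\w_q^+ - \w_q = (\delta_{k,q} - \ip{\w_q}{\h_{k,i}})\,\h_{k,i} \;-\; \mu\, \w_q .
\]
Next I would compute the $q$-th coordinate of the prediction change at a test point $\h_{c,j}$ by taking the inner product of the above with $\h_{c,j}$, and observe the key algebraic identity $\T_{k,i}\h_{c,j} = \mu \h_{c,j} + \ip{\h_{k,i}}{\h_{c,j}}\h_{k,i}$, so that
\[
\ip{\T_{k,i}\h_{c,j}}{\w_q} = \mu\ip{\w_q}{\h_{c,j}} + \ip{\h_{k,i}}{\h_{c,j}}\ip{\w_q}{\h_{k,i}} .
\]
Squaring the coordinate expression and summing over $q$, the $\delta_{k,q}$-term produces the leading $\ip{\h_{k,i}}{\h_{c,j}}^2$ and isolates $\w_k$ in the cross term, while the remaining cross and quadratic contributions regroup precisely into $-2\ip{\h_{k,i}}{\h_{c,j}}\ip{\T_{k,i}\h_{c,j}}{\w_k}$ and $\sum_q \ip{\T_{k,i}\h_{c,j}}{\w_q}^2$, matching the numerator of \ref{eq:srel_peel}.

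Finally, the denominator is obtained as the specialization $\h_{c,j} \mapsto \h_{k,i}$ of the same calculation. Under this substitution $\T_{k,i}\h_{k,i} = (\mu + \norm{\h_{k,i}}^2)\h_{k,i} = \tilde{h}_{k,i}\h_{k,i}$, so $\ip{\T_{k,i}\h_{k,i}}{\w_q} = \tilde{h}_{k,i}\ip{\h_{k,i}}{\w_q}$, reproducing the denominator in \ref{eq:srel_peel}. Forming the ratio $\norm{\W^+\h_{c,j}-\W\h_{c,j}}^2/\norm{\W^+\h_{k,i}-\W\h_{k,i}}^2$ then gives the claim, with any common learning-rate prefactor cancelling. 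The main obstacle is purely organizational: carefully expanding $\sum_q[(\delta_{k,q} - \alpha_q)a - \mu \beta_q]^2$ with $a = \ip{\h_{k,i}}{\h_{c,j}}$, $\alpha_q = \ip{\w_q}{\h_{k,i}}$, $\beta_q = \ip{\w_q}{\h_{c,j}}$, and matching the six cross terms back into the two compact forms $\ip{\T_{k,i}\h_{c,j}}{\w_k}$ and $\ip{\T_{k,i}\h_{c,j}}{\w_q}^2$; once the identity $\T_{k,i}\h_{c,j} = \mu\h_{c,j} + a\,\h_{k,i}$ is used, this reduces to a straightforward collection of terms.
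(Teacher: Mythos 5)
Your proposal is correct and follows essentially the same route as the paper's own proof: write down the single-sample (fictitious) update $\w_q^+ - \w_q = \eta(\delta_{k,q}\h_{k,i} - \T_{k,i}\w_q)$, take inner products with $\h_{c,j}$, sum the squares over the $K$ rows, collect terms into $\ip{\T_{k,i}\h_{c,j}}{\w_k}$ and $\sum_q\ip{\T_{k,i}\h_{c,j}}{\w_q}^2$, and obtain the denominator by specializing $\h_{c,j}\mapsto\h_{k,i}$ via $\T_{k,i}\h_{k,i}=\tilde{h}_{k,i}\h_{k,i}$, with the $\eta^2$ prefactor cancelling in the ratio. The only cosmetic difference is that you keep $\mu=\cN\lambda_1/K$ explicit and regroup at the end, whereas the paper expands directly in terms of $\T_{k,i}$; the algebra is identical.
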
 
We note that the time dependence of the above expression is only via the $\{ \w_i \in \R^p \mid i =1,\ldots, K\}$, the row vectors of $\W$, which have been defined to be evolving as given in Theorem  \ref{thm:ode_peel}.
In the following subsection, we shall see an explicit instantiation of the above formula leading to a demonstration that non-trivial dynamics of $\srel$ is possible for even linear predictors when a regularization is used. 
\begin{proof}[{\bf of Theorem \ref{thm:srel_peel}}]
We can read off, that corresponding to sampling the $(k,i)^{th}-$data point and a choice of step-length $\eta$, for the $q^{th}-$row of $\W$ we would have, 
\[ \w_q^{+} = \w_q - \eta \cdot \pdv{\cL(\W;\h_{k,i})}{\w_q}  = \w_q - \eta \cdot \left ( -\delta_{k,q} \h_{k,i} + \T_{k,i} \w_q  \right ) \] 
As a consequence of the fictitious update above the norm squared of the change in the output at the training data point $(c,j)$ for some $c \in \{1,\ldots,K\}$ and $j \in \{1,\ldots,n_c\}$ is given as,
\begin{align*}
    &\norm{\W^+ \h_{c,j} - \W \h_{c,j}}^2
    =\sum_{q=1}^K \left ( \ip{\w_q^+}{\h_{c,j}} - \ip{\w_q}{\h_{c,j}} \right )^2\\
    =& \eta^2 \cdot \sum_{q=1}^K \ip{-\delta_{k,q} \h_{k,i} + \T_{k,i} \w_q}{\h_{c,j}}^2\\
    =& \eta^2 \cdot \sum_{q=1}^K  \left ( -\delta_{k,q} \ip{\h_{k,i}}{\h_{c,j}} + \h_{c,j}^\top \T_{k,i} \w_q  \right )^2\\
    =& \eta^2 \cdot \sum_{q=1}^K \left \{ \delta_{k,q} \left (  \ip{\h_{k,i}}{\h_{c,j}}^2 - 2 \cdot \ip{\h_{k,i}}{\h_{c,j}} \cdot  \h_{c,j}^\top \T_{k,i} \w_q \right )  \right \}\\
    +& \eta^2 \cdot \sum_{q=1}^K \left ( \h_{c,j}^\top \T_{k,i} \w_q \right )^2\\ 
    = &\eta^2 \left ( \ip{\h_{k,i}}{\h_{c,j}}^2 - 2 \cdot \ip{\h_{k,i}}{\h_{c,j}} \cdot  \h_{c,j}^\top \T_{k,i} \w_k  \right )\\
    +& \eta^2 \h_{c,j}^\top \T_{k,i}\left (  \sum_{q=1}^K \w_q \w_q^\top \right ) \T_{k,i}^\top \h_{c,j}\\
    =& \eta^2 \norm{\h_{c,j}^\top \cdot  \h_{k,i} }^2  -2\eta^2 \ip{\h_{k,i}}{\h_{c,j}} \cdot \ip{\T_{k,i} \h_{c,j}}{\w_k}\\
    +& \eta^2  \sum_{q=1}^K \ip{\T_{k,i} \h_{c,j}}{\w_q}^2\\
\end{align*}

In the last line above we have recalled that $\T_{k,i}$ is a symmetric matrix. 

Noting that $\T_{k,i} \h_{k,i} = \left [ \frac{\cN \lambda_1}{K} + \norm{\h_{k,i}}^2 \right ] \h_{k,i}$, we can read off from the above the final expression given in the theorem statement.  
\end{proof}




\subsection{Experimental Demonstration of Equation \ref{eq:srel_peel} on Random Neural Features}


For inter-class, we choose to work with $2$ classes and hence $K = 2$ (and $\W \in \R^{2 \times p}$ in the experiments) and we will always have $n_k = 10^3$ feature vectors per class and in all our experiments we fix $\lambda_1 = 1, \theta = 0.001$. For an experimental demonstration of the non-trivial time dynamics of equation \ref{eq:srel_peel}, we need to instantiate the $\{ \h_{k,i} \in \R^p \mid k = 1,\ldots, K ~\& i=1,\ldots,n_k \}$ feature vectors via a choice of model. We create the $\h_{k,i}$s as the outputs of a one hidden layer neural net mapping, $\net : \R^{\rm dim} \ni \x \mapsto \relu(\W_{r}\x) \in \R^p$ for some ${\rm dim} \in \Z^+$ whose $\W_r \in \R^{p \times {\rm dim}}$ matrix is randomly sampled from a Gaussian distribution. The input $\x$s to the above net are sampled from either of the two distributions, 

\begin{itemize}
    \item An isotropic Gaussian distribution whose along each coordinate the mean is $1$ and variance is $2$. These data are labelled with $\e_1 = (1,0)$
    \item An isotropic Gaussian distribution whose along each coordinate the mean is $9$ and variance is $1$. These data are labelled with $\e_2 = (0,1)$
\end{itemize}




\begin{figure}[htbp!]
    \centering
    \includegraphics[scale=0.3]{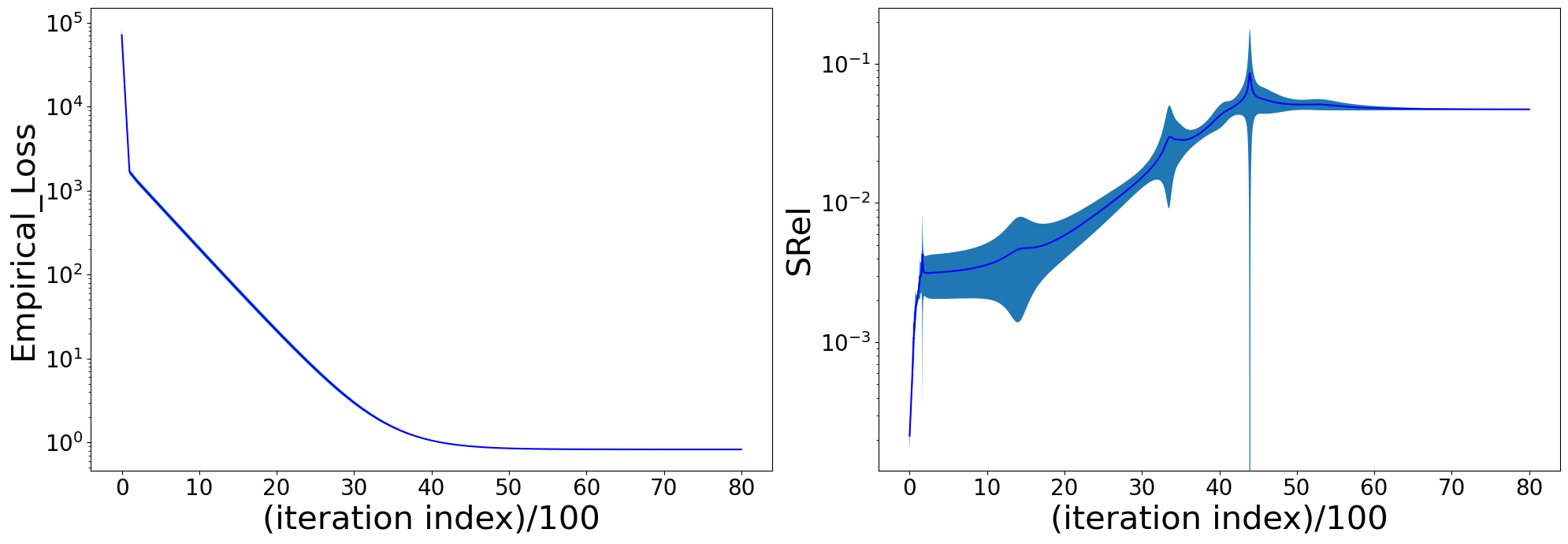}
    \caption{Variation of Empirical Loss (left) and interclass $S_{\rm rel}$ (right) respectively with the number of iterations for  $p=10, ~{\rm dim} = 100$}
    \label{fig:p_10_dim_100}
\end{figure}

\begin{figure}[htbp!]
    \centering
    \includegraphics[scale=0.3]{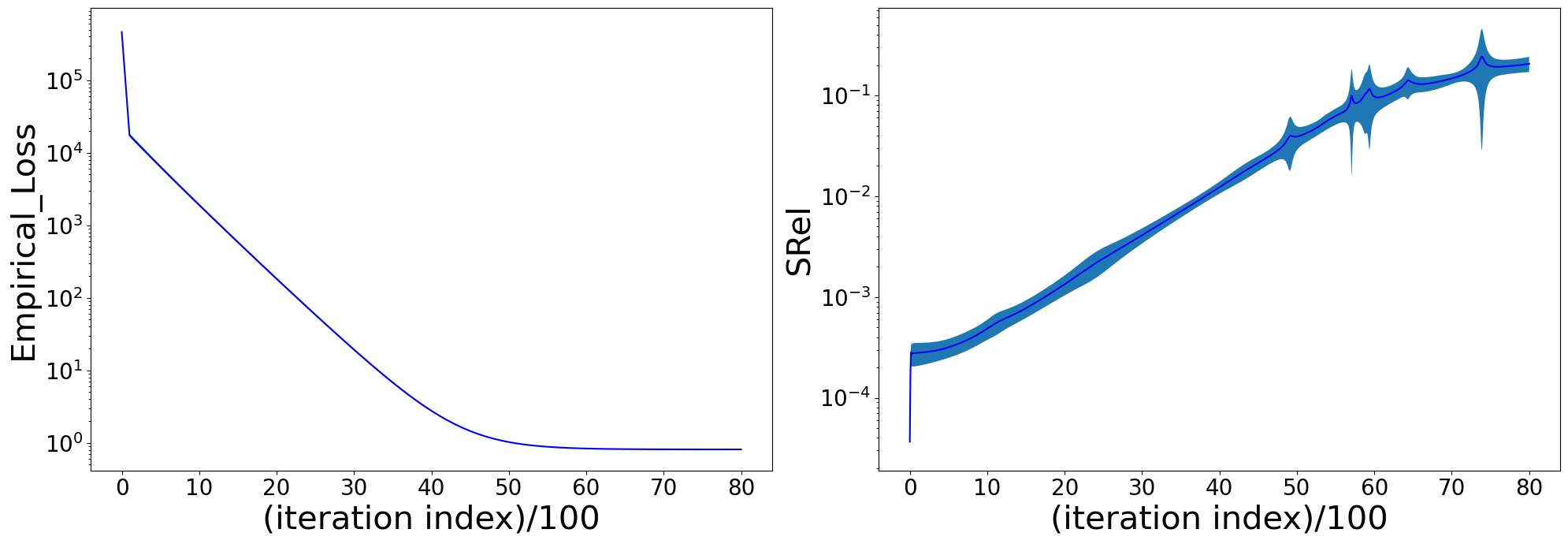}
    \caption{Variation of Empirical Loss (left) and interclass $S_{\rm rel}$ (right) respectively with the number of iterations for  $p=100, ~{\rm dim} = 100$}
    \label{fig:p_100_dim_100}
\end{figure}

\begin{figure}[htbp!]
    \centering
    \includegraphics[scale=0.3]{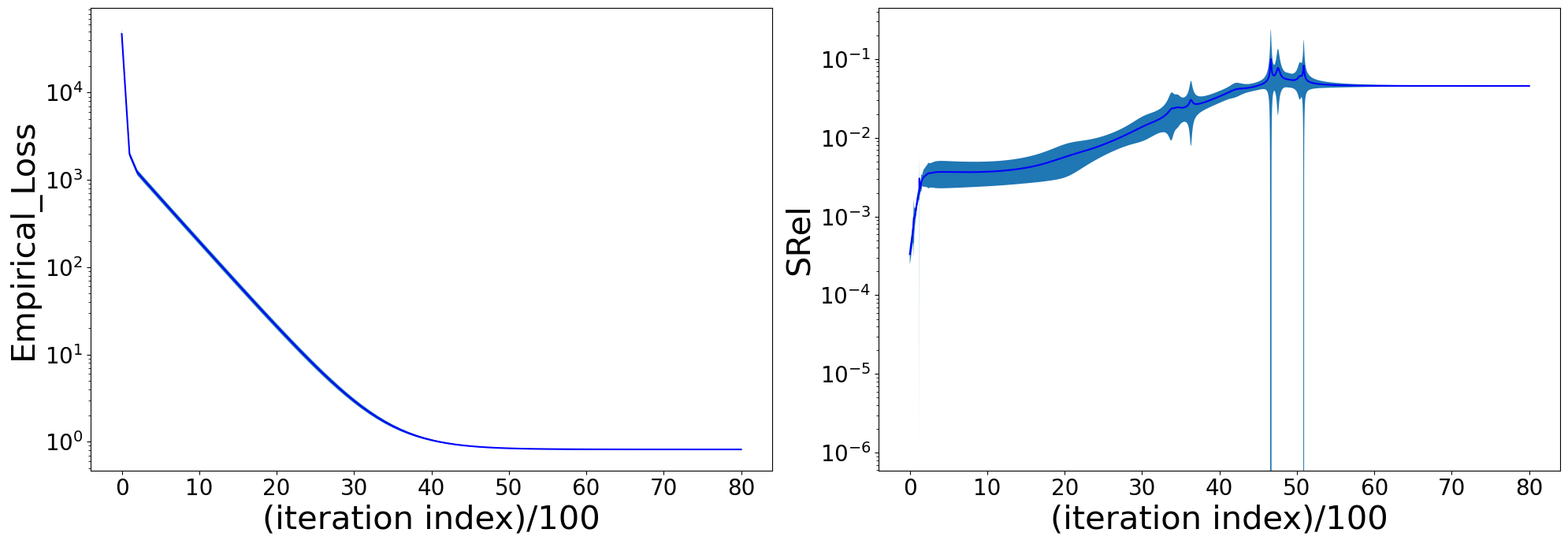}
    \caption{Variation of Empirical Loss (left) and intraclass $S_{\rm rel}$ (right) respectively with the number of iterations for  $p=10, ~{\rm dim} = 100$}
    \label{fig:p_10_dim_100_intraclass}
\end{figure}

\begin{figure}[htbp!]
    \centering
    \includegraphics[scale=0.3]{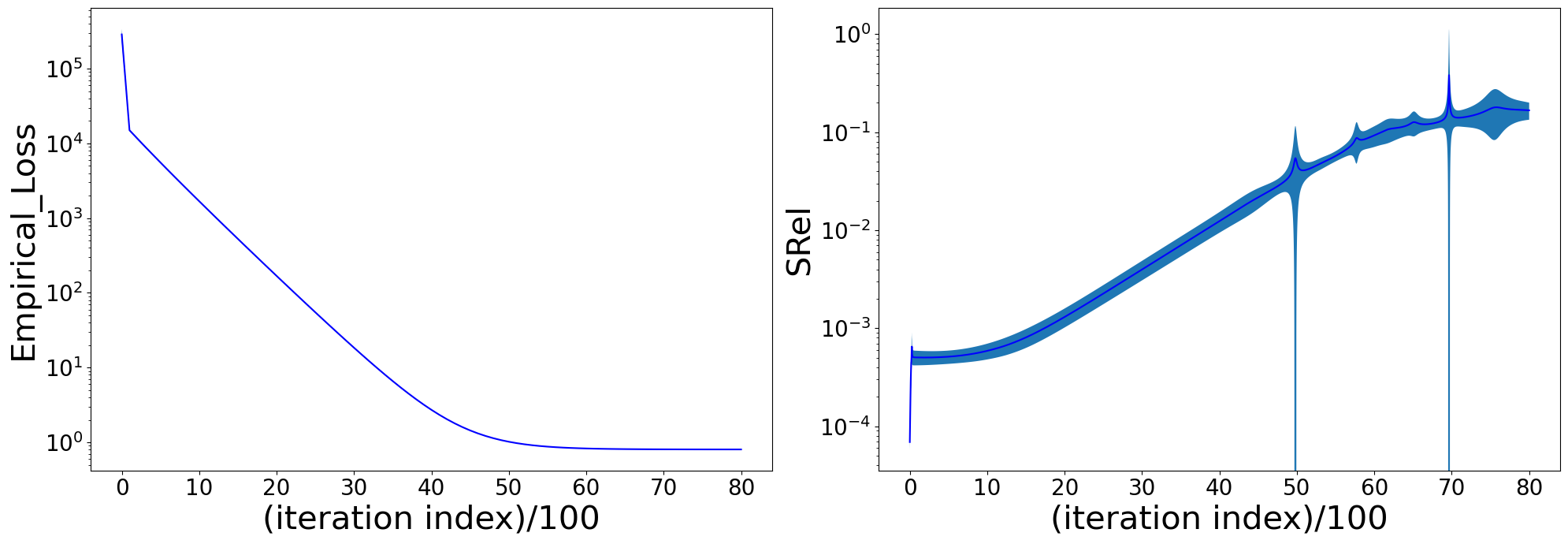}
    \caption{Variation of Empirical Loss (left) and intraclass $S_{\rm rel}$ (right) respectively with the number of iterations for  $p=100, ~{\rm dim} = 100$}
    \label{fig:p_100_dim_100_intraclass}
\end{figure}

Further, in computing equation \ref{eq:srel_peel}, we choose $\h_{k,i}$ and $\h_{c,j}$ as the outputs of the above random net $\net$ for an arbitrary pair of data points generated as above with labels $\e_1$ and $\e_2$ respectively. Whereas for inter-class, we consider data-points with labels $\e_1$ only.

In the right side plots of Figures \ref{fig:p_10_dim_100} and \ref{fig:p_100_dim_100} we see the time dynamics of inter-class $\srel$, while in Figures \ref{fig:p_10_dim_100_intraclass} and \ref{fig:p_100_dim_100_intraclass} we see the time dynamics of intra-class $\srel$, as given in equation \ref{eq:srel_peel} for the above setup for different values of $p$. The plots are averaged over multiple time evolutions starting from random initial values for the two rows of the $\W$ matrix in equation \ref{eq:srel_peel}, and over $20$ random samples of the data-points for smoothening.

From the figures we confirm the following two salient points, 

\begin{itemize}
    \item Firstly, we note the emergence of  the hypothesized two phase behaviour of $\srel$, that there is always an initial time interval and a semi-infinite late time phase s.t the value of $\srel$ in the later is always larger than during most of the former. 
    \item Alongside the evolution of $\srel$ in all these figures we have also shown the evolution of the empirical loss as given in equation \ref{eq:loss_peel} for the above setup and evaluated once every $100$ iterations. 
    
    Its clear from the figures that the initial phase when the empirical loss is falling rapidly is always the time when $\srel$ is rising rapidly. And in late times when the empirical loss has approximately stabilized, $\srel$ too has stabilized. 
\end{itemize}

We note that the dynamics of $\srel$ that emerged from the analytically tractable model proposed here is in close correspondence to what was seen in regression experiments with deep-learning as was exhibited in Section \ref{exp:reg}.

\section{Analytic Tracking of $S_{\rm rel}$ for Training a $\relu$ Gate by an ODE}\label{sec:relu_ode}

In \cite{mukherjee2020study}, the following algorithm was shown to train a $\relu$ gate (converge to a global minima of the risk) in linear time using minimal distributional conditions if the labels are exactly realizable by a ground-truth weight $\w_*$ of the $\relu$ gate.  

\begin{algorithm}[H]
	\caption{Modified S.G.D. for training a $\relu$ gate in the realizable setting}
	\label{dadushrelu}
	\begin{algorithmic}[1]
		\State {{\bf Input:} Sampling access to a distribution ${\cal D}$ on $\R^n$.}
		\State {{\bf Input:} Oracle access to labels $\R \ni y = \relu(\w_*^{\top}\x)$ when queried with some $\x \in \R^n$}  
		\State {{\bf Input:} An arbitrarily chosen starting point of $\w_1 \in \R^n$ and a step-length $\eta >0$}
		\For{$t = 1,\ldots$}
		\State {Sample $\x_t \sim {\cal D}$ and query the  oracle with it.} 
		\State {The oracle replies back with $y_t = \relu(\w_*^{\top}\x_t)$}
		\State {Form the gradient (proxy), 
		\[\g_t := -\mathbf{1}_{y_t > 0} (y_t - \w_t^\top \x_t)\x_t\]}
		\State {$\w_{t+1} := \w_t - \eta \g_t$}
		\EndFor
	\end{algorithmic}
\end{algorithm}

For some $\beta >0$, a reasonable choice for the O.D.E. form of the Algorithm \ref{dadushrelu} from \cite{mukherjee2020study} can be,  

\begin{definition}
\begin{equation}\label{relu_ode}
\begin{split}
&\dv{\w(t)}{t}\\
= &\beta \cdot \E_{\x \distas {\cal D}} \left [  \ind{ \ip{\w_*}{\x} >0} \Big (\max\{0, \ip{\w_*}{\x} \} - \ip{\w(t)}{\x} \Big ) \x \right ] 
\end{split}
\end{equation}
\end{definition}

\begin{theorem}\label{thm:relu_ODE}
If the matrix $\M$ defined as $\R^{ n \times n} \ni \M \coloneqq \E \left [ \ind{\w_*^\top \x > 0} \x \x^\top \right ]$ is P.D, then for the ODE given in equation \ref{relu_ode} we have, $\lim_{t \rightarrow \infty} \w(t) = \w_*$ 
\end{theorem}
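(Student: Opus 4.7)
The proof plan is fundamentally to reduce the given ODE to a linear matrix ODE in the error variable and then invoke spectral properties of $\M$.

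First I would simplify the right-hand side of \eqref{relu_ode}. Since the integrand carries the factor $\ind{\ip{\w_*}{\x} > 0}$, the expression $\max\{0, \ip{\w_*}{\x}\}$ can be replaced by $\ip{\w_*}{\x}$ inside the expectation, turning the ODE into
\[
\dv{\w(t)}{t} = \beta \cdot \E_{\x \distas \cD}\!\left[\ind{\ip{\w_*}{\x} > 0}\,\ip{\w_* - \w(t)}{\x}\,\x\right].
\]
Pulling $\w_* - \w(t)$ out of the expectation (since it does not depend on $\x$) and recognizing the outer product $\x\x^\top$ gives
\[
\dv{\w(t)}{t} = \beta\,\M\,(\w_* - \w(t)),
\]
with $\M$ as defined in the theorem.

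Next, I would introduce the error vector $\e(t) \coloneqq \w(t) - \w_*$. Because $\w_*$ is constant in $t$, we have $\dv{\e}{t} = \dv{\w}{t} = -\beta\,\M\,\e(t)$, a standard linear autonomous ODE whose unique solution is
\[
\e(t) = e^{-\beta\,\M\,t}\,\e(0).
\]
Here the matrix exponential is well-defined because $\M$ is a fixed symmetric matrix (the expectation of a symmetric outer product).

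The last step uses the hypothesis that $\M$ is positive definite. Let $\lambda_{\min} > 0$ denote the smallest eigenvalue of $\M$; then the operator norm bound $\lVert e^{-\beta \M t}\rVert \leq e^{-\beta \lambda_{\min} t}$ (obtainable by diagonalizing $\M$ via its spectral decomposition) gives $\lVert \e(t)\rVert \leq e^{-\beta \lambda_{\min} t}\,\lVert \e(0)\rVert$, which tends to $0$ as $t \to \infty$. Hence $\w(t) \to \w_*$. The main conceptual observation is merely that the ReLU indicator is precisely what makes the nonlinear dynamics reduce to a linear ODE in $\e$; once that is recognized, no genuine obstacle remains, and the hardest aspect is only the bookkeeping required to justify pulling the indicator through the expectation and recognizing the outer-product structure that yields $\M$.
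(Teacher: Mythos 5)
Your proof is correct and follows essentially the same route as the paper: both reduce the dynamics to a constant-coefficient linear ODE and use the positive definiteness of $\M$ to conclude convergence. The only cosmetic difference is that you exploit the identity $\M\w_* = \E\left[\ind{\w_*^\top \x > 0}\max\{0,\w_*^\top\x\}\x\right]$ at the outset to get a homogeneous ODE in the error $\w(t)-\w_*$, whereas the paper solves the nonhomogeneous equation via an integrating factor and invokes the same identity at the end to identify the limit $\M^{-1}\z$ with $\w_*$.
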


\begin{proof}{{\bf of Theorem} \ref{thm:relu_ODE}} 
We define the following quantity towards writing the proof succinctly,  
\begin{gather}\label{Mz} 
    \R^n \ni \z \coloneqq \E \left [ \ind{\w_*^\top \x > 0} \max \{ 0, \w_*^\top \x \} \x  \right ]
\end{gather}
Then the equation \ref{relu_ode} can be written as,
\begin{equation}
\begin{split}
\dv{\w(t)}{t} + \beta \M\w = \beta \z \implies \dv{(e^{\beta \M t} \w)}{t} = e^{\beta \M t} \beta \z   
\end{split}
\end{equation} 

The above can be integrated to get,  $\w (t) = e^{-\beta \M t} \w(0) + \M ^{-1} \left [ \I - e^{-\beta \M t} \right ] \z$

From the above solution it follows for the given assumptions about $\M$ that, $\lim_{t \rightarrow \infty} \w(t) = \M^{-1}\z$. Now note that it follows from the definitions that $\M \w_* = \z$ and from the invertibility of $\M$ the conclusion follows.  
\end{proof} 

Now we can imagine the above ODE dynamics to be coming from a ``risk function" (${\cal R}$) and a loss function ($\ell$) s.t we have, 

\begin{equation}
\begin{split}
\dv{\w(t)}{t} &= - \nabla_{\w} {\cal R} = \beta \{ \z - \M \w \}\\
\nabla_{\w(t)} \ell(\w(t),\x) &= -\beta \ind{\w_*^\top \x >0} \left [ \max\{ 0, \w_*^\top \x\} - \w(t)^\top \x \right ] \x 
\end{split}
\end{equation} 

The above motivates the following definition as needed for defining $S_{\rm rel}$, 

\begin{equation}
\begin{split}
\w^+(t) &= \w(t) - \eta \nabla_{\w(t)} \ell (\w(t),\x)\\
&= \w(t) + \eta \beta \ind{\w_*^\top \x >0} \left [ \max \{0, \w_*^\top \x\} - \w(t)^\top \x \right ] \x 
\end{split}
\end{equation} 

Now we observe the following two distinctive behaviours of the above quantity,  

\begin{itemize}
    \item In Lemma \ref{lem:up_srel_relu} we will see a simple {\it time independent} sufficient criteria to emerge for a candidate $\x'$ i.e $\ip{\x'}{\x}$ being small, which would make the change in the prediction at $\x'$ to be small because of this fictitious update of $\w(t)^+$ based on a $\x$ which makes an acute angle with $\w_*$.
    \item In contrast at late times, we will show in Lemma \ref{lem:low_srel_relu} that $S_{\rm rel}$ cannot be arbitrarily small and we see that an easy sufficient condition emerges to ensure that $S_{\rm rel}$ is large i.e  $\abs{\ip{\x'}{\x}}$ being large for a fixed $\x$ and for both the points making an acute angle with $\w_*$. 
\end{itemize}

\begin{lemma}\label{lem:up_srel_relu}
\begin{equation*}
\begin{split}
&\abs{\max \{0, \ip{\w^+(t)}{\x'} \} - \max \{ 0, \ip{\w(t)}{\x'} \} }\\
&\leq \eta \beta \ind{\ip{\w_*}{\x} >0}\abs{\ip{\x}{\x'}} \abs{\ip{e^{-\beta \M t }(\w(0)-\w_*)}{\x}}
\end{split}
\end{equation*} 
\end{lemma}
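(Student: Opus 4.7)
The plan is to reduce the quantity of interest to a linear change in weights via Lipschitz continuity of $\max\{0,\cdot\}$, and then to use the closed-form solution of the ODE derived in the proof of Theorem \ref{thm:relu_ODE} to rewrite the residual $\w(t)-\w_*$ explicitly in terms of $e^{-\beta \M t}(\w(0)-\w_*)$.

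First, I would invoke the fact that the ReLU function is $1$-Lipschitz, i.e.\ $|\max\{0,a\}-\max\{0,b\}|\leq |a-b|$ for all reals $a,b$. Applied with $a=\ip{\w^+(t)}{\x'}$ and $b=\ip{\w(t)}{\x'}$, this immediately yields
\[
\bigl|\max\{0,\ip{\w^+(t)}{\x'}\}-\max\{0,\ip{\w(t)}{\x'}\}\bigr|\leq \bigl|\ip{\w^+(t)-\w(t)}{\x'}\bigr|.
\]

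Second, I would substitute the explicit formula for $\w^+(t)$ stated just before the lemma, namely
\[
\w^+(t)-\w(t)=\eta\beta\,\ind{\ip{\w_*}{\x}>0}\bigl[\max\{0,\ip{\w_*}{\x}\}-\ip{\w(t)}{\x}\bigr]\x.
\]
Taking inner product with $\x'$, pulling the scalar out, and using $|\ip{\x}{\x'}|$ for the $\x$-factor, we get the bound
\[
\eta\beta\,\ind{\ip{\w_*}{\x}>0}\,\bigl|\max\{0,\ip{\w_*}{\x}\}-\ip{\w(t)}{\x}\bigr|\cdot\bigl|\ip{\x}{\x'}\bigr|.
\]
On the support of the indicator we have $\ip{\w_*}{\x}>0$ and hence $\max\{0,\ip{\w_*}{\x}\}=\ip{\w_*}{\x}$, so the middle factor collapses to $|\ip{\w_*-\w(t)}{\x}|$.

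Third, I would bring in the closed-form solution derived inside the proof of Theorem \ref{thm:relu_ODE}. That proof shows $\w(t)=e^{-\beta \M t}\w(0)+\M^{-1}[\I-e^{-\beta \M t}]\z$, and since $\M\w_*=\z$ this simplifies to $\w(t)=e^{-\beta \M t}\w(0)+[\I-e^{-\beta \M t}]\w_*$, so
\[
\w_*-\w(t)=-e^{-\beta \M t}(\w(0)-\w_*).
\]
Taking inner product with $\x$ and applying absolute values removes the sign, giving $|\ip{\w_*-\w(t)}{\x}|=|\ip{e^{-\beta \M t}(\w(0)-\w_*)}{\x}|$. Chaining the three bounds yields the claim.

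I do not anticipate a genuine obstacle: every step is either the $1$-Lipschitz property of ReLU, direct substitution from the definition of $\w^+(t)$, or the already-derived solution of the linear ODE. The only point that requires any care is the (harmless) use of the indicator to replace $\max\{0,\ip{\w_*}{\x}\}$ by $\ip{\w_*}{\x}$ so that the difference $\ip{\w_*-\w(t)}{\x}$ appears cleanly and can then be expressed through the $e^{-\beta \M t}$ factor.
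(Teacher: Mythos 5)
Your proposal is correct and follows essentially the same route as the paper's proof: bound the difference of ReLU outputs by the $1$-Lipschitz property, substitute the explicit update $\w^+(t)-\w(t)$ to extract the factor $\eta\beta\,\ind{\ip{\w_*}{\x}>0}\abs{\ip{\x}{\x'}}$, and then use the integrated ODE solution together with $\M\w_*=\z$ (and the indicator to replace $\max\{0,\w_*^\top\x\}$ by $\w_*^\top\x$) to rewrite the residual as $\abs{\ip{e^{-\beta \M t}(\w(0)-\w_*)}{\x}}$. The only difference is cosmetic ordering — you simplify the $\max$ via the indicator before substituting the ODE solution, while the paper does it after — which does not change the argument.
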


Proof of the above has been given in Appendix \ref{lemma:relu_2}.

Hence in words it follows that if $\ip{\w_*}{\x}>0$ then for all times, $\abs{\ip{\x}{\x'}}$ being small is sufficient condition for the change induced at $\x'$ to be small when the update on $\w(t)$ is induced by sampling the data $\x$.

\begin{theorem}\label{lem:low_srel_relu}
If $\ip{\w_*}{\x'} >0$ and $\ip{\w_*}{\x} >0$, then $S_{\rm rel}$ has the following lower bound for late times,
\[ \exists ~t_* \text{ s.t } \frac{  \abs{\ip{\x}{\x'}} }{\norm{\x}^2 } \leq S_{\rm rel}, \forall t > t_* \]
\end{theorem}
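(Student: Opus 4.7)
The plan is to show that under the two positivity hypotheses, at sufficiently late times the $\relu$ gate is in its linear regime at both $\x$ and $\x'$, for both the weight $\w(t)$ and the fictitiously updated weight $\w^+(t)$. Once this linearization takes effect, the common scalar factor appearing in $\w^+(t) - \w(t)$ cancels between the numerator and the denominator of $S_{\rm rel}$, and one in fact obtains the claimed lower bound as an exact equality.

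From the proof of Theorem \ref{thm:relu_ODE} I would read off that $\w(t) - \w_* = e^{-\beta\M t}(\w(0) - \w_*)$, which gives $\w(t) \to \w_*$ exponentially. Since $\ip{\w_*}{\x}>0$ and $\ip{\w_*}{\x'}>0$, this provides a time $t_1$ after which $\ip{\w(t)}{\x}$ and $\ip{\w(t)}{\x'}$ are both strictly positive, say bounded below by half of their asymptotic values. Using $\ip{\w_*}{\x}>0$, the definition of $\w^+(t)$ reduces to $\w^+(t) = \w(t) + \eta\beta(\w_*^\top \x - \w(t)^\top \x)\x$, so the perturbation $\w^+(t) - \w(t)$ has norm of order $\norm{e^{-\beta\M t}(\w(0)-\w_*)}$ and decays to zero. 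Therefore one can pick $t_* \geq t_1$ such that for all $t > t_*$ the perturbation is too small to flip the sign of $\ip{\cdot}{\x}$ or $\ip{\cdot}{\x'}$, giving $\ip{\w^+(t)}{\x}>0$ and $\ip{\w^+(t)}{\x'}>0$ as well.

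For $t > t_*$ the $\max\{0,\cdot\}$ in the $\relu$ predictor drops out of both the numerator and the denominator of $S_{\rm rel}$, and a direct computation gives
\[ f(\x',\w^+) - f(\x',\w) = \eta\beta(\w_*^\top \x - \w(t)^\top \x)\ip{\x}{\x'}, \]
\[ f(\x,\w^+) - f(\x,\w) = \eta\beta(\w_*^\top \x - \w(t)^\top \x)\norm{\x}^2. \]
Taking absolute values and dividing, the scalar factor $\eta\beta(\w_*^\top \x - \w(t)^\top \x)$ cancels (it is nonzero by the standing assumption in Definition \ref{def:srel} that the denominator of $S_{\rm rel}$ does not vanish), yielding $S_{\rm rel} = \abs{\ip{\x}{\x'}}/\norm{\x}^2$ for every $t > t_*$, which is at least the stated lower bound.

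The only nontrivial step is verifying the \emph{no sign flip} claim, namely that the fictitious gradient step at $\w(t)$ does not push the ReLU back into its flat regime at either the test point $\x'$ or the sampled point $\x$. This is purely a quantitative continuity argument using the exponential decay $e^{-\beta\M t}$: since $\w(t)\to\w_*$ and both $\ip{\w_*}{\x}$ and $\ip{\w_*}{\x'}$ are bounded away from zero by hypothesis, $t_*$ can be chosen so large that $\eta\beta\abs{\w_*^\top\x - \w(t)^\top\x}\norm{\x}\cdot\max\{\norm{\x},\norm{\x'}\}$ is dominated by those positive gaps. Everything else reduces to elementary algebra.
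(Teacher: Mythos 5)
Your proof is correct and takes essentially the same route as the paper's: both rely on the closed-form solution $\w(t)-\w_* = e^{-\beta \M t}(\w(0)-\w_*)$ together with the positivity hypotheses to show that for all sufficiently late times no sign flip occurs, so the $\max\{0,\cdot\}$ drops out and the ratio reduces to $\abs{\ip{\x}{\x'}}/\norm{\x}^2$. The only (minor) difference is that the paper upper-bounds the denominator with the Lipschitz-type estimate of Lemma \ref{lem:up_srel_relu} instead of linearizing at $\x$ as well, so your argument in fact delivers the slightly stronger conclusion that $S_{\rm rel}$ equals $\abs{\ip{\x}{\x'}}/\norm{\x}^2$ for all $t>t_*$.
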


Proof of the above has been given in Appendix \ref{theorem:relu_last}. 

If we think of $\x'$ and $\x$ being in the same class given that they lie on the same side of the hyperplane defined by $\w_*$, it follows that later in the training $S_{\rm rel}(\x,\x')$ can't be arbitrarily small for this $\relu$ training algorithm. 


Further in support of the above model of training a $\relu$ gate and the lower bound on $S_{\rm rel}$ obtained therein we empirically compare it to the dynamics of $S_{\rm rel}$ when using usual mini-batch S.G.D. to minimize the standard $\ell_2-$risk on it in the realizable setting. In particular consider the risk corresponding to $\w_* = (1,1,\ldots,1) \in \R^{10}$. In Fig. \ref{fig:sgd_relu}, we choose $\x = (10,10,\ldots,10)$ and $\x' = (\sqrt{200},\sqrt{200},\ldots,\sqrt{200})$, and we plot the dynamics of $S_{\rm rel}(\x,\x')$ over iterations and averaged over multiple S.G.D. runs - all of which converged to $\w_*$ for all practical purposes. 

\begin{figure}[h!]
    \centering
    \includegraphics[scale = 0.6]{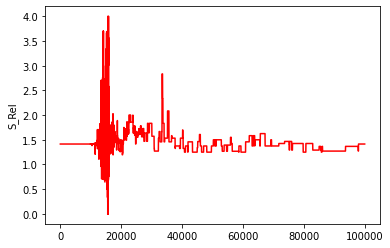}
	 \caption{Tracking $S_{\rm rel}$ (averaged over multiple runs of the algorithm) between a chosen pair of points for training a $\relu$ gate via mini-batch S.G.D. in $10$ dimensions in the realizable setting.} \label{fig:sgd_relu}
\end{figure}

Note that for the above values of $\w_*$, $\x$ and $\x'$, the late time lower bound on $S_{\rm rel}$ given in Theorem \ref{lem:low_srel_relu} applies and it evaluates to $\sim 1.4$ which is very close to the value around which the above plot is fluctuating. Such experiments can be conducted under various other parameter settings lending further credence to this model. {\it Thus here we again see that Hypothesis \ref{conj:large} bears out.}

Note that from combining equations \ref{numerator_1} and \ref{numerator_2} (in Appendix \ref{theorem:relu_last}) and their counterparts with $\x'$ replaced with $\x$ we can get an exact expression for $S_{\rm rel}$ for this situation of training a $\relu$ gate.


In the next section we shall move onto studying exactly provable behaviours of $S_{\rm rel}$ for certain polynomial hypothesis classes, which are strongly inspired from the study of large width neural networks.
 
\section{Analytic Tracking of $S_{\rm rel}$ for Gradient Flow on  $\ell_2-$ Loss on Degree$-d$ Weight Homogeneous Feature Linear Predictors}\label{sec:d_hom}    

From ongoing theoretical research in algorithmic regularization and gradient dynamics on extremely wide neural networks, \cite{bai2019beyond}, \cite{woodworth2020kernel} certain kinds of high-degree polynomials have come to focus as being good test cases of various neural phenomenon. Inspired from these, we define the following class of functions which shall be the predictors we focus on in this segment,  

\begin{definition}\label{d_hom}
For functions $\alpha : \R^n \rightarrow \R$ and $\bbeta_r : \R^n \rightarrow \R^n, r = 1,\ldots,w$ we defined the ``degree$-d$ weight homogeneous feature linear predictors" as, 
\[ f_{\rm d,w,lin}(\W,\x) \coloneqq \alpha(\x) + \sum_{r=1}^w \ip{\bbeta_r(\x)}{\w_r^d} \]
In above $\w_r$ is used to denote the $r^{th}-$row of $\W \in \R^{w \times n}$ and $\w_r^d \coloneqq (w_{r,1}^d,\ldots,w_{r,n}^d)$
\end{definition} 

{\remark 


As further motivation towards the above, we recall  Theorem $2.1$ in, \cite{lee2019wide}.
In there it was assumed that each layer of the net (say $\net$) was of size $n = $input-dimension. Hence each weight vector in each layer would be in $\R^n$ and these were in turn indexed as $\w_r$ for $r=1,\ldots,w$. Then, given a point $\W_0$, (usually the value at initialization), the gradient dynamics on $\net$ at asymptotically large widths was shown to be reproducible as gradient dynamics on a class of predictors which can be seen as a special case of Definition \ref{d_hom}, corresponding to,

\[ d = 1 \]
\[ \alpha(\x) = \net(\W_0,\x) -  \ip{\nabla_{\W} \net(\W,\x)  \mid_{\W = \W_0}}{\W_0}   \] 
\[ \bbeta_r(\x) = \nabla_{\w_r} \net(\W,\x)  \mid_{\W = \W_0} \]

The matrices $\W$ and $\W_0$ when occurring inside the inner product are to be understood as having been vectorized row-wise. 
}
        
We consider the $\ell_2-$loss evaluated on the above predictor for a data point $(\x,y)$,  

\begin{equation}\label{def:srel_1}
\begin{split}
\ell_{\rm d,w,lin}(\W,(\x,y)) \coloneqq  &\frac{1}{2} (y - f_{\rm d,w,lin}(\W,\x) )^2\\
&=  \frac{1}{2} \left (y - \alpha(\x) - \sum_{r=1}^w \ip{\bbeta_r(\x)}{\w_r^d}  \right )^2
\end{split}
\end{equation} 


 If we denote $\W_t$ as the $t^{th}$ iterate of a training algorithm on $\W$ with step-length $\eta$, then invoking Definition \ref{def:srel} in this context gives us, 

\begin{equation}\label{def:srel_2}
\begin{split}
\left [ S_{\rm rel} (t) \right ]_{\x', \x} \coloneqq  \frac{\abs{f_{\rm \rm d,w,lin}(\W_t^+,\x') - f_{\rm d,w,lin}(\W_t,\x')}}{ \abs{f_{\rm d,w,lin}(\W_t^+,\x) - f_{\rm d,w,lin}(\W_t,\x)}}\\
\text{where}~\forall r \in \{1,\ldots,w\}\\
\w_{r,t}^+(\x) = \w_{r,t} - \eta \nabla_{\w_{r,t}} \ell_{\rm d,lin} (\W_t, (\x,y))  
\end{split}
\end{equation}

Firstly, we demonstrate a time-independent upper bound for the above in the limit of small step-lengths, 

\begin{theorem}[{\bf A Sufficient Condition For the Continuous Time Value of $S_{\rm rel}$ to be Small}]\label{thm:srel_cont_upper}
\begin{equation*}
\begin{split}
&\lim_{\eta \rightarrow 0}  [S_{\rm rel}]_{\x',\x}\\
&\leq \left ( \max_{r=1,\dots,w} \norm{\bbeta_r(\x') \odot \bbeta_r(\x)} \right ) \cdot  \sum_{r=1}^w \frac{\norm{\w_{r,t}^{2(d - 1)}} }{\abs{\sum_{r=1}^w   \ip{ \bbeta_r(\x)^2}{\w_{r,t}^{2(d - 1)}} }}
\end{split}
\end{equation*}
\end{theorem}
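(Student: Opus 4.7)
The plan is to compute both the numerator and the denominator of $[S_{\rm rel}]_{\x',\x}$ to leading order in $\eta$ and then take the ratio, after which a single application of Cauchy-Schwarz (coordinatewise on Hadamard products) will yield the claimed bound.

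First I would compute the gradient of the per-sample loss in (\ref{def:srel_1}). Since $\nabla_{\w_r} f_{\rm d,w,lin}(\W,\x) = d \cdot \bbeta_r(\x) \odot \w_r^{d-1}$ (where $\w_r^{d-1}$ denotes the vector of componentwise $(d{-}1)^{\rm th}$ powers), we immediately get
\begin{equation*}
\w_{r,t}^+(\x) - \w_{r,t} = \eta \cdot d \cdot (y - f_{\rm d,w,lin}(\W_t,\x)) \cdot \bbeta_r(\x) \odot \w_{r,t}^{d-1}.
\end{equation*}
Next I would Taylor-expand, componentwise, the quantity $(\w_{r,t}^+)^d - \w_{r,t}^d$ to first order in $\eta$: for each coordinate $j$, $(w_{r,j} + \Delta_{r,j})^d - w_{r,j}^d = d \, w_{r,j}^{d-1} \Delta_{r,j} + O(\eta^2)$, so that substituting back and summing over $r$ gives
\begin{equation*}
f_{\rm d,w,lin}(\W_t^+,\x') - f_{\rm d,w,lin}(\W_t,\x') = \eta \cdot d^2 \cdot (y - f_{\rm d,w,lin}(\W_t,\x)) \cdot \sum_{r=1}^w \ip{\bbeta_r(\x') \odot \bbeta_r(\x)}{\w_{r,t}^{2(d-1)}} + O(\eta^2),
\end{equation*}
and the analogous identity with $\x'$ replaced by $\x$ gives $\sum_{r=1}^w \ip{\bbeta_r(\x)^{\odot 2}}{\w_{r,t}^{2(d-1)}}$ in the inner-product slot.

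Plugging into (\ref{def:srel_2}) and taking $\eta \to 0$, the common prefactor $d^2(y - f_{\rm d,w,lin}(\W_t,\x))$ cancels between numerator and denominator (this is legitimate precisely under the nondegeneracy condition flagged in the second remark after Definition~\ref{def:srel}, which guarantees the denominator is nonzero). Hence
\begin{equation*}
\lim_{\eta \to 0}[S_{\rm rel}]_{\x',\x} = \frac{\bigl| \sum_{r=1}^w \ip{\bbeta_r(\x') \odot \bbeta_r(\x)}{\w_{r,t}^{2(d-1)}}\bigr|}{\bigl|\sum_{r=1}^w \ip{\bbeta_r(\x)^{\odot 2}}{\w_{r,t}^{2(d-1)}}\bigr|}.
\end{equation*}

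Finally, I would estimate the numerator by the triangle inequality followed by Cauchy-Schwarz applied to each inner product $\ip{\bbeta_r(\x') \odot \bbeta_r(\x)}{\w_{r,t}^{2(d-1)}}$, yielding $\sum_r \norm{\bbeta_r(\x') \odot \bbeta_r(\x)} \cdot \norm{\w_{r,t}^{2(d-1)}}$, and then pulling $\max_r \norm{\bbeta_r(\x') \odot \bbeta_r(\x)}$ out of the sum. This matches the right-hand side of the claimed bound. I do not expect any step to be a serious obstacle; the only subtlety worth tracking carefully is the order of operations in taking $\eta \to 0$, namely that the $O(\eta^2)$ remainders in the binomial expansion and the $\eta$ in the leading term cancel correctly in the ratio so that the limit exists and equals the ratio of the leading coefficients, which is what the subsequent Cauchy-Schwarz step bounds.
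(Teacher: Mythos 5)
Your proposal is correct and follows essentially the same route as the paper's own proof: compute the gradient update, extract the first-order (in $\eta$) change of the predictor at $\x'$ and $\x$ (the paper does this via the full binomial expansion and then drops the $k\geq 2$ terms in the limit, which is the same as your $O(\eta^2)$ Taylor remainder), cancel the common factor $d^2\,(y - f_{\rm d,w,lin}(\W_t,\x))$ to get the exact limiting ratio, and finish with triangle inequality plus Cauchy--Schwarz and pulling out the maximum over $r$. The nondegeneracy caveat you flag for the cancellation matches the paper's ``if $S_{\rm rel}$ is well defined'' proviso, so there is no gap.
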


Proof of the above theorem is given in Appendix \ref{proof: srel_cont_upper}. Thus from above we can read off the ``metric like" function induced on the data-space, 
\[ \funcP (\x,\x') =  \left ( \max_{r=1,\dots,w} \norm{\bbeta_r(\x') \odot \bbeta_r(\x)} \right ) \]
as envisaged more generally in Hypothesis \ref{conj:small}.

\subsection{Gradient Flow Dynamics for Training $f_{\rm d,w,lin}$}

Now consider solving the learning problem: $\min_{\W \in \R^{w \times n}} \E_{(\x,y)\sim {\cal D}} \left [  \ell_{\rm d,w,lin}(\W,(\x,y)) \right ]$ using the following gradient flow dynamics (which uses a parameter $\theta >0$), 

\begin{definition}[{\bf Defining a Gradient Flow Dynamics for Training $f_{\rm d,w,lin}$}]
\begin{equation}\label{ode} 
\begin{split}
&\forall r \in \{1,\ldots,w\}\\ 
&\dv{\w_r(t)}{t} \\
&= -\theta \pdv{}{\w_r(t)} \E_{(\x,y)\sim {\cal D}} \left [  \ell_{\rm d,w,lin}(\W,(\x,y)) \right ]\\
&= -\theta\pdv{}{\w_r(t)} \E_{(\x,y)} \left [ \frac{1}{2} \left (y - \alpha(\x) - \sum_{r=1}^w \ip{\bbeta_r(\x)}{\w_r(t)^d}  \right )^2 \right ]\\
&= \theta \cdot d \cdot \E_{(\x,y)} \Bigg [ \left (y - \alpha(\x) - \sum_{p=1}^w \ip{\bbeta_p(\x)}{\w_{p}(t)^d}  \right )\\
&\cdot \left (\bbeta_r(\x) \odot \w_{r}(t)^{d-1} \right) \Bigg ]
\end{split}
\end{equation} 
\end{definition} 

From the above dynamics we shall now be able to obtain {\em an exact expression for local elasticity for training over diagonal orthogonal quadratic features}.

\begin{theorem}\label{thm:exact_S_rel}

Assume $w=n$ and suppose $\exists ~a_q ~\& ~b_q \forall q = 1,\ldots,n$ s.t 
\[ a_q \coloneqq \E_{(\x,y)} \left [ \left (y - \alpha(\x) \right )\bbeta_{q,q} (\x) \right ] \] 
\[  b_{q} \delta_{p,q} \coloneqq \E_{(\x,y)} \left [ \bbeta_{q,q}(\x)\bbeta_{p,p}(\x)  \right ] ~\forall p = 1,\ldots, n  \]

Then for $d=2$, $\W = \diag(\w)$ {\rm for some } $\w \in \R^n$, for features s.t $a_r >0, \forall r \in \{1,\ldots,n\}$ and initial conditions s.t $w_r(0)^2 \in (0, \frac{a_r}{b_r} ) ~\forall r \in \{1,\ldots,n\}$ for training via the ODE in equation \ref{ode}, for $S_{\rm rel}$, as defined via equations \ref{def:srel_1} and \ref{def:srel_2}, we have,

\begin{equation}\label{exact:srel_d_2}
\begin{split}
\lim_{\eta \rightarrow 0}  [S_{\rm rel}]_{\x',\x} =  \frac{\abs{ \sum_{r=1}^n  \frac{a_r \bbeta_{r,r}(\x') \bbeta_{r,r}(\x)}{b_r + \left \{ -b_r + \frac{a_r}{w_r^2(0)} \right \}e^{-4 \theta a_r t}}  } }{\abs{\sum_{r=1}^n  \frac{a_r \bbeta_{r,r}(\x)^2}{b_r + \left \{ -b_r + \frac{a_r}{w_r^2(0)} \right \}e^{-4 \theta a_r t}}  }}\\
\end{split}
\end{equation}
\end{theorem}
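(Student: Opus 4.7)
The plan is to exploit the diagonal ansatz $\W = \diag(\w)$ together with the orthogonality hypothesis on the features, which together force the gradient flow to decouple into $n$ independent scalar logistic ODEs. First I would substitute $\W_t = \diag(\w(t))$ into the ODE system \eqref{ode}. Under this ansatz the $r$-th row $\w_r$ has only a single nonzero entry, namely $w_r$, so $\ip{\bbeta_p(\x)}{\w_p^d} = \bbeta_{p,p}(\x) w_p^d$ and $\bbeta_r(\x) \odot \w_r^{d-1} = \bbeta_{r,r}(\x) w_r^{d-1} \e_r$. Taking $d=2$, the ODE for $w_r(t)$ becomes
\begin{equation*}
\dv{w_r}{t} = 2\theta\, w_r\,\E_{(\x,y)}\!\left[(y-\alpha(\x))\bbeta_{r,r}(\x)\right] - 2\theta\, w_r \sum_{p} w_p^2\,\E\!\left[\bbeta_{r,r}(\x)\bbeta_{p,p}(\x)\right].
\end{equation*}
The orthogonality assumption $\E[\bbeta_{r,r}\bbeta_{p,p}] = b_r \delta_{r,p}$ annihilates every cross term, leaving the decoupled scalar equation $\dot{w}_r = 2\theta w_r(a_r - b_r w_r^2)$.

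Second, I would make the change of variables $u_r(t) \coloneqq w_r(t)^2$, which converts the above into the logistic equation $\dot{u}_r = 4\theta\, u_r(a_r - b_r u_r)$. With $a_r > 0$ and $u_r(0) \in (0, a_r/b_r)$, the trajectory is trapped in $(0, a_r/b_r)$ (so separation of variables is legitimate everywhere along the orbit), and standard partial-fraction integration of $\tfrac{du}{u(a_r - b_r u)} = 4\theta\, dt$ yields exactly
\begin{equation*}
w_r(t)^2 = \frac{a_r}{b_r + \left\{-b_r + \frac{a_r}{w_r(0)^2}\right\} e^{-4\theta a_r t}}.
\end{equation*}

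Third, I would unpack the fictitious S.G.D. update that defines $S_{\rm rel}$ in this diagonal setting. With $d=2$ and diagonal weights, $f_{\rm d,w,lin}(\W,\x) = \alpha(\x) + \sum_r \bbeta_{r,r}(\x) w_r^2$, and the only nonzero gradient component on row $r$ is $-2(y-f)\bbeta_{r,r}(\x) w_r$, so $w_r^{+}(\x) = w_r\bigl(1 + 2\eta (y-f(\W,\x))\bbeta_{r,r}(\x)\bigr)$. Expanding $(w_r^+)^2 - w_r^2$ to first order in $\eta$ gives $4\eta (y-f)\bbeta_{r,r}(\x)\, w_r^2 + O(\eta^2)$. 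Therefore
\begin{equation*}
f(\W^+,\x') - f(\W,\x') = 4\eta(y-f)\!\sum_r \bbeta_{r,r}(\x')\bbeta_{r,r}(\x)\, w_r^2 + O(\eta^2),
\end{equation*}
and the analogous expression at $\x$ has $\bbeta_{r,r}(\x)^2$ in place of $\bbeta_{r,r}(\x')\bbeta_{r,r}(\x)$. Taking the ratio and sending $\eta \to 0$, the common prefactor $4\eta(y-f)$ cancels, leaving the ratio of two linear combinations of $w_r(t)^2$ with coefficients $\bbeta_{r,r}(\x')\bbeta_{r,r}(\x)$ and $\bbeta_{r,r}(\x)^2$ respectively.

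Fourth, substituting the closed form for $w_r(t)^2$ from the logistic integration into this ratio produces the claimed expression \eqref{exact:srel_d_2}. The only subtle step is the $\eta \to 0$ limit: one must verify that both numerator and denominator vanish at the same linear order in $\eta$ and that the $O(\eta^2)$ remainders do not contaminate the ratio; this is immediate because $y - f(\W,\x)$ is a common nonzero factor (generically) in the leading-order coefficient of both sums, and its magnitude is irrelevant for the quotient. The main conceptual obstacle—making the coupled $n$-dimensional ODE tractable—is dissolved by the orthogonality assumption, after which every remaining step is a standard logistic integration plus a first-order Taylor expansion.
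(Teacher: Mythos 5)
Your proposal is correct and follows essentially the same route as the paper's proof: the diagonal ansatz plus the orthogonality assumption decouples equation \eqref{ode} into scalar equations $\dot{w}_r = 2\theta w_r(a_r - b_r w_r^2)$, which integrate (your $u_r = w_r^2$ logistic substitution versus the paper's direct partial-fraction/log integration is only a cosmetic difference) to the same closed form for $w_r(t)^2$, and the $\eta \to 0$ ratio you derive by first-order expansion of the fictitious update is exactly the specialization of the paper's general limiting formula from the proof of Theorem \ref{thm:srel_cont_upper}. No gaps; substituting the integrated $w_r(t)^2$ into that ratio yields \eqref{exact:srel_d_2} as claimed.
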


Proof of the above theorem is in Appendix \ref{proof:exact_S_rel}.

{\remark Note that the the condition $a_r >0, \forall r$ can be easily imagined to be satisfied if we have (A) ``orthogonal features" i.e $\forall p \neq q, 0=\E_{(\x,y)} \left [  \bbeta_{p,p}(\x) \cdot \bbeta_{q,q}(\x) \right ]$ (as already assumed in above) and (B) a ``realizable" setting with $y = \alpha(\x) + \sum_{r=1}^w \theta_r^2 \beta_{r,r}(\x)$ for some constants $\theta_r$}

For intuition, consider a special case of the above with $w=n=3, \bbeta_{i,j}(\z) = z_i\delta_{i,j}, ~\forall i,j = \{1,2,3\}, \z \in \R^3, \x \distas {\cal N}(0,\I_{3 \times 3})$ and $y = \alpha(\x) + x_1 + 4 x_2 + 9 x_3$ for some $\alpha(\x)$. Then $~\forall i =1,2,3 ~b_i = 1$  Then $~\forall i =1,2,3 ~a_i = \E_{\x \distas {\cal N}(0,\I_{3 \times 3})} \left [ x_i \cdot ( x_1 + 4 x_2 + 9x_3)\right ]$ which implies $a_1 = 1, a_2 =4, a_3 = 9$. Then $\frac{b_1}{a_1} = 1, \frac{b_2}{a_2} = \frac{1}{4} ,\frac{b_3}{a_3} = \frac{1}{9}$. And hence we can consistently choose $w_1(0)^2 = \frac{1}{2}, w_2(0)^2 = 2,, w_3(0)^2 = 4$. Corresponding to the choices above, equation \ref{exact:srel_d_2} evaluates to,  

\begin{equation}\label{ex:s_rel}
\begin{split}
\lim_{\eta \rightarrow 0}  [S_{\rm rel}](t)_{\theta, \x',\x} =  \frac{\abs{\frac{x_1' x_1}{1+e^{-4\theta t}} +  \frac{x_2'x_2}{\frac{1}{4} + \frac{1}{4}e^{-16 \theta t}}  + \frac{x_3'x_3}{\frac{1}{9} + \frac{5}{36}e^{-36 \theta t}} }}{\abs{\frac{ x_1^2}{1+e^{-4\theta t}} +  \frac{x_2^2}{\frac{1}{4} + \frac{1}{4}e^{-16 \theta t}}  + \frac{x_3^2}{\frac{1}{9} + \frac{5}{36}e^{-36 \theta t}}}}
\end{split}
\end{equation}

We will consider two ways of visualizing this formula,  

\paragraph{{\it At all times $S_{\rm rel}$ decreases as $\x$ gets farther away from $\x'$} }
\begin{figure}[h!]
    \centering
    \includegraphics[scale = 0.6]{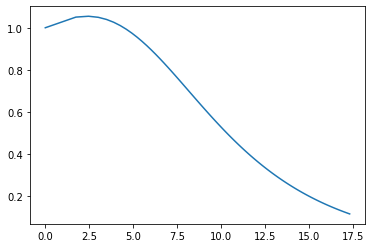}
	    \caption{In above we have chosen $\theta = 0.001$, $t=500$, $\x' = (-1,-11,1)$ and $\x(z) = \x' + (\sqrt{z},\sqrt{z},\sqrt{z})$ for $z=1,2,\ldots,100$ and then we have plotted $S_{\rm rel}$ (on the $y-$axis) as given in equation \ref{ex:s_rel} as a function of $\norm{\x' - \x(z)}$ (on the $x-$axis) which is increasing with  $z$}\label{fig:ex_srel_1}
\end{figure}	    

The key property we observe in Figure \ref{fig:ex_srel_1} is that $\exists$ a $\tilde{t}$ s.t $\forall t > \tilde{t}$ $S_{\rm rel}(\x,\x')$ decreases as the sampled point $\x$ is chosen farther away from the fixed testing point $\x'$. This property can be reproduced for various different paths of moving $\x$ away from $\x'$.  {\it We observe the conceptual resemblance of the above figure with Figure \ref{fig:non_real_srel_d}  as were observed on neural networks.}
    
\paragraph{{\it For all $(\x,\x')$ $S_{\rm rel}$ is a non-decreasing function of time for all later times}}

In Figure \ref{fig:ode} we plot equation \ref{ex:s_rel} as a function of time for certain pairs of $(\x,\x')$. Note that the initial time behaviour $S_{\rm rel}$ is very different between Figures \ref{fig:ex_srel_increase} and \ref{fig:ex_srel_ini_dec} but both of them demonstrate the validity of Hypothesis \ref{conj:large} and values of $t_1,t_2,t_*$ as in the hypothesis can be read off from the plots. 

\begin{figure}[h!]
    \centering 
\begin{subfigure}{0.50\textwidth}
  \includegraphics[width=1\textwidth]{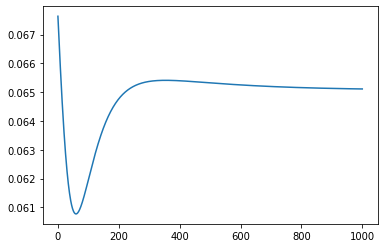}
  \caption{In above we have chosen $\x = (1,-1,1)$ and \\ $\x' = (1.01,0.999 ,1.2)$.}
  \label{fig:ex_srel_increase}
\end{subfigure}\hfil 
\begin{subfigure}{0.50\textwidth}
  \includegraphics[width=1\textwidth]{Figures/S_rel_vs_time_initial_decrease.png}
  \caption{In above we have chosen $\x = (1,-11,1)$ and \\
  $\x' = (1.01,0.999 ,1.2)$.}
  \label{fig:ex_srel_ini_dec}
\end{subfigure}
\caption{For $\theta = 0.001$, we plot the R.H.S of equation \ref{ex:s_rel} (y-axis) for $t \in \{ 1,2,\ldots,1000\}$ (x-axis)}
\label{fig:ode}
\end{figure}

More interestingly, we now cast the setup of Eq. \ref{ex:s_rel} as S.G.D. solving the following class of risk minimization questions, 

\begin{align}\label{def:sgd_risk} 
\nonumber &\min_{\w \in \R^3 } \E_{(\x,y) } \left [ \ell(\w,(\x,y)) \right ]\\
&\coloneqq  \min_{\w \in \R^3 } \E_{\x \distas {\cal N}(0,\I_{3 \times 3})} \left [ \frac{1}{2}\left (y(\x) - \left ( \alpha(\x) + \sum_{p=1}^3 \beta_{p,p}(\x) w_p^2 \right ) \right )^2  \right ]
\end{align}

with $\beta_{p,p}(\x) = x_p$ for $p=1,2,3$ as specified earlier and the true labels $y(\x)$ being also generated as earlier $y(\x) = \alpha(\x) + \ip {\x}{\w_*^2}$ with $\w_* = (1,2,3)$.We denote $(w_{t,1},w_{t,2},w_{t,3}) = \w_t \in \R^3$ as the $t^{th}-$iterate of the S.G.D. We initialize the S.G.D. at $w_{0,1} = \frac{1}{\sqrt{2}}, w_{0,2} = \sqrt{2}, w_{0,3} = 2$, the same initial conditions as used in deriving the O.D.E solution in equation \ref{ex:s_rel} and use a step-length of $\eta = 0.001$ as was the value of the $\theta$ parameter in the plots in Figure \ref{fig:ode}. Then the $t^{th}-$ step in the S.G.D. would read,
\[ \x_t \distas {\cal N}(0,\I_{3 \times 3}) \]
\[ \w_{t+1} = \w_t + 2\eta \left (  \sum_{p=1}^3 x_{t,p} (\w_{*,p}^2 - \w_{t,p}^2)\right ) \x_t \odot \w_t \] 

Thus at every $\w_t$ for the above algorithm we can compute $S_{\rm rel}(\x,\x')$ as given in Definition \ref{def:srel} with $\ell(\w_t,(\x,y))$ as given in Definition \ref{def:sgd_risk}. For the $\x ~\& ~\x'$ as in Figure \ref{fig:ode}, in Figure \ref{fig:sgd} we give the corresponding plots averaged over many samples of S.G.D. runs - and in all these samples the iterates  were all eventually satisfying the condition $\norm{\w_t - \w_*} < 10^{-6}$.

\begin{figure}[h!]
    \centering 
\begin{subfigure}{0.50\textwidth}
  \includegraphics[width=1\textwidth]{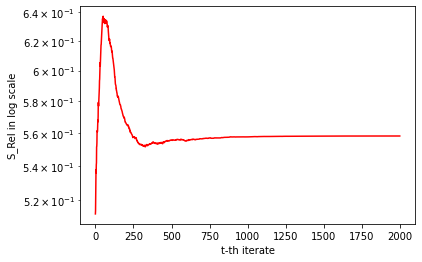}
  \caption{In above we have chosen $\x = (1,-1,1)$ \\ and $\x' = (1.01,0.999 ,1.2)$}
  \label{fig:sgd_ex_srel_increase}
\end{subfigure}\hfil 
\begin{subfigure}{0.50\textwidth}
  \includegraphics[width=1\textwidth]{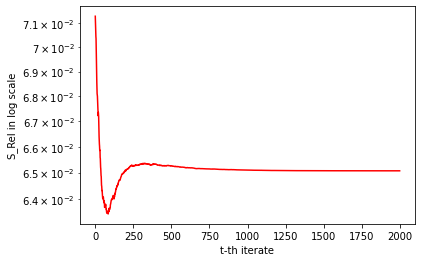}
  \caption{In above we have chosen $\x = (1,-11,1)$ \\ and $\x' = (1.01,0.999 ,1.2)$}
  \label{fig:sgd_ex_srel_ini_dec}
\end{subfigure}
\caption{ We plot $S_{\rm rel}$ for the $t^{th}-$iterate of S.G.D. training on the risk given in equation \ref{def:sgd_risk} for $t \in \{ 1,2,\ldots,2000\}$ and using step-length of $\eta = 0.001$. The plotted value is an average over multiple samples of S.G.D. runs. (The reader is urged to note the similarity in the hyper-parameter settings and the eventual similarity of the plots as in Figures \ref{fig:ode})}
\label{fig:sgd}
\end{figure}


The sharp resemblance (of values as well as of the shape) between Figures \ref{fig:ode} and \ref{fig:sgd}, justifies our approach of modelling the behaviour of local elasticity for realistic training algorithms like S.G.D. via appropriate limits of solutions of certain O.D.Es.  
~\\ \\
For certain set of special cases of equation \ref{exact:srel_d_2}, in the next theorem we shall prove some of the key features observed in these two figures.

\begin{theorem}[{\bf A Special Symmetric Case with $2$ Features in $2$ dimensions}]\label{thm:2_2_diag}
~\\
Suppose there are 2 features $\beta_1, \beta_2 : \R^2 \rightarrow \R^2$ and we are considering the following predictor class, 
\begin{equation}\label{exact_simplest}
\begin{split}
\R^{2} \times \R^2 \ni (\w , \x) \mapsto f_{\rm 2,2,lin}(\w,\x) \coloneqq \alpha(\x) + \sum_{r=1}^2 \beta_{r,r}(\x) \cdot w_r^2 \in \R 
\end{split}
\end{equation}
We assume the data-distribution over $\R^2 \times \R$ to be s.t the following are finite : $b_i \coloneqq \E_{(\x,y)} \left [  \beta_{i,i}(\x)^2 \right ], ~\forall i=1,2$ and $0 < a_i \coloneqq \E_{(\x,y)} \left [  (y - \alpha(\x)) \beta_{i,i}(\x) \right ], ~\forall i=1,2$
Further suppose that $\exists ~\alpha, \beta \in \R \setminus \{0\}$ s.t, 
\[ \frac{b_1}{a_1} = \frac{b_2}{a_2} =  \alpha^2, \quad  -\frac{b_1}{a_1} + \frac{1}{w_1(0)^2} = -\frac{b_2}{a_2} + \frac{1}{w_2(0)^2} = \beta^2 \]

And we suppose the following condition on the features at the point $\x$ and $\x'$ being considered,  
\[ \frac{\abs{\beta_{1,1}(\x') \beta_{1,1}(\x) + \beta_{2,2}(\x') \beta_{2,2}(\x)}}{\abs{\beta_{1,1}(\x)^2 + \beta_{2,2}(\x')^2}} > 0 \]
Then $\exists t^*$ s.t $\forall t \geq t^*$, $\lim_{\eta \rightarrow 0}  [S_{\rm rel}]_{\x',\x}$ (as defined via equations \ref{def:srel_1} and \ref{def:srel_2}) is always bounded away from $0$ and this lower bound is monotonically increasing with $t$. 
\end{theorem}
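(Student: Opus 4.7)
The plan is to reduce the exact expression for $\lim_{\eta\to 0}[S_{\rm rel}]_{\x',\x}$ from Theorem \ref{thm:exact_S_rel} using the stated symmetry hypotheses, and then extract a monotonically increasing positive lower bound from its late-time asymptotic behavior. Under the hypotheses $b_r/a_r = \alpha^2$ and $1/w_r(0)^2 = \alpha^2 + \beta^2$ for $r\in\{1,2\}$, the per-summand denominator $b_r + (-b_r + a_r/w_r(0)^2) e^{-4\theta a_r t}$ appearing in (\ref{exact:srel_d_2}) simplifies to $a_r(\alpha^2 + \beta^2 e^{-4\theta a_r t})$. Cancelling the common $a_r$ against the summand's numerator reduces (\ref{exact:srel_d_2}) to
\[
\lim_{\eta\to 0}[S_{\rm rel}]_{\x',\x} = \frac{\left|\sum_{r=1}^{2} \frac{\beta_{r,r}(\x')\beta_{r,r}(\x)}{\alpha^2 + \beta^2 e^{-4\theta a_r t}}\right|}{\sum_{r=1}^{2} \frac{\beta_{r,r}(\x)^2}{\alpha^2 + \beta^2 e^{-4\theta a_r t}}},
\]
which I write compactly as $|N(t)|/D(t)$.

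Taking $t\to\infty$, each exponential vanishes (since $a_1,a_2>0$), giving $N(t)\to N_\infty := (\beta_{1,1}(\x')\beta_{1,1}(\x) + \beta_{2,2}(\x')\beta_{2,2}(\x))/\alpha^2$ and $D(t)\to D_\infty := (\beta_{1,1}(\x)^2 + \beta_{2,2}(\x)^2)/\alpha^2$. The stated feature hypothesis guarantees $N_\infty\neq 0$; combined with $D_\infty>0$ (implicit in $S_{\rm rel}$ being defined), it follows that $L_\infty := |N_\infty|/D_\infty > 0$ and $S_{\rm rel}(t)\to L_\infty$.

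The key quantitative step is to estimate the rate of this convergence. Using the elementary bound $\bigl|1/(\alpha^2 + \beta^2 e^{-4\theta a_r t}) - 1/\alpha^2\bigr| \leq (\beta^2/\alpha^4)\, e^{-4\theta a_r t}$, the remainders $R_N(t) := N(t) - N_\infty$ and $R_D(t) := D(t) - D_\infty$ satisfy $|R_N(t)|, |R_D(t)| \leq K_0 e^{-4\theta a_{\min} t}$ with $a_{\min} := \min(a_1,a_2) > 0$ and an explicit constant $K_0$ built from $\alpha,\beta$ and the features at $\x,\x'$. Combining this with $D(t) \geq D_\infty/2$ for $t$ past some threshold, a routine algebraic manipulation of the ratio yields
\[
\bigl|\lim_{\eta\to 0}[S_{\rm rel}]_{\x',\x} - L_\infty\bigr| \leq K e^{-4\theta a_{\min} t}
\]
for some explicit $K > 0$.

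Setting $L(t) := L_\infty - K e^{-4\theta a_{\min} t}$ then gives a function that (i) is a lower bound for $S_{\rm rel}(t)$ at late times by the tail estimate, (ii) is monotonically increasing in $t$ since its derivative $4\theta K a_{\min} e^{-4\theta a_{\min} t}$ is positive, and (iii) is positive for all $t$ past some threshold $t^*$ since $L(t)\to L_\infty > 0$. This $L(t)$ is the required monotonically increasing, eventually positive lower bound, and taking $t^*$ large enough that both the tail estimate and $L(t^*) > 0$ hold completes the proof. The main (and essentially only) obstacle is the bookkeeping needed to extract a clean constant $K$ in the tail bound; everything else is immediate once the exponential convergence rate is in hand. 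Note that $S_{\rm rel}(t)$ itself need not be monotonic (its late-time sign of derivative depends on $\mathrm{sgn}(\beta_{1,1}(\x')\beta_{1,1}(\x) \cdot \beta_{2,2}(\x)^2 - \beta_{2,2}(\x')\beta_{2,2}(\x) \cdot \beta_{1,1}(\x)^2)$), which is why passing to the exponential-envelope lower bound $L(t)$ is the natural route.
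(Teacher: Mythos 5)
Your proposal is correct, and it follows the same first half as the paper -- both specialize the exact formula of Theorem \ref{thm:exact_S_rel} under the symmetry hypotheses $b_r = a_r\alpha^2$ and $-b_r/a_r + 1/w_r(0)^2 = \beta^2$ -- but the mechanism for extracting the late-time lower bound is genuinely different. The paper factors the simplified expression as a time-independent ratio times a quotient of the form $\abs{\alpha^2 + b_1 e^{-p^2 t} + c_1 e^{-q^2 t}}/\abs{\alpha^2 + b_2 e^{-p^2 t} + c_2 e^{-q^2 t}}$ (equation \ref{d_hom_s_rel_exact_22}) and then invokes Lemma \ref{bound}, which bounds the numerator below by its value at a fixed time $t_2^*$ and the denominator above by the decreasing function $\alpha^2 + \abs{b_2}e^{-p^2 t} + \beta^2 e^{-q^2 t}$; monotonicity of the resulting lower bound is then immediate because only the denominator retains $t$-dependence. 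You instead identify the positive limit $L_\infty$ and prove an exponential tail estimate $\abs{S_{\rm rel}(t) - L_\infty} \leq K e^{-4\theta a_{\min} t}$, taking the envelope $L_\infty - K e^{-4\theta a_{\min} t}$ as the increasing lower bound. Your route is asymptotically sharper (your bound converges to the true limit, whereas the paper's converges to a strictly smaller constant) and avoids the separate lemma, at the cost of the constant-tracking you acknowledge; the paper's route needs no rate bookkeeping at all. One small point: your justification that $D_\infty > 0$ is ``implicit in $S_{\rm rel}$ being defined'' is looser than necessary -- it actually follows from the stated feature condition, since $\beta_{1,1}(\x) = \beta_{2,2}(\x) = 0$ would force its numerator to vanish; also note your $D_\infty$ uses $\beta_{2,2}(\x)^2$, consistent with equation \ref{exact:srel_d_2}, whereas the theorem statement and the paper's $k_2$ carry what appears to be a typo with $\beta_{2,2}(\x')^2$.
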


The proof of the above theorem can be found in Appendix \ref{proof:dhom:main}. We note that the above result can be seen as a partial evidence towards the property envisaged in Hypothesis \ref{conj:large}.

{\remark  For intuition consider a special case of the above with $\bbeta_{i,i}(\z) = z_i, ~\forall i = \{1,2\}$. Then we can write, 
\[ \frac{\abs{\beta_{1,1}(\x') \beta_{1,1}(\x) + \beta_{2,2}(\x') \beta_{2,2}(\x)}}{\abs{\beta_{1,1}(\x)^2 + \beta_{2,2}(\x')^2}} = \frac{\abs{x'_1x_1 + x'_2x_2}}{\abs{x_1^2 + x_2^2}} = \frac{\abs{\ip{\x'}{\x}}}{\norm{\x}^2}  \] 
Hence in this case the condition reduces to $\x'$ not being orthogonal to $\x$ for the the $S_{\rm rel}$ to be prevented from being arbitrarily low. 
}

\section{Conclusion}
In this work we were able to demonstrate via theory and neural experiments, a general phenomenon that the weight updates in many popular training setups cause prediction changes preferably for those data which are ``related" to the point being sampled for computing the stochastic gradient. And very crucially - and possibly surprisingly - our experiments in the classification setup showed that this preference sets in very early on in the training - much before the asymptotic classification accuracies have been attained. We recall that to pin down the above effect we had to introduce a new definition of ``local elasticity" (Definition \ref{def:srel_ce_k}) which measures the changes in the predicted distribution over classes (for a fixed input) while the training is underway. 

One of the main next steps from here on is to develop theoretical models where one can reproduce the phenomenon of local elasticity with neural classification experiments as demonstrated in Section \ref{exp:svhn}. More generally, it would be an exciting avenue of future research to identify learning scenarios where the extended idea of $\srel$ as given in Definition \ref{def:srel_ce_k} can be exactly calculated and thus be able to rigorously establish its interesting properties demonstrated in the experiments.  Lastly, a necessary and sufficient condition on a learning process doing regression remains to be identified for which one can rigorously prove the late time non-decreasing behavior of $\srel$. 

\section{Acknowledgements}{We would like to thank Weijie Su, Hangfeng He, Jiayao Zhang, Danny Wood and Neil Walton for various discussions while writing this paper. Anirbit Mukherjee would like to thank Wharton Dean’s Fund for Postdoctoral Research and Weijie Su's NSF CAREER DMS-1847415 for funding this research.   }

\printbibliography

@article{gopalani2022global,
  title={{ Global Convergence of SGD On Two Layer Neural Nets}},
  author={Gopalani, Pulkit and Mukherjee, Anirbit},
  url = {https://doi.org/10.48550/arXiv.2210.11452},
  journal={},
  year={2022}
}

@article{KARMAKAR202356,
title = {{ Depth-2 neural networks under a data-poisoning attack}},
journal = {Neurocomputing},
volume = {532},
pages = {56-66},
year = {2023},
issn = {0925-2312},
doi = {https://doi.org/10.1016/j.neucom.2023.02.034},
author = {Sayar Karmakar and Anirbit Mukherjee and Theodore Papamarkou},
keywords = {Convolutional neural networks, Stochastic algorithms, Data poisoning, Robust regression},
abstract = {In this work, we study the possibility of defending against data-poisoning attacks while training a shallow neural network in a regression setup. We focus on doing supervised learning with realizable labels for a class of depth-2 finite-width neural networks, which includes single-filter convolutional networks. In this class of networks, we attempt to learn the true network weights generating the labels in the presence of a malicious oracle doing stochastic, bounded and additive adversarial distortions on the true labels, during training. For the gradient-free stochastic algorithm that we construct, we prove worst-case near-optimal trade-offs among the magnitude of the adversarial attack, the weight approximation accuracy, and the confidence achieved by the proposed algorithm. As our algorithm uses mini-batching, we analyze how the mini-batch size affects convergence. We also show how to utilize the scaling of the outer layer weights to counter data-poisoning attacks on true labels depending on the probability of attack. Lastly, we give experimental evidence demonstrating how our algorithm outperforms stochastic gradient descent under different input data distributions, including instances of heavy-tailed distributions.}
}

@inproceedings{bai2019beyond,
  title={Beyond Linearization: On Quadratic and Higher-Order Approximation of Wide Neural Networks},
  author={Bai, Yu and Lee, Jason D},
  booktitle={International Conference on Learning Representations},
  year={2019}
}

@inproceedings{woodworth2020kernel,
  title={Kernel and rich regimes in overparametrized models},
  author={Woodworth, Blake and Gunasekar, Suriya and Lee, Jason D and Moroshko, Edward and Savarese, Pedro and Golan, Itay and Soudry, Daniel and Srebro, Nathan},
  booktitle={Conference on Learning Theory},
  pages={3635--3673},
  year={2020},
  organization={PMLR}
}

@article{stiffness,
  author    = {Stanislav Fort and
               Pawel Krzysztof Nowak and
               Srini Narayanan},
  title     = {Stiffness: {A} New Perspective on Generalization in Neural Networks},
  journal   = {CoRR},
  volume    = {abs/1901.09491},
  year      = {2019},
  url       = {http://arxiv.org/abs/1901.09491},
  eprinttype = {arXiv},
  eprint    = {1901.09491},
  timestamp = {Sat, 02 Feb 2019 16:56:00 +0100},
  biburl    = {https://dblp.org/rec/journals/corr/abs-1901-09491.bib},
  bibsource = {dblp computer science bibliography, https://dblp.org}
}

@article{lee2019wide,
  title={Wide neural networks of any depth evolve as linear models under gradient descent},
  author={Lee, Jaehoon and Xiao, Lechao and Schoenholz, Samuel and Bahri, Yasaman and Novak, Roman and Sohl-Dickstein, Jascha and Pennington, Jeffrey},
  journal={Advances in neural information processing systems},
  volume={32},
  pages={8572--8583},
  year={2019}
}

@article{coherent_grads,
  author    = {Satrajit Chatterjee},
  title     = {Coherent Gradients: An Approach to Understanding Generalization in
               Gradient Descent-based Optimization},
  journal   = {CoRR},
  volume    = {abs/2002.10657},
  year      = {2020},
  url       = {https://arxiv.org/abs/2002.10657},
  eprinttype = {arXiv},
  eprint    = {2002.10657},
  timestamp = {Tue, 03 Mar 2020 14:32:13 +0100},
  biburl    = {https://dblp.org/rec/journals/corr/abs-2002-10657.bib},
  bibsource = {dblp computer science bibliography, https://dblp.org}
}

@article{locally_elastic_stability,
  author    = {Zhun Deng and
               Hangfeng He and
               Weijie J. Su},
  title     = {Toward Better Generalization Bounds with Locally Elastic Stability},
  journal   = {CoRR},
  volume    = {abs/2010.13988},
  year      = {2020},
  url       = {https://arxiv.org/abs/2010.13988},
  eprinttype = {arXiv},
  eprint    = {2010.13988},
  timestamp = {Mon, 02 Nov 2020 18:17:09 +0100},
  biburl    = {https://dblp.org/rec/journals/corr/abs-2010-13988.bib},
  bibsource = {dblp computer science bibliography, https://dblp.org}
}

@article{segler2018planning,
  title={Planning chemical syntheses with deep neural networks and symbolic AI},
  author={Segler, Marwin HS and Preuss, Mike and Waller, Mark P},
  journal={Nature},
  volume={555},
  number={7698},
  pages={604--610},
  year={2018},
  publisher={Nature Publishing Group}
}

@article{chen2018rise,
  title={The rise of deep learning in drug discovery},
  author={Chen, Hongming and Engkvist, Ola and Wang, Yinhai and Olivecrona, Marcus and Blaschke, Thomas},
  journal={Drug discovery today},
  volume={23},
  number={6},
  pages={1241--1250},
  year={2018},
  publisher={Elsevier}
}

@article{netzer2011reading,
  title={Reading digits in natural images with unsupervised feature learning},
  author={Netzer, Yuval and Wang, Tao and Coates, Adam and Bissacco, Alessandro and Wu, Bo and Ng, Andrew Y},
  year={2011}
}

@article{krizhevsky2009learning,
  title={Learning multiple layers of features from tiny images},
  author={Krizhevsky, Alex and Hinton, Geoffrey and others},
  year={2009},
  publisher={Citeseer}
}

@article{dalgaard2020global,
  title={Global optimization of quantum dynamics with AlphaZero deep exploration},
  author={Dalgaard, Mogens and Motzoi, Felix and S{\o}rensen, Jens Jakob and Sherson, Jacob},
  journal={npj Quantum Information},
  volume={6},
  number={1},
  pages={1--9},
  year={2020},
  publisher={Nature Publishing Group}
}

@article{silver2017mastering,
  title={Mastering the game of Go without human knowledge},
  author={Silver, David and Schrittwieser, Julian and Simonyan, Karen and Antonoglou, Ioannis and Huang, Aja and Guez, Arthur and Hubert, Thomas and Baker, Lucas and Lai, Matthew and Bolton, Adrian and others},
  journal={Nature},
  volume={550},
  number={7676},
  pages={354--359},
  year={2017},
  publisher={Nature Research}
}

@article{silver2018general,
  title={A general reinforcement learning algorithm that masters chess, shogi, and Go through self-play},
  author={Silver, David and Hubert, Thomas and Schrittwieser, Julian and Antonoglou, Ioannis and Lai, Matthew and Guez, Arthur and Lanctot, Marc and Sifre, Laurent and Kumaran, Dharshan and Graepel, Thore and others},
  journal={Science},
  volume={362},
  number={6419},
  pages={1140--1144},
  year={2018},
  publisher={American Association for the Advancement of Science}
}

@article{schrittwieser2020mastering,
  title={Mastering atari, go, chess and shogi by planning with a learned model},
  author={Schrittwieser, Julian and Antonoglou, Ioannis and Hubert, Thomas and Simonyan, Karen and Sifre, Laurent and Schmitt, Simon and Guez, Arthur and Lockhart, Edward and Hassabis, Demis and Graepel, Thore and others},
  journal={Nature},
  volume={588},
  number={7839},
  pages={604--609},
  year={2020},
  publisher={Nature Publishing Group}
}

@article{poggio2017theory,
  title={Theory II: Landscape of the Empirical Risk in Deep Learning},
  author={Poggio, Tomaso and Liao, Qianli},
  journal={arXiv preprint arXiv:1703.09833},
  year={2017}
}

@techreport{zhang2017theory,
  title={Theory of Deep Learning III: Generalization Properties of SGD},
  author={Zhang, Chiyuan and Liao, Qianli and Rakhlin, Alexander and Sridharan, Karthik and Miranda, Brando and Golowich, Noah and Poggio, Tomaso},
  year={2017},
  institution={Center for Brains, Minds and Machines (CBMM)}
}

@inproceedings{he2019local,
  title={The Local Elasticity of Neural Networks},
  author={He, Hangfeng and Su, Weijie},
  booktitle={International Conference on Learning Representations},
  year={2019}
}

@article{chen2020label,
  title={Label-Aware Neural Tangent Kernel: Toward Better Generalization and Local Elasticity},
  author={Chen, Shuxiao and He, Hangfeng and Su, Weijie},
  journal={Advances in Neural Information Processing Systems},
  volume={33},
  year={2020}
}

@article{frei2020agnostic,
  title={Agnostic Learning of a Single Neuron with Gradient Descent},
  author={Frei, Spencer and Cao, Yuan and Gu, Quanquan},
  journal={arXiv preprint arXiv:2005.14426},
  year={2020}
}

@inproceedings{goel2019time,
  title={Time/accuracy tradeoffs for learning a relu with respect to gaussian marginals},
  author={Goel, Surbhi and Karmalkar, Sushrut and Klivans, Adam},
  booktitle={Advances in Neural Information Processing Systems},
  pages={8584--8593},
  year={2019}
}

@inproceedings{diakonikolas2020approximation,
  title={Approximation Schemes for ReLU Regression},
  author={Diakonikolas, Ilias and Goel, Surbhi and Karmalkar, Sushrut and Klivans, Adam R and Soltanolkotabi, Mahdi},
  booktitle={Conference on Learning Theory},
  year={2020}
}

@phdthesis{mukherjee2020study,
  title={A Study Of The Mathematics Of Deep Learning},
  author={Mukherjee, Anirbit},
  year={2020},
  school={Johns Hopkins University}
}

@article{KARMAKAR2022264,
title = {Provable training of a ReLU gate with an iterative non-gradient algorithm},
journal = {Neural Networks},
volume = {151},
pages = {264-275},
year = {2022},
issn = {0893-6080},
doi = {https://doi.org/10.1016/j.neunet.2022.03.040},
url = {https://www.sciencedirect.com/science/article/pii/S0893608022001186},
author = {Sayar Karmakar and Anirbit Mukherjee},
keywords = {Neural nets, Non-gradient iterative algorithms, Stochastic algorithms, Non-smooth non-convex optimization},
abstract = {In this work, we demonstrate provable guarantees on the training of a single ReLU gate in hitherto unexplored regimes. We give a simple iterative stochastic algorithm that can train a ReLU gate in the realizable setting in linear time while using significantly milder conditions on the data distribution than previous such results. Leveraging certain additional moment assumptions, we also show a first-of-its-kind approximate recovery of the true label generating parameters under an (online) data-poisoning attack on the true labels, while training a ReLU gate by the same algorithm. Our guarantee is shown to be nearly optimal in the worst case and its accuracy of recovering the true weight degrades gracefully with increasing probability of attack and its magnitude. For both the realizable and the non-realizable cases as outlined above, our analysis allows for mini-batching and computes how the convergence time scales with the mini-batch size. We corroborate our theorems with simulation results which also bring to light a striking similarity in trajectories between our algorithm and the popular S.G.D. algorithm — for which similar guarantees as here are still unknown.}
}

@article{wu2019global,
  title={Global convergence of adaptive gradient methods for an over-parameterized neural network},
  author={Wu, Xiaoxia and Du, Simon S and Ward, Rachel},
  journal={arXiv preprint arXiv:1902.07111},
  year={2019}
}

@misc{du2018gradient,
    title={Gradient Descent Finds Global Minima of Deep Neural Networks},
    author={Simon S. Du and Jason D. Lee and Haochuan Li and Liwei Wang and Xiyu Zhai},
    year={2018},
    eprint={1811.03804},
    archivePrefix={arXiv},
    primaryClass={cs.LG}
}

@inproceedings{su2019learning,
  title={On Learning Over-parameterized Neural Networks: A Functional Approximation Perspective},
  author={Su, Lili and Yang, Pengkun},
  booktitle={Advances in Neural Information Processing Systems},
  pages={2637--2646},
  year={2019}
}

@article{huang2019dynamics,
  title={Dynamics of deep neural networks and neural tangent hierarchy},
  author={Huang, Jiaoyang and Yau, Horng-Tzer},
  journal={arXiv preprint arXiv:1909.08156},
  year={2019}
}

@inproceedings{allen2019convergenceDNN,
  title={A Convergence Theory for Deep Learning via Over-Parameterization},
  author={Allen-Zhu, Zeyuan and Li, Yuanzhi and Song, Zhao},
  booktitle={International Conference on Machine Learning},
  pages={242--252},
  year={2019}
}

@inproceedings{allen2019learning,
  title={Learning and generalization in overparameterized neural networks, going beyond two layers},
  author={Allen-Zhu, Zeyuan and Li, Yuanzhi and Liang, Yingyu},
  booktitle={Advances in neural information processing systems},
  pages={6155--6166},
  year={2019}
}

@inproceedings{allen2019convergenceRNN,
  title={On the convergence rate of training recurrent neural networks},
  author={Allen-Zhu, Zeyuan and Li, Yuanzhi and Song, Zhao},
  booktitle={Advances in Neural Information Processing Systems},
  pages={6673--6685},
  year={2019}
}

@article{zou2018stochastic,
  title={Stochastic gradient descent optimizes over-parameterized deep relu networks},
  author={Zou, Difan and Cao, Yuan and Zhou, Dongruo and Gu, Quanquan},
  journal={arXiv preprint arXiv:1811.08888},
  year={2018}
}

@inproceedings{zou2019improved,
  title={An improved analysis of training over-parameterized deep neural networks},
  author={Zou, Difan and Gu, Quanquan},
  booktitle={Advances in Neural Information Processing Systems},
  pages={2053--2062},
  year={2019}
}

@inproceedings{arora2019exact,
  title={On exact computation with an infinitely wide neural net},
  author={Arora, Sanjeev and Du, Simon S and Hu, Wei and Li, Zhiyuan and Salakhutdinov, Russ R and Wang, Ruosong},
  booktitle={Advances in Neural Information Processing Systems},
  pages={8139--8148},
  year={2019}
}

@inproceedings{wei2019regularization,
  title={Regularization matters: Generalization and optimization of neural nets vs their induced kernel},
  author={Wei, Colin and Lee, Jason D and Liu, Qiang and Ma, Tengyu},
  booktitle={Advances in Neural Information Processing Systems},
  pages={9709--9721},
  year={2019}
}

@inproceedings{arora2019fine,
  title={Fine-Grained Analysis of Optimization and Generalization for Overparameterized Two-Layer Neural Networks},
  author={Arora, Sanjeev and Du, Simon and Hu, Wei and Li, Zhiyuan and Wang, Ruosong},
  booktitle={International Conference on Machine Learning},
  pages={322--332},
  year={2019}
}

@inproceedings{allen2019can,
  title={What Can ResNet Learn Efficiently, Going Beyond Kernels?},
  author={Allen-Zhu, Zeyuan and Li, Yuanzhi},
  booktitle={Advances in Neural Information Processing Systems},
  pages={9015--9025},
  year={2019}
}

@misc{lee2017deep,
    title={Deep Neural Networks as Gaussian Processes},
    author={Jaehoon Lee and Yasaman Bahri and Roman Novak and Samuel S. Schoenholz and Jeffrey Pennington and Jascha Sohl-Dickstein},
    year={2017},
    eprint={1711.00165},
    archivePrefix={arXiv},
    primaryClass={stat.ML}
}

@article{goel2018learning,
  title={Learning one convolutional layer with overlapping patches},
  author={Goel, Surbhi and Klivans, Adam and Meka, Raghu},
  journal={arXiv preprint arXiv:1802.02547},
  year={2018}
}

@article{cite-key,
	abstract = {Neural collapse is an emergent phenomenon in deep learning that was recently discovered by Papyan, Han and Donoho. We propose a simple unconstrained features model in which neural collapse also emerges empirically. By studying this model, we provide some explanation for the emergence of neural collapse in terms of the landscape of empirical risk.},
	author = {Mixon, Dustin G. and Parshall, Hans and Pi, Jianzong},
	date = {2022/07/19},
	date-added = {2023-04-06 14:15:42 +0100},
	date-modified = {2023-04-06 14:16:20 +0100},
	doi = {10.1007/s43670-022-00027-5},
	id = {Mixon2022},
	isbn = {2730-5724},
	journal = {Sampling Theory, Signal Processing, and Data Analysis},
	number = {2},
	pages = {11},
	rating = {5},
	read = {0},
	title = {Neural collapse with unconstrained features},
	url = {https://doi.org/10.1007/s43670-022-00027-5},
	volume = {20},
	year = {2022},
	bdsk-url-1 = {https://doi.org/10.1007/s43670-022-00027-5}}

@article{DBLP:journals/corr/abs-2101-12699,
  author    = {Cong Fang and
               Hangfeng He and
               Qi Long and
               Weijie J. Su},
  title     = {Layer-Peeled Model: Toward Understanding Well-Trained Deep Neural
               Networks},
  journal   = {CoRR},
  volume    = {abs/2101.12699},
  year      = {2021},
  url       = {https://arxiv.org/abs/2101.12699},
  eprinttype = {arXiv},
  eprint    = {2101.12699},
  timestamp = {Tue, 02 Feb 2021 09:52:17 +0100},
  biburl    = {https://dblp.org/rec/journals/corr/abs-2101-12699.bib},
  bibsource = {dblp computer science bibliography, https://dblp.org}
}

\appendix





\onecolumn
\section{Proofs of Section \ref{sec:disc_features}}\label{app:peel_proof} 

\subsection{Proof of Theorem \ref{thm:ode_peel}}\label{proof:ode_peel_weight}


\begin{proof}
We note by directly differentiating the given evolution equation,

\begin{equation}
        \begin{split}
         \w_q(t) &= e^{-\theta^2\M t}\left(\w_q(0)-\beta^2\M^{-1}\u_q\right)+\beta^2\M^{-1}\u_q\\
         \Longrightarrow  \dv{\w_q}{t} &= (-\theta^2 \M)e^{-\theta^2\M t}\left(\w_q(0)-\beta^2\M^{-1}\u_q\right)\\
         &= - \theta ^2 \M e^{-\theta^2\M t}\w_q(0) + \theta^2 \beta^2 e^{-\theta^2\M t} \u_q \\
         &=  - \theta ^2 \M ( \w_q + \beta^2 \M^{-1} e^{-\theta^2\M t}  \u_q - \beta^2 \M^{-1} \u_q) + + \theta^2 \beta^2 e^{-\theta^2\M t} \u_q \\
         & = - \theta ^2 \M \w_q + \theta^2 \beta^2 \u_q 
        \end{split}
\end{equation}
Thus, the statement in Theorem \ref{thm:ode_peel} follows. 
\end{proof}

\section{Proofs of Section \ref{sec:relu_ode}}\label{app:relu} 

\subsection{Proof of Lemma \ref{lem:up_srel_relu}}\label{lemma:relu_2} 

\begin{proof}
\begin{equation}
    \begin{split}
      &\abs{ \max \{0, \ip{\w^+(t)}{\x'} \} - \max \{ 0, \ip{\w(t)}{\x'} \} }\\
    = &\big \vert \max \{0, \ip{\w(t)}{\x} + \eta \beta \ind{\w_*^\top \x >0} \left [ \max \{0, \w_*^\top \x\} - \w(t)^\top \x \right ] \ip{\x}{\x'} \} \\
    &- \max \{ 0, \ip{\w(t)}{\x'} \} \big \vert \\
    \leq & \eta \beta \ind{\ip{\w_*}{\x} >0}\abs{\ip{\x}{\x'}} \abs{\max \{0, \w_*^\top \x\} - \w(t)^\top \x}\\
    \leq & \eta \beta \ind{\ip{\w_*}{\x} >0}\abs{\ip{\x}{\x'}} \cdot \\
    &\abs{\max \{0, \w_*^\top \x\} - \ip{e^{-\beta \M t }\w(0)}{\x} - \ip{\w_*}{\x} + \ip{e^{-\beta \M t}\w_*}{\x}  }\\
    \leq & \eta \beta \ind{\ip{\w_*}{\x} >0}\abs{\ip{\x}{\x'}} \abs{\ip{e^{-\beta \M t }(\w(0)-\w_*)}{\x}} 
    \end{split}
\end{equation}
\end{proof}

\subsection{Proof of Theorem \ref{lem:low_srel_relu}}\label{theorem:relu_last}

\begin{proof}
Starting similarly as in the above proof we see that,  
\begin{equation}\label{numerator_1}
    \begin{split}
      \ip{\w^+(t)}{\x'} &= \ip{\w(t)}{\x'} + \eta \beta \ind{\w_*^\top \x >0} \left [ \w_* - \w(t) \right ]^\top \x  \cdot \ip{\x}{\x'}\\
      &= \ip{\w(t)}{\x'} + \eta \beta \ind{\w_*^\top \x >0} \left [ e^{-\beta \M t} (\w_* - \w(0)) \right ]^\top \x  \cdot \ip{\x}{\x'}\\
      &= \ip{\w_*}{\x'} + \ip{e^{-\beta \M t}(\w(0) - \w_*)}{\x'}\\
      &+ \eta \beta \ind{\w_*^\top \x >0} \left [ e^{-\beta \M t}(\w_* - \w(0)) \right ]^\top \x  \cdot \ip{\x}{\x'}\\
    \end{split}
\end{equation} 

Further let us define $\alpha_1 ~\& ~\alpha_2$ as, 
\begin{equation}\label{numerator_2}
    \begin{split}
    \alpha_1 &\coloneqq \ip{\w(t)}{\x'} = \ip{e^{-\beta \M t}(\w(0) - \w_*)}{\x'} + \ip{\w_*}{\x'}\\
    \alpha_2 &\coloneqq \eta \beta \ind{\w_*^\top \x >0} \left [ e^{-\beta \M t}(\w_* - \w(0)) \right ]^\top \x  \cdot \ip{\x}{\x'}  
    \end{split}
\end{equation} 

Hence we have, $\abs{\max\{0, \ip{\w(t)^+}{\x'}\} - \max \{0, \ip{\w(t)}{\x'}\}} = \abs{\max \{0,\alpha_1 + \alpha_2 \}-\max \{0,\alpha_1\}}$

Now we invoke the assumption that $\ip{\w_*}{\x'} >0$
and the fact that anything weighted by $e^{-\beta \M t}$ is monotonically decreasing in time to $0$ in absolute value to realize that $\exists t_* >0$ s.t $\forall t > t_*$ we have $\alpha_1 >0 ~\& \alpha_1 + \alpha_2 >0$. Hence,

\begin{equation}\label{numerator}
    \begin{split}
    \forall t>t_*, &\abs{\max\{0, \ip{\w(t)^+}{\x'}\} - \max \{0, \ip{\w(t)}{\x'}\}}\\
    &= \abs{\alpha_2} = \eta \beta \abs{\ip{e^{-\beta \M t} (\w_* - \w(0)}{\x}}  \abs{\ip{\x}{\x'}} 
    \end{split}
\end{equation}     
    
We further invoke Lemma \ref{lem:up_srel_relu}  to get, \[ \abs{\max\{0, \ip{\w(t)^+}{\x}\} - \max \{0, \ip{\w(t)}{\x}\}} \leq \eta \beta \ind{\ip{\w_*}{\x} >0}\norm{\x}^2 \abs{\ip{e^{-\beta \M t }(\w(0)-\w_*)}{\x}} \] Combining this with equation \ref{numerator} we get the following lower bound for $S_{\rm rel}$, 

\begin{equation} 
    \begin{split}
     \frac{\eta \beta \abs{\ip{e^{-\beta \M t} (\w_* - \w(0)}{\x}}  \abs{\ip{\x}{\x'}} }{\eta \beta \ind{\ip{\w_*}{\x} >0}\norm{\x}^2 \abs{\ip{e^{-\beta \M t }(\w(0)-\w_*)}{\x}}} \leq S_{\rm rel}, \forall t > t_*
    \end{split}
\end{equation}     

Invoking the assumption that $\ip{\w_*}{\x} >0$ we can write the more succinct lower bound as stated in the lemma, 

\begin{equation} 
    \begin{split}
    \frac{  \abs{\ip{\x}{\x'}} }{\norm{\x}^2 } \leq S_{\rm rel}, \forall t > t_*,
    \end{split}
\end{equation}

\end{proof}
 
\section{Proofs of Section \ref{sec:d_hom}}\label{proof:d_hom} 

\subsection{Proof of Theorem \ref{thm:srel_cont_upper}}\label{proof: srel_cont_upper}

\begin{proof}
Simplifying equations \ref{def:srel_1} and \ref{def:srel_2} we get, 

\begin{equation}
\begin{split}
&\w_{r,t}^+(\x) - \w_{r,t} = - \eta \nabla_{\w_{r,t}} \ell_{\rm d,w,lin} (\W_t, (\x,y))\\
&= \eta \left (y - \alpha(\x) - \sum_{p=1}^w \ip{\bbeta_p(\x)}{\w_{p,t}^d}  \right ) \cdot d \cdot \left (\bbeta_r(\x) \odot \w_{r,t}^{d-1} \right)
\end{split}
\end{equation}

Hence, 

\begin{equation}
\begin{split}
&\w_{r,t}^{+d} - \w_{r,t}^{d}\\
=& \left (\w_{r,t} + \eta \left (y - \alpha(\x) - \sum_{p=1}^w \ip{\bbeta_p(\x)}{\w_{p,t}^d}  \right ) \cdot d \cdot \left (\bbeta_r(\x) \odot \w_{r,t}^{d-1} \right) \right )^d - \w_{r,t}^{d}\\
=& \sum_{k=1}^d \binom{d}{k} \left ( \eta \cdot d \right )^k \cdot \left (y - \alpha(\x) - \sum_{p=1}^w \ip{\bbeta_p(\x)}{\w_{p,t}^d}  \right )^k \cdot \w_{r,t}^{d-k} \odot \left (\bbeta_r(\x) \odot \w_{r,t}^{d-1} \right)^k
\end{split}
\end{equation}

From the Definition \ref{d_hom} it follows that, 

\begin{equation}
\begin{split}
&f_{\rm d,w,lin}(\W_t^+,\x') - f_{\rm d,w,lin}(\W_t,\x')\\
&=  \sum_{r=1}^w  \ip{\bbeta_r(\x')}{\w_{r,t}^{+d} - \w_{r,t}^{d}}\\
&= \sum_{r=1}^w \sum_{k=1}^d \binom{d}{k} \left ( \eta \cdot d \right )^k \cdot \left (y - \alpha(\x) - \sum_{p=1}^w \ip{\bbeta_p(\x)}{\w_{p,t}^d}  \right )^k \cdot \ip{\bbeta_r(\x') \odot \bbeta_r(\x)^k}{\w_{r,t}^{d(k+1) - 2k}}
\end{split}
\end{equation}

In the second equality above we have used the following fact, 

\begin{equation}
\begin{split}
\ip{\bbeta_r(\x')}{\w_{r,t}^{d-k} \odot \left (\bbeta_r(\x) \odot \w_{r,t}^{d-1} \right)^k} &= \sum_{i=1}^n \bbeta_{r,i}(\x') \cdot \w_{r,t,i}^{d-k} \cdot \bbeta_{r,i}(\x)^k \cdot \w_{r,t,i}^{k(d-1)}\\
&= \ip{\bbeta_r(\x') \odot \bbeta_r(\x)^k}{\w_{r,t}^{d(k+1)-2k}}
\end{split}
\end{equation}

Similar to the above we can read off, 

\begin{equation}
\begin{split}
&f_{\rm d,w,lin}(\W_t^+,\x) - f_{\rm d,w,lin}(\W_t,\x)\\
&= \sum_{r=1}^w \sum_{k=1}^d \binom{d}{k} \left ( \eta \cdot d \right )^k \cdot \left (y - \alpha(\x) - \sum_{p=1}^w \ip{\bbeta_p(\x)}{\w_{p,t}^d}  \right )^k \cdot \ip{\bbeta_r(\x)^{k+1}}{\w_{r,t}^{d(k+1) - 2k}}
\end{split}
\end{equation}

\paragraph{The limit of $\eta \rightarrow 0$} 
~\\ \\
Now we note that, 

\begin{equation}
\begin{split}
&\lim_{\eta \rightarrow 0} \frac{1}{\eta} \cdot \abs{f_{\rm d,w,lin}(\W_t^+,\x') - f_{\rm d,w,lin}(\W_t,\x')}\\
&= \abs{ \sum_{r=1}^w d^2 \cdot \left (y - \alpha(\x) - \sum_{p=1}^w \ip{\bbeta_p(\x)}{\w_{p,t}^d}  \right ) \cdot \ip{\bbeta_r(\x') \odot \bbeta_r(\x)}{\w_{r,t}^{2(d -1)}} }
\end{split}
\end{equation}

And similarly, 

\begin{equation}
\begin{split}
&\lim_{\eta \rightarrow 0} \frac{1}{\eta} \cdot \abs{f_{\rm d,w,lin}(\W_t^+,\x) - f_{\rm d,w,lin}(\W_t,\x)}\\
&= \abs{ \sum_{r=1}^w d^2 \cdot \left (y - \alpha(\x) - \sum_{p=1}^w \ip{\bbeta_p(\x)}{\w_{p,t}^d}  \right ) \cdot \ip{\bbeta_r(\x)^2}{\w_{r,t}^{2(d -1)}} }
\end{split}
\end{equation}

If $S_{\rm rel}$ is well defined then we can write, 

\begin{equation}\label{limit_S_rel}
\begin{split}
\lim_{\eta \rightarrow 0}  [S_{\rm rel}]_{\x',\x} &= \lim_{\eta \rightarrow 0} \frac{\frac{1}{\eta} \cdot \abs{f_{\rm d,w,lin}(\W_t^+,\x') - f_{\rm d,w,lin}(\W_t,\x')}}{\frac{1}{\eta} \cdot \abs{f_{\rm d,w,lin}(\W_t^+,\x) - f_{\rm d,w,lin}(\W_t,\x)}}\\
&= \frac{\abs{ \sum_{r=1}^w  d^2 \cdot \left (y - \alpha(\x) - \sum_{p=1}^w \ip{\bbeta_p(\x)}{\w_{p,t}^d}  \right ) \cdot \ip{\bbeta_r(\x') \odot \bbeta_r(\x)}{\w_{r,t}^{2(d - 1)}} }}{\abs{ \sum_{r=1}^w   d^2 \cdot \left (y - \alpha(\x) - \sum_{p=1}^w \ip{\bbeta_p(\x)}{\w_{p,t}^d}  \right ) \cdot \ip{\bbeta_r(\x) \odot \bbeta_r(\x)}{\w_{r,t}^{2(d - 1)}} }}\\
&= \frac{\abs{ \sum_{r=1}^w  \ip{\bbeta_r(\x') \odot \bbeta_r(\x)}{\w_{r,t}^{2(d - 1)}} }}{\abs{\sum_{r=1}^w   \ip{ \bbeta_r(\x)^2}{\w_{r,t}^{2(d - 1)}} }}
\end{split}
\end{equation}


Hence from equation \ref{limit_S_rel} we have the expression that we set out to prove, 

\begin{equation}
\begin{split}
&\lim_{\eta \rightarrow 0}  [S_{\rm rel}]_{\x',\x} \leq \sum_{r=1}^w \norm{\bbeta_r(\x') \odot \bbeta_r(\x)} \cdot \frac{\norm{\w_{r,t}^{2(d - 1)}} }{\abs{\sum_{r=1}^w   \ip{ \bbeta_r(\x)^2}{\w_{r,t}^{2(d - 1)}} }}\\
\implies &\lim_{\eta \rightarrow 0}  [S_{\rm rel}]_{\x',\x} \leq \left ( \max_{r=1,\dots,w} \norm{\bbeta_r(\x') \odot \bbeta_r(\x)} \right ) \cdot  \sum_{r=1}^w \frac{\norm{\w_{r,t}^{2(d - 1)}} }{\abs{\sum_{r=1}^w   \ip{ \bbeta_r(\x)^2}{\w_{r,t}^{2(d - 1)}} }}
\end{split}
\end{equation}
\end{proof}

\subsection{Proof of Theorem \ref{thm:exact_S_rel}}\label{proof:exact_S_rel}

\begin{proof}
Upon invoking the assumptions $w =n ~\& ~\W = \diag(\w)$ {\rm for some } $\w \in \R^n$ and denoting the $q^{th}$ coordinate of $\w$ to be $w_q$, equation \ref{ode} will reduce to, 

\begin{equation}
\begin{split}
&\forall q \in \{1,\ldots,n\}\\
\frac{1}{\theta d} \cdot \dv{w_q(t)}{t} &= \E_{(\x,y)} \left [ \left ( y - \alpha(\x) - \sum_{p=1}^n \bbeta_{p,p}(\x) \cdot w_{p}(t)^d  \right ) \cdot \left (\bbeta_{q,q}(\x) \cdot w_{q}(t)^{d-1} \right) \right ]\\
\end{split}
\end{equation} 

We recall the definition of $a_q$ and we define a new set of distribution dependent constants $b_{p,q}$ as, 

\[ \forall q \in \{1,\ldots,n\} \]
\[  a_q =  \E_{(\x,y)} \left [ \left (y - \alpha(\x) \right )\bbeta_{q,q} (\x) \right ]  ~\& \left \{ b_{q,p} \coloneqq \E_{(\x,y)} \left [ \bbeta_{q,q}(\x)\bbeta_{p,p}(\x)  \right ] \mid p = 1,\ldots, n \right \}  \]

Then the ODE system can be re-written as $\forall q \in \{1,\ldots,n\}$, 

\[ \frac{1}{\theta d} \cdot \dv{w_q(t)}{t} = a_q \cdot w_{q}(t)^{d-1} - \left ( \sum_{p=1}^n b_{p,q} \cdot w_{p}(t)^d \right )\cdot w_q(t)^{d-1} \]

which can be rearranged to, 

\begin{equation}
\begin{split}
\frac{\dd w_q(t)}{a_q \cdot w_{q}(t)^{d-1} - \left ( \sum_{p=1}^n b_{p,q} \cdot w_{p}(t)^d \right )\cdot w_q(t)^{d-1}} &= \theta \cdot d \cdot  \dd{t} 
\end{split}
\end{equation}

\paragraph{Our assumption can be written as : $b_{p,q} = b_q \delta_{p,q}$ for some constants $b_q, q = 1,\ldots,n$}
~\\ 
Then the above ODE reduces to, 

\begin{equation}\label{ode_diag}
\begin{split}
\forall q &\in \{1,\ldots,n\}\\
\implies \frac{\dd w_q(t)}{a_q \cdot w_{q}(t)^{d-1} - b_q \cdot w_q(t)^{2d-1} } &= \theta \cdot d \cdot  \dd{t}
\end{split}
\end{equation}

\paragraph{Now we invoke the assumption that : $d=2$ and $a_q >0 ~\& ~w_q(0)^2 \in (0, \frac{a_q}{b_q} ) \forall ~q \in \{1,\ldots,n\}$}

Then the ODEs can be exactly integrated as follows,

\begin{equation}
\begin{split}
\forall q &\in \{1,\ldots,n\}\\
\implies \int_{w_q(0)}^{w_q(t)} \frac{\dd w_q(t)}{a_q \cdot w_{q}(t) - b_q \cdot w_q(t)^{3} } &= 2\theta  t\\
\implies \frac{1}{2 a_q} \cdot \left [ \log \frac{w_q(t)^2}{a_q - b_q w_q(t)^2} - \log \frac{w_q(0)^2}{a_q - b_q w_q(0)^2} \right ] &=  2\theta t\\
\implies w_q^2(t) = \frac{a_q}{b_q + \left \{ -b_q + \frac{a_q}{w_q^2(0)} \right \}e^{-4 \theta a_q t}}
\end{split}
\end{equation}

Note that in the intermediate steps above we need $0 < w_q(t)^2, w_q(0)^2 < \frac{a_q}{b_q}$ for the logarithms to be well defined. And $w_q(t)^2 < \frac{a_q}{b_q}$ is ensured for the solution if we have $a_q >0 ~\&  -b_q + \frac{a_q}{w_q^2(0)} >0 \Leftrightarrow w_q(0)^2 < \frac{a_q}{b_q}$ and both of these are ensured by the assumptions.    

Now for this case of $\W = \diag(\w) \in \R^{n \times n} ~\& ~d=2$, equation \ref{limit_S_rel} along with the ODE solution given above yields the following expression as claimed in the theorem statement, 

\begin{equation}\label{exact_s_rel}
\begin{split}
\lim_{\eta \rightarrow 0}  [S_{\rm rel}]_{\x',\x} = \frac{\abs{ \sum_{r=1}^n  \bbeta_{r,r}(\x') \bbeta_{r,r}(\x) \cdot w_{r,t}^{2}} }{\abs{\sum_{r=1}^n   \bbeta_{r,r}(\x)^2 \cdot w_{r,t}^{2} }} = \frac{\abs{ \sum_{r=1}^n  \frac{a_r \bbeta_{r,r}(\x') \bbeta_{r,r}(\x)}{b_r + \left \{ -b_r + \frac{a_r}{w_r^2(0)} \right \}e^{-4 \theta a_r t}}  } }{\abs{\sum_{r=1}^n  \frac{a_r \bbeta_{r,r}(\x)^2}{b_r + \left \{ -b_r + \frac{a_r}{w_r^2(0)} \right \}e^{-4 \theta a_r t}}  }}\\
\end{split}
\end{equation}
\end{proof}

\subsection{Proof of Theorem \ref{thm:2_2_diag}}\label{proof:dhom:main}
\begin{proof}
Define, 
\[ k_1 \coloneqq \frac{\beta_{1,1}(\x') \beta_{1,1}(\x)}{\beta_{1,1}(\x') \beta_{1,1}(\x) + \beta_{2,2}(\x') \beta_{2,2}(\x)} \] 
\[ k_2 \coloneqq \frac{\beta_{1,1}(\x)^2}{\beta_{1,1}(\x)^2 + \beta_{2,2}(\x')^2}  \]

Using the assumptions given in Theorem \ref{thm:2_2_diag} we can simplify equation \ref{exact_s_rel} in this case to get, 

\begin{equation}\label{d_hom_s_rel_exact_22}
\begin{split}
\lim_{\eta \rightarrow 0}  [S_{\rm rel}]_{\x',\x} = \frac{\abs{\beta_{1,1}(\x') \beta_{1,1}(\x) + \beta_{2,2}(\x') \beta_{2,2}(\x)}}{\abs{\beta_{1,1}(\x)^2 + \beta_{2,2}(\x')^2}} \cdot \frac{\abs{\alpha^2 + \beta^2 \left ( k_1 e^{-4 a_2 \theta  t} + (1-k_1)e^{-4 a_1 \theta  t} \right ) }}{\abs{\alpha^2 + \beta^2 \left ( k_2 e^{-4 a_2 \theta  t} + (1-k_2)e^{-4 a_1 \theta  t} \right ) }}
\end{split}
\end{equation} 

Note that $k_2 \geq 0$ and hence $\beta^2 k_2$ can be mapped to the parameter $b_2$ in Lemma \ref{bound}. Also either $k_1$ or $1-k_1$ is always non-negative and hence the non-negative one of the two multiplied with $\beta^2$ maps to the parameter $b_1$ in Lemma \ref{bound}. Since $a_1,a_2 > 0$ and $\theta >0$, the parameters $p$ and $q$ in Lemma \ref{bound} get identified. Now its easy to see that we can invoke the lemma to realize that $\exists t_1^*,t_2^* >0$ s.t for all $t \in [\max\{t_1^*,t_2^*\},\infty)$ we have, 

\begin{equation}
\begin{split}
0 < \frac{\abs{\beta_{1,1}(\x') \beta_{1,1}(\x) + \beta_{2,2}(\x') \beta_{2,2}(\x)}}{\abs{\beta_{1,1}(\x)^2 + \beta_{2,2}(\x')^2}} \cdot \frac{\alpha^2  - \abs{b_1}e^{-q^2t_2^*}}{\alpha^2 + \abs{b_2}e^{-p^2t} + \beta^2e^{-q^2t}} \leq \lim_{\eta \rightarrow 0}  [S_{\rm rel}]_{\x',\x} 
\end{split}
\end{equation}

In above we have used the assumption that the lower bound coming from Lemma \ref{bound} is strictly above $0$ and also the assumption about the first factor on the RHS of equation \ref{d_hom_s_rel_exact_22}  being strictly above $0$.  Hence for gradient flow on squared loss on the predictor class given in equation \ref{exact_simplest}, $\lim_{\eta \rightarrow 0}  [S_{\rm rel}]_{\x',\x}$ is s.t it can't decrease below a certain constant for all times beyond a certain initial time  interval. 
\end{proof}

\begin{lemma}\label{bound}
Let, 
\[ \funcf (t) \coloneqq \frac{\abs{\alpha^2 + b_1e^{-p^2t} + c_1e^{-q^2t}}}{\abs{\alpha^2 + b_2e^{-p^2t} + c_2e^{-q^2t}}}  \] 

Let $t_1^* >0$ be s.t $\forall t > t_1^*$ we have, $0 < \alpha^2 + b_2e^{-p^2t} + c_2e^{-q^2t}$

Let $t_2^* >0$ be s.t we have, $0 < \alpha^2  - b_1e^{-q^2t_2^*}$

Suppose $b_1,b_2 \geq 0$ and $\exists ~\beta \in \R \text{ s.t } b_1 +c_1 = b_2 + c_2 = \beta^2$

Then we have for all $t \in [\max\{t_1^*,t_2^*\},\infty)$, 
\[ \frac{\alpha^2  - \abs{b_1}e^{-q^2t_2^*}}{\alpha^2 + \abs{b_2}e^{-p^2t} + \beta^2e^{-q^2t}} \leq \funcf (t)  \] 
\end{lemma}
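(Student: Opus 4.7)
The plan is to prove the inequality by separately lower-bounding the absolute value of the numerator of $\funcf(t)$ and upper-bounding the absolute value of the denominator, with the two bounds valid respectively for $t \geq t_2^*$ and $t \geq t_1^*$. Write $N(t) \coloneqq \alpha^2 + b_1 e^{-p^2 t} + c_1 e^{-q^2 t}$ and $D(t) \coloneqq \alpha^2 + b_2 e^{-p^2 t} + c_2 e^{-q^2 t}$. The constraint $b_i + c_i = \beta^2$ lets us eliminate $c_i$ in favor of $b_i$ and $\beta^2$, after which the sign hypothesis $b_1,b_2 \ge 0$ (so $|b_i|=b_i$) lets the $b_i$-terms be controlled in a single monotone direction.

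First I would handle the denominator. For $t \geq t_1^*$ the hypothesis gives $D(t)>0$, hence $|D(t)|=D(t)$. Substituting $c_2 = \beta^2 - b_2$, I would rewrite
\[ D(t) \;=\; \alpha^2 + b_2 e^{-p^2 t} + \beta^2 e^{-q^2 t} \;-\; b_2 e^{-q^2 t}, \]
and, since $b_2 \geq 0$, the last term is non-positive, yielding $|D(t)| \leq \alpha^2 + |b_2| e^{-p^2 t} + \beta^2 e^{-q^2 t}$ for all $t \geq t_1^*$. This step is a one-line manipulation and should go through without trouble.

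Next I would lower-bound $N(t)$. Since $b_1 \geq 0$, the term $b_1 e^{-p^2 t}$ is non-negative, so $N(t) \geq \alpha^2 + c_1 e^{-q^2 t}$. The subtle point is that $c_1 = \beta^2 - b_1$ may be negative; this is the main obstacle to obtaining a clean bound. I would split on the sign of $c_1$: if $c_1 \geq 0$, then $N(t) \geq \alpha^2$, which trivially dominates $\alpha^2 - |b_1| e^{-q^2 t_2^*}$. If $c_1 < 0$, then $b_1 > \beta^2 \geq 0$ and $|c_1| = b_1 - \beta^2 \leq b_1 = |b_1|$, so $c_1 e^{-q^2 t} \geq -|b_1| e^{-q^2 t}$. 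Using monotonicity of $e^{-q^2 t}$ in $t$, for $t \geq t_2^*$ we get $N(t) \geq \alpha^2 - |b_1| e^{-q^2 t_2^*}$. The second hypothesis on $t_2^*$ ensures this quantity is strictly positive, hence $|N(t)| = N(t)$ automatically satisfies the same bound.

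Finally, for $t \geq \max\{t_1^*, t_2^*\}$, combining the numerator lower bound with the denominator upper bound yields
\[
\funcf(t) \;=\; \frac{|N(t)|}{|D(t)|} \;\geq\; \frac{\alpha^2 - |b_1| e^{-q^2 t_2^*}}{\alpha^2 + |b_2| e^{-p^2 t} + \beta^2 e^{-q^2 t}},
\]
which is the required inequality. The only delicate point is the case analysis on $\mathrm{sign}(c_1)$ needed to express the lower bound in terms of $|b_1|$ rather than $|c_1|$; everything else is direct substitution using $c_i = \beta^2 - b_i$ and the monotonicity of the exponentials.
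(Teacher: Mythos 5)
Your proposal is correct and follows essentially the same route as the paper: eliminate $c_i$ via $c_i=\beta^2-b_i$, upper-bound the denominator by $\alpha^2+\abs{b_2}e^{-p^2t}+\beta^2e^{-q^2t}$ for $t\geq t_1^*$, and lower-bound the numerator by $\alpha^2-\abs{b_1}e^{-q^2t_2^*}$ for $t\geq t_2^*$ using monotonicity of $e^{-q^2t}$. The only cosmetic difference is your explicit case split on the sign of $c_1$, which the paper sidesteps by writing the numerator as $\alpha^2+\abs{b_1}\left(e^{-p^2t}-e^{-q^2t}\right)+\beta^2e^{-q^2t}$ and bounding it in one chain.
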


{\remark Note that because $\alpha^2 >0$ the above $t_1^* ~\&  ~t_2^*$ always exists.} 


\begin{proof}

So we have for all $t \in [t_1^*,\infty)$,  
\begin{equation}
\begin{split}
0 < \alpha^2 + b_2e^{-p^2t} + c_2e^{-q^2t} = \alpha^2 + \abs{b_2}e^{-p^2t} + (\beta^2 - \abs{b_2})e^{-q^2t} < \alpha^2 + \abs{b_2}e^{-p^2t} + \beta^2e^{-q^2t}
\end{split}
\end{equation} 

Similarly we have for all $t \in [t_2^*,\infty)$, 

\[ \alpha^2 + b_1e^{-p^2t} + c_1e^{-q^2t} = \alpha^2 + \abs{b_1}(e^{-p^2t} - e^{-q^2t}) + \beta^2e^{-q^2t} \geq (\alpha^2  - \abs{b_1}e^{-q^2t_2^*}) > 0\] 

Hence combining the above for all $t \in [\max\{t_1^*,t_2^*\},\infty)$ we have the required inequality. 
\end{proof}

\section{Additional Experiments}\label{app:exp} 
\subsection{A Comparative Study on CIFAR-100 of $\srel$ Definitions \ref{def:srel} and \ref{def:srel_ce_k}} 
\label{exp:cifar100resnet} 

CIFAR-100 is a labeled subset of the 80 million tiny-images dataset. It consists of 100 classes and 20 super-classes. Thus, each image has a coarse label and a fine label and in this paper, we only use the latter. There are 500 training images and 100 testing images per class. We repeat on this dataset the same comparative study of the new and the original definition of $\srel$, as was done in Section \ref{sec:svhn}. Some further implementation details are as follows, 


\begin{table}[tbh]
  \begin{center}
    \begin{tabular}{|l|r|}
   \hline     
      Parameter & Value  \\
   \hline
   Depth of  ResNet &  18 (no batch norm)\\
   The map that the $\net$ implements & $\R^{3 \times 32 \times 32} \to \R^{10}$\\ 
   \hline 
      Optimizer & ADAM \\
      Learning rate & $1\cdot 10^{-4}$\\
      Mini-batch size & 25\\
      Dropout or any other regularization & Not Used\\
      \hline
      Top $3$ classes for CIFAR-100 & motorcycle, orange, wardrobe \\
  \hline
  \end{tabular}
  \end{center}
  \label{table_cifar1003_no_bn}
\end{table}

On the left column of Figure \ref{fig:srel_cifar100} we see the time evolution of $\srel^{\rm k, smooth}$ for all the $9$ possible class pairs corresponding to images from the classes, $\lbrace {\rm  motorcycle, orange, wardrobe} \rbrace$ -- which were the $3$ classes of CIFAR-100 where ResNet-18 had the best accuracy. And the in the right column of Figure \ref{fig:srel_cifar100} we see the same for the original definition of $\srel$.

The plots show the mean $\srel$ (for both the definitions) value averaged over $5$ runs of the stochastic training algorithm as well as the standard deviation. Again, we observe that for all pairs, the current proposal results in a larger value when computed intra-class than when computed inter-class  - and it remains so at almost all times. As seen in the experiments in Section \ref{sec:svhn}, the original definition fails to pick up any such separation. 






\begin{figure}[h]
    \centering 
\begin{subfigure}{0.50\textwidth}
  \includegraphics[width=1\textwidth]{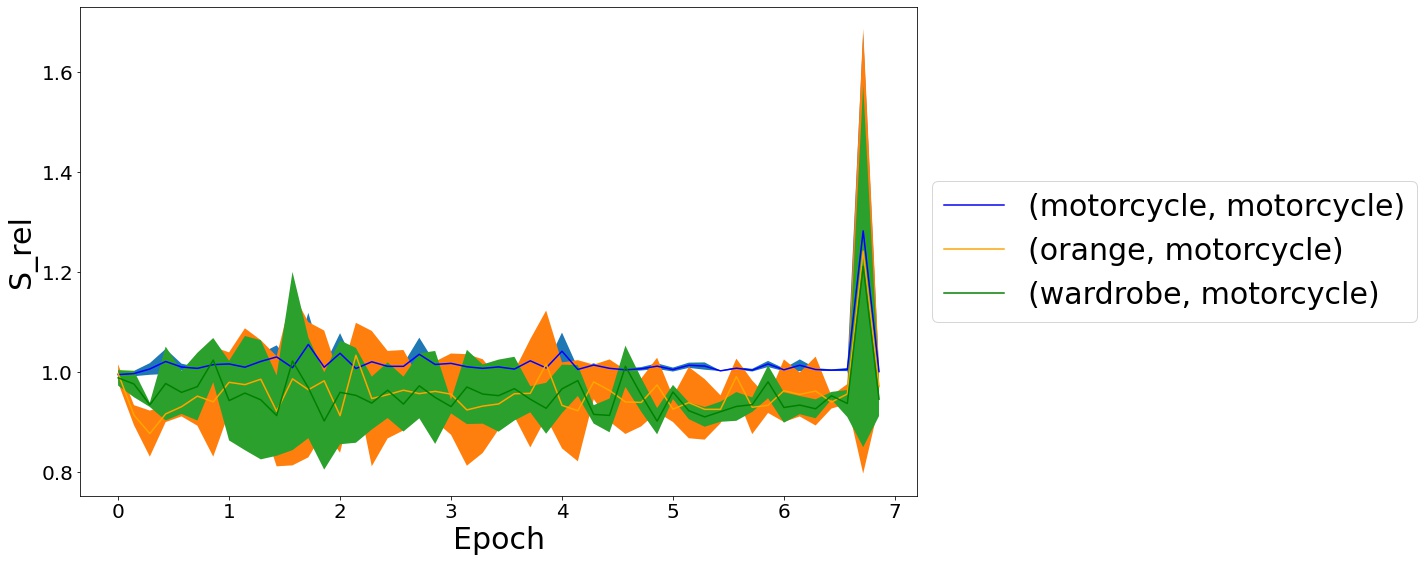}
  \caption{}
  \label{fig:13a}
\end{subfigure}\hfil 
\begin{subfigure}{0.50\textwidth}
  \includegraphics[width=1\textwidth]{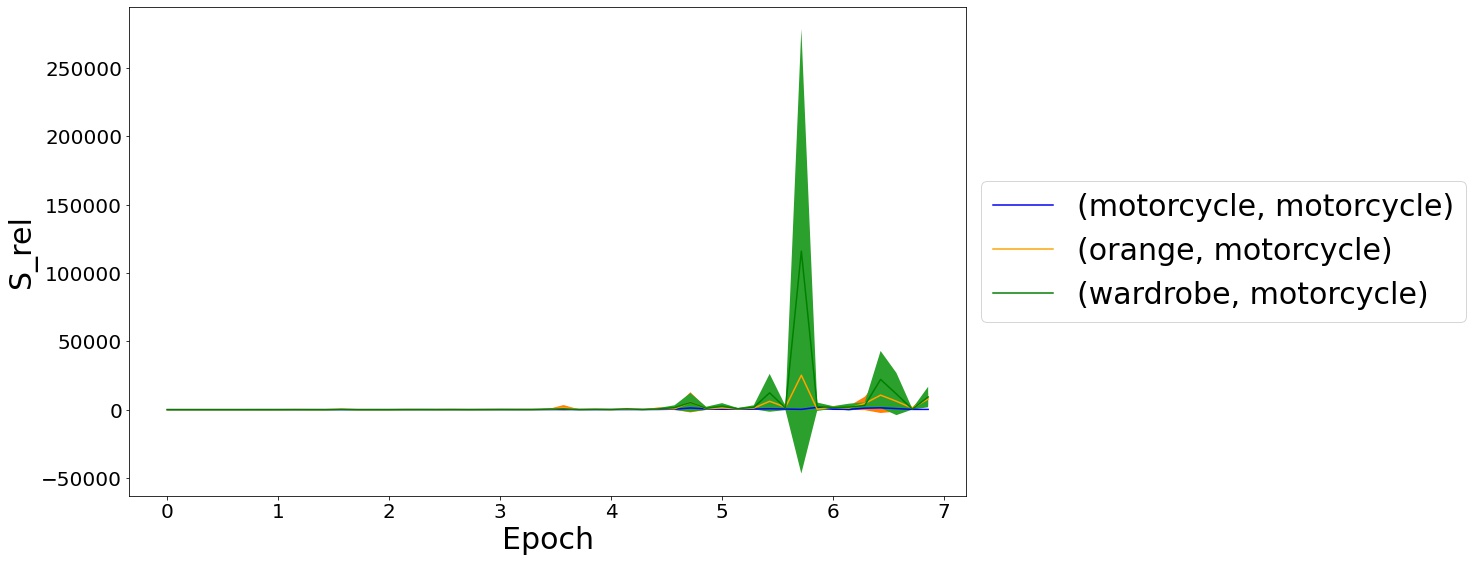}
  \caption{}
  \label{fig:13b}
\end{subfigure}
\medskip
\begin{subfigure}{0.50\textwidth}
  \includegraphics[width=\linewidth]{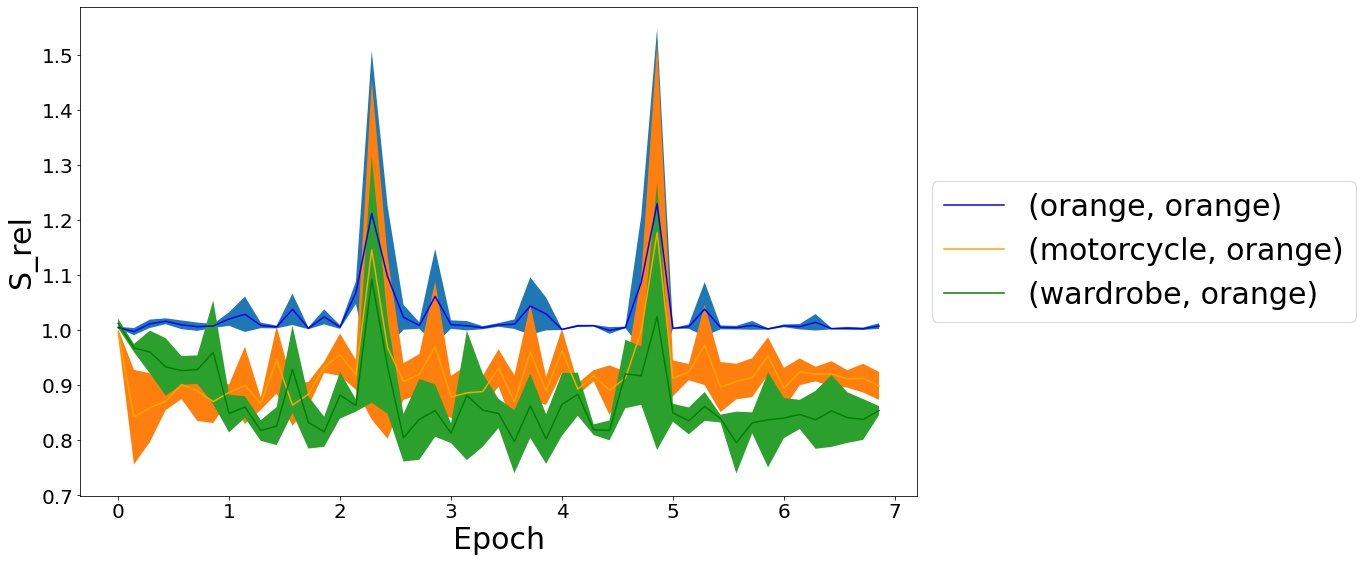}
  \caption{}
  \label{fig:13c}
\end{subfigure}\hfil 
\begin{subfigure}{0.50\textwidth}
  \includegraphics[width=\linewidth]{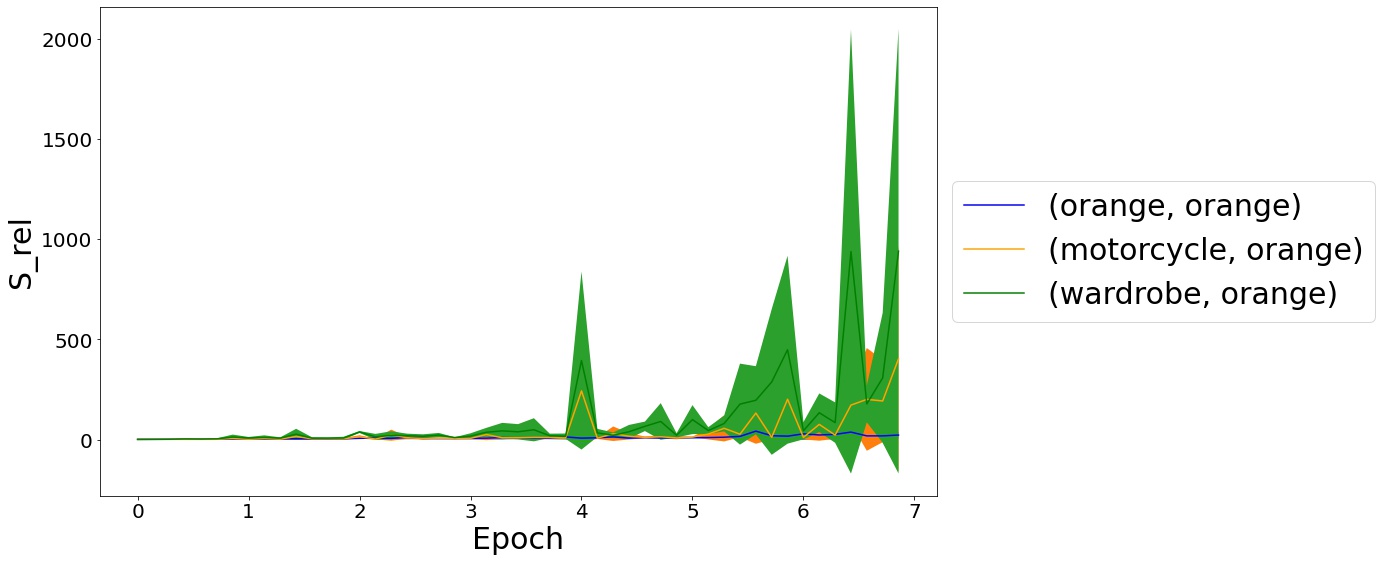}
  \caption{}
  \label{fig:13d}
\end{subfigure}
\medskip
\begin{subfigure}{0.50\textwidth}
  \includegraphics[width=\linewidth]{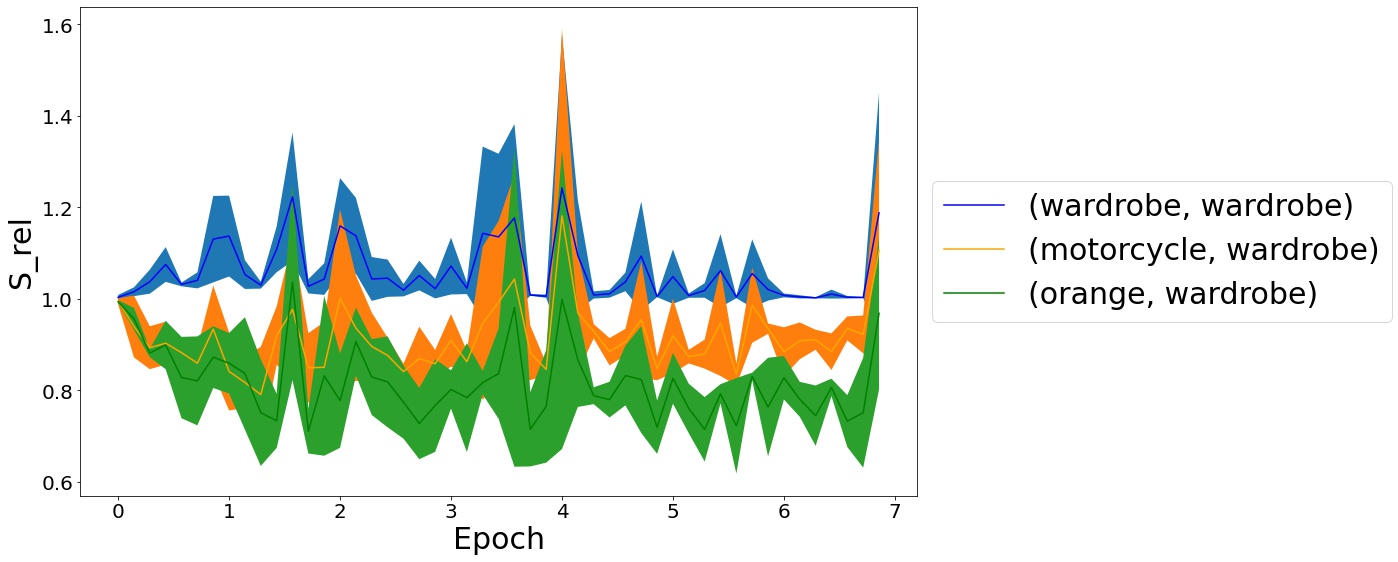}
  \caption{}
  \label{fig:13e}
\end{subfigure}\hfil 
\begin{subfigure}{0.50\textwidth}
  \includegraphics[width=\linewidth]{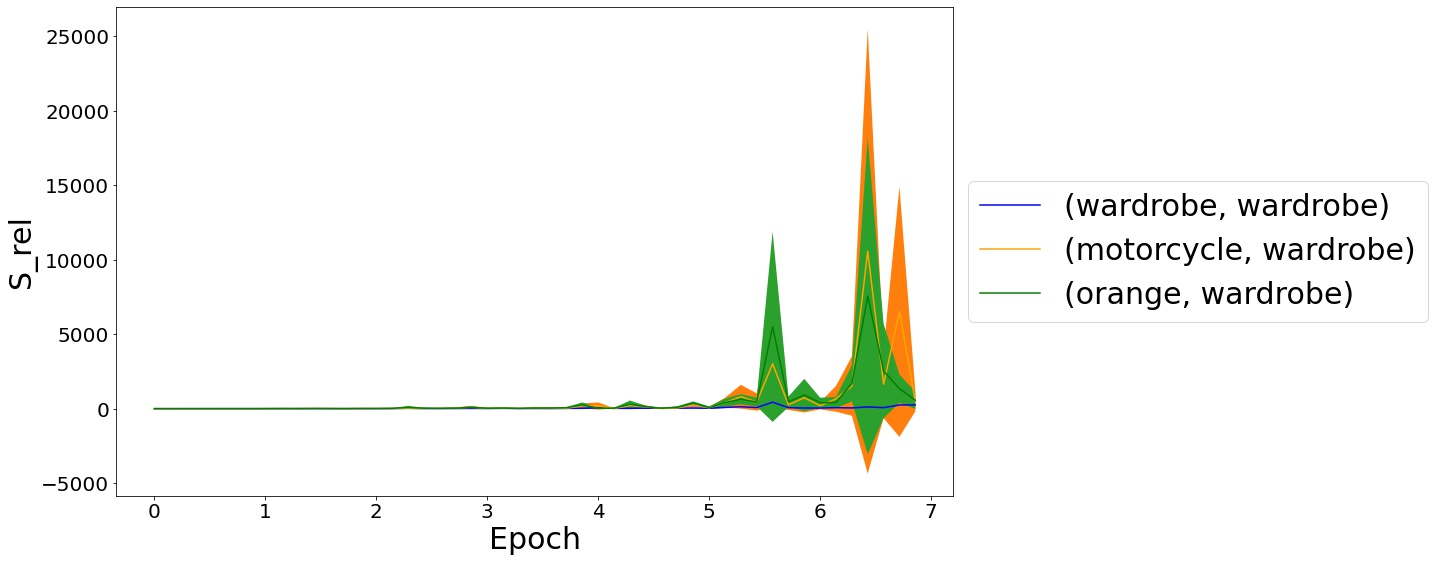}
  \caption{}
  \label{fig:13f}
\end{subfigure}
\caption{Time evolution of $S_{\rm rel}^{\rm k, smooth}$ (Definition \ref{def:srel_ce_k}) in figures (a), (c) and (e) and $S_{\rm rel}$ (Definition \ref{def:srel}) - averaged over data points, in figures (b), (d) and (f) for the two image classes being chosen from $\lbrace\text{motorcycle, orange, wardrobe} \rbrace$ with {\rm Class-$\x$} being \emph{motorcycle} in (a) and (b), \emph{orange} in (c) and (d), and \emph{wardrobe} in (e) and (f). Figures \ref{fig:loss_acc_dynamics_cifar100} shows the loss and accuracy dynamics for the experiment.}
\label{fig:srel_cifar100}
\end{figure}

\subsection{Other Experimental Details}
The experiments were run on the Colab GPU and an NVIDIA 11 GB GPU. The training time for each run (7 epochs) is less than an hour and the ResNet-18 (without batch norm) achieves $87, 78, 94 \%$ accuracies respectively on SVHN, CIFAR-10,100.

\begin{figure}[htbp]
    \centering 
\begin{subfigure}{0.50\textwidth}
  \includegraphics[width=1\textwidth]{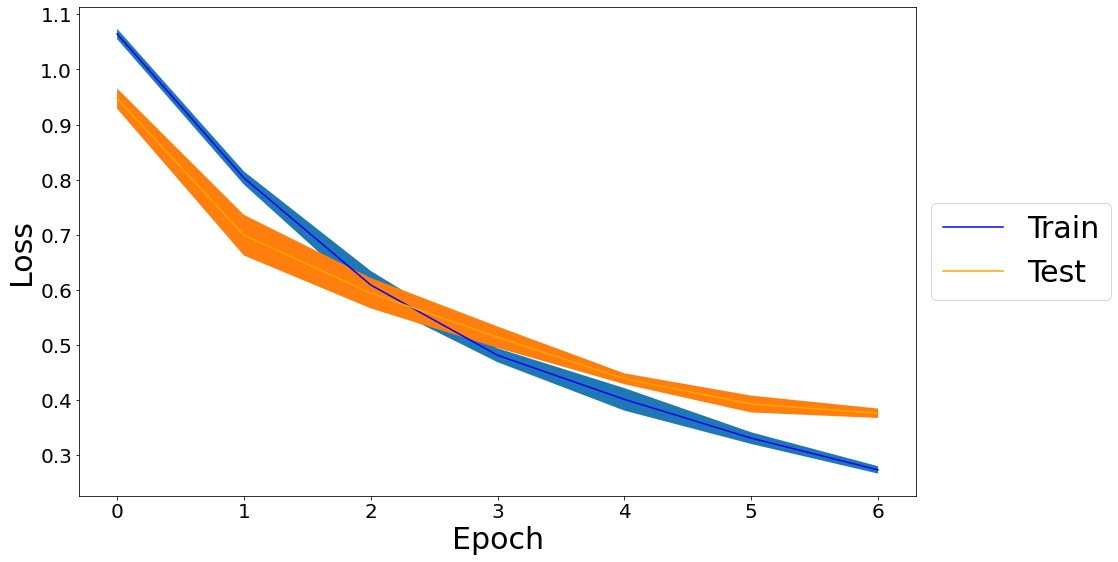}
  \caption{}
  \label{fig:14a}
\end{subfigure}\hfil 
\begin{subfigure}{0.50\textwidth}
  \includegraphics[width=1\textwidth]{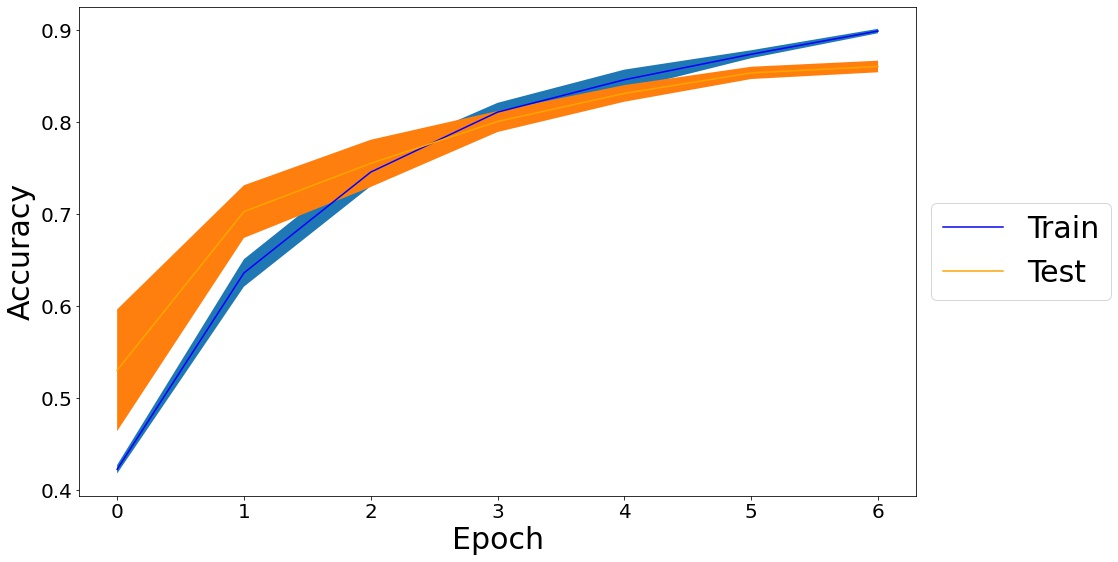}
  \caption{}
  \label{fig:14b}
\end{subfigure}
\caption{Dynamics of loss (figure (a)) and accuracy (figure (b)) during training on SVHN}
\label{fig:loss_acc_dynamics_svhn}
\end{figure}

\begin{figure}[htbp]
    \centering 
\begin{subfigure}{0.50\textwidth}
  \includegraphics[width=\linewidth]{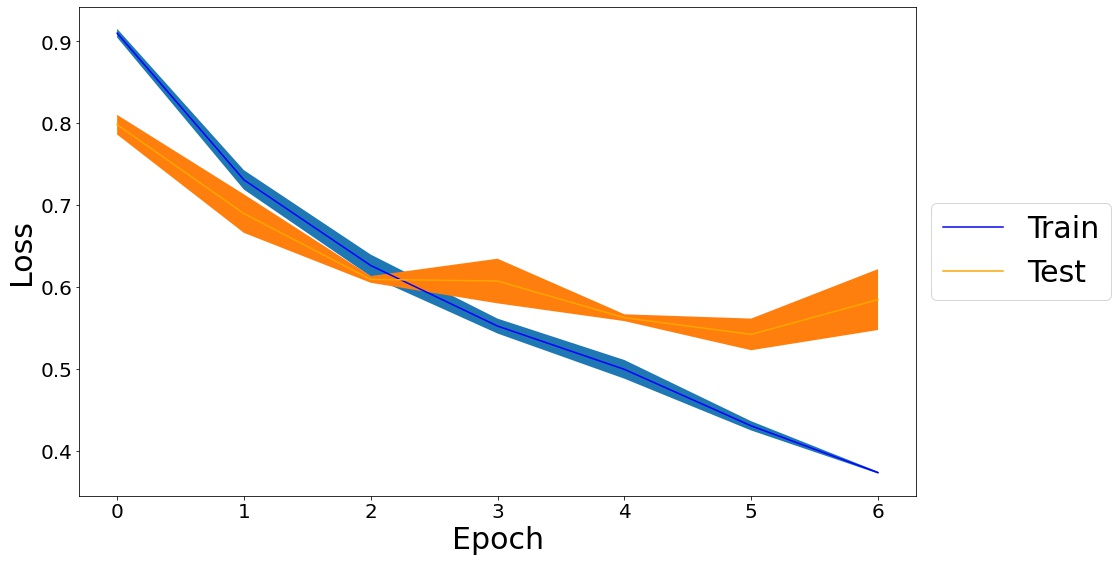}
  \caption{}
  \label{fig:15a}
\end{subfigure}\hfil 
\begin{subfigure}{0.50\textwidth}
  \includegraphics[width=\linewidth]{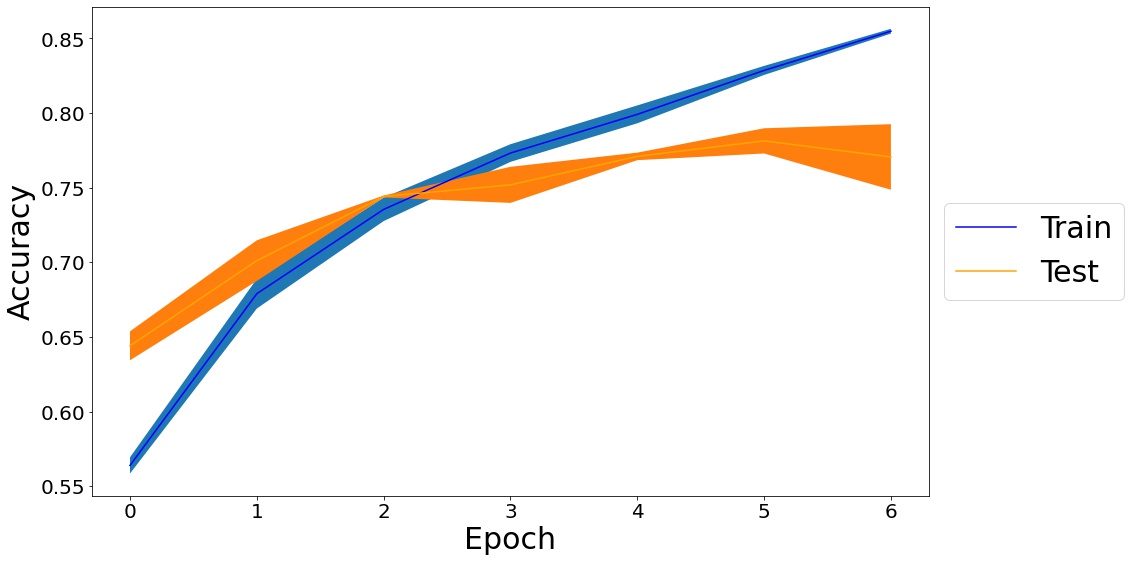}
  \caption{}
  \label{fig:15b}
\end{subfigure}
\caption{Dynamics of loss (figure (a)) and accuracy (figure (b)) during training on CIFAR-10}
\label{fig:loss_acc_dynamics_cifar10}
\end{figure}

\begin{figure}[htbp]
    \centering 
\begin{subfigure}{0.50\textwidth}
  \includegraphics[width=\linewidth]{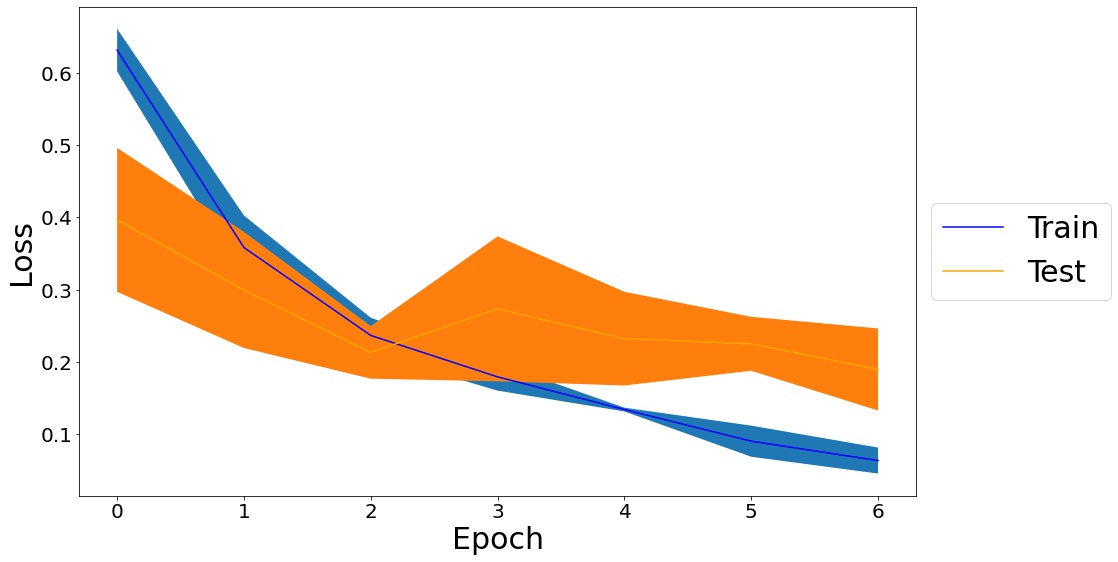}
  \caption{}
  \label{fig:16a}
\end{subfigure}\hfil 
\begin{subfigure}{0.50\textwidth}
  \includegraphics[width=\linewidth]{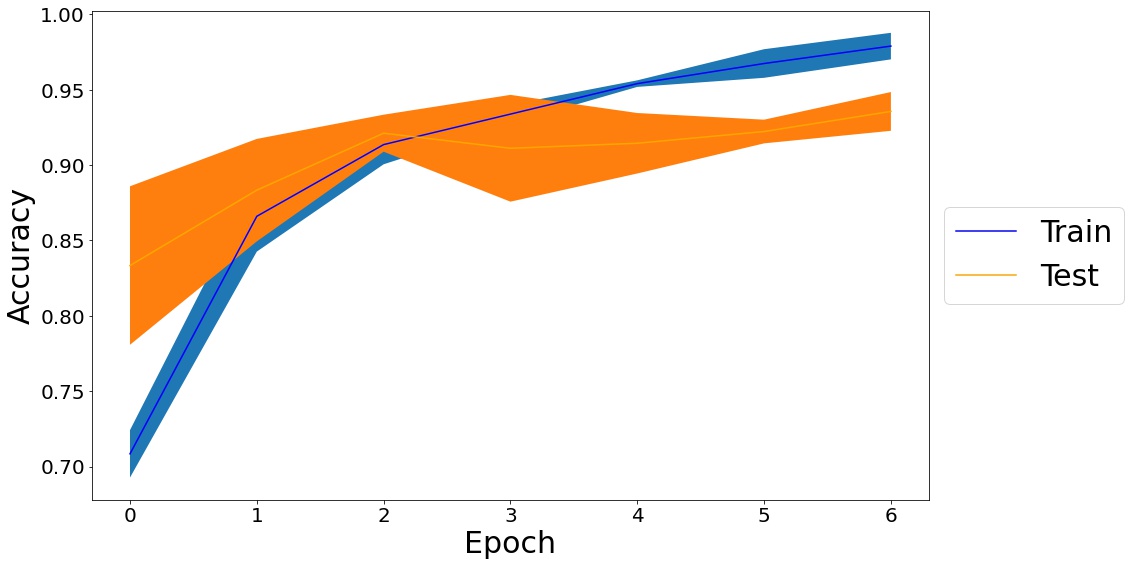}
  \caption{}
  \label{fig:16b}
\end{subfigure}
\caption{Dynamics of loss (figure (a)) and accuracy (figure (b)) during training on CIFAR-100}
\label{fig:loss_acc_dynamics_cifar100}
\end{figure}

\clearpage

\end{document}